\newif\ifarxiv\arxivtrue

\newif\ifnotarxiv
\ifarxiv \notarxivfalse \else \notarxivtrue \fi

\ifarxiv
    \newcommand{\arxiv}[1]{#1}
    \newcommand{\notarxiv}[1]{}
\else
    \newcommand{\arxiv}[1]{}
    \newcommand{\notarxiv}[1]{#1}
\fi

\ifarxiv
    \documentclass[11pt]{article}
    
    \usepackage[authoryear]{natbib}
    \usepackage[margin=1in]{geometry}
    \usepackage[table]{xcolor}
    \usepackage{color}
    \definecolor{cornellred}{rgb}{0.7, 0.11, 0.11}
    \definecolor{dgreen}{rgb}{0.0, 0.5, 0.0}
    \definecolor{ballblue}{rgb}{0.13, 0.67, 0.8}
    \definecolor{royalblue(web)}{rgb}{0.25, 0.41, 0.88}
    \definecolor{bleudefrance}{rgb}{0.19, 0.55, 0.91}
    \definecolor{royalazure}{rgb}{0.0, 0.22, 0.66}
    \usepackage{graphicx} 
    \usepackage[breaklinks]{hyperref}
    \hypersetup{
        colorlinks = true,
        linkcolor = cornellred,
        citecolor = royalazure,
        linkbordercolor = white,
        urlcolor  = cornellred
    }
    \usepackage{amsmath, amsthm, amssymb}
    \usepackage{etoolbox}
    \usepackage[framemethod=tikz]{mdframed}
    \usepackage{verbatim}
    \usepackage{nicefrac}
    \usepackage{paralist}
    \usepackage{graphicx,subcaption}
    \usepackage{hyperref}
    \usepackage{url}
    \usepackage{listings}

    \title{\center{Optimal Sample Complexity of Contrastive Learning}}

    \author{
        Noga Alon\thanks{Partially supported by NSF grant DMS-2154082}\\{\tt nalon@math.princeton.edu}\\Princeton University, New Jersey
        \and        
        Dmitrii Avdiukhin\\{\tt dmitrii.avdiukhin@northwestern.edu}\\Northwestern University, Illinois
        \and        
        Dor Elboim\\{\tt dorelboim@gmail.com}\\Institute for Advanced Study, New Jersey
        \and        
        Orr Fischer\thanks{Partially supported by the European Research Council (ERC) under the European Union’s Horizon 2020 research and innovation programme (grant agreement No. 949083)}\\{\tt orr.fischer@weizmann.ac.il}\\Weizmann Institute of Science, Israel
        \and
        Grigory Yaroslavtsev\\{\tt grigory@gmu.edu}\\George Mason University, Virginia
    }
    \date{}
    
    \sloppy
\else
    \usepackage{amsmath, amsthm, amssymb}
    \usepackage{etoolbox}
    \usepackage[framemethod=tikz]{mdframed}
    \usepackage{verbatim}
    \usepackage{nicefrac}
    \usepackage{paralist}
    \usepackage{graphicx,subcaption}
    \usepackage{hyperref}
    \usepackage{url}
        
    \usepackage{iclr2024_conference,times}

    \title{\center{Optimal Sample Complexity of \\ Contrastive Learning}}
    
    \author{Antiquus S.~Hippocampus, Natalia Cerebro \& Amelie P. Amygdale \thanks{ Use footnote for providing further information
    about author (webpage, alternative address)---\emph{not} for acknowledging
    funding agencies.  Funding acknowledgements go at the end of the paper.} \\
    Department of Computer Science\\
    Cranberry-Lemon University\\
    Pittsburgh, PA 15213, USA \\
    \texttt{\{hippo,brain,jen\}@cs.cranberry-lemon.edu} \\
    \And
    Ji Q. Ren \& Yevgeny LeNet \\
    Department of Computational Neuroscience \\
    University of the Witwatersrand \\
    Joburg, South Africa \\
    \texttt{\{robot,net\}@wits.ac.za} \\
    \AND
    Coauthor \\
    Affiliation \\
    Address \\
    \texttt{email}
    }
\fi

%

\DeclareMathOperator{\polylog}{polylog}

\DeclareMathOperator{\sign}{sign}

\newcommand{\R}{\mathbb R}
\newcommand{\cD}{\mathcal D}
\newcommand{\cC}{\mathcal C}

\newcommand{\dist}{\rho}
\newcommand{\samp}[2][3]{S_{#1}(\epsilon, \delta)}
\newcommand{\sampa}[2][3]{S_{#1}^a(\epsilon, \delta)}
\newcommand{\samptilde}[2][3]{S_{#1}(\epsilon, \delta)}
\newcommand{\polylogepsdelta}{\polylog\left(\frac 1\epsilon, \frac 1\delta\right)}
\newcommand{\InnerProd}[1]{\langle #1 \rangle}
\newcommand{\map}{f}

\newcommand{\ndim}{\mathrm{Ndim}}

\newcommand{\LT}[3]{(#1, #2^+, #3^-)}
\newcommand{\T}[3]{(#1, #2, #3)}
\newcommand{\LQ}[4]{((#1^+, #2^+), (#3^-, #4^-))}
\newcommand{\Q}[4]{((#1, #2), (#3, #4))}

\newcommand{\pars}[1]{\left( #1 \right)}
\newcommand{\set}[1]{\left\{ #1 \right\}}

\ifarxiv
    \newcommand{\defthm}[1]{\newmdtheoremenv[roundcorner=10, innerleftmargin=7, innerrightmargin=7, leftmargin=-7, rightmargin=-7, backgroundcolor=#1!1.5, nobreak=true]}
\else
    \newcommand{\defthm}[1]{\newtheorem}
\fi

\defthm{blue}{theorem}{Theorem}[section]
\newtheorem{corollary}[theorem]{Corollary}

\newtheorem{observation}[theorem]{Observation}
\defthm{yellow}{lemma}[theorem]{Lemma}
\defthm{yellow}{fact}[theorem]{Fact}
\defthm{green}{definition}[theorem]{Definition}
\defthm{red}{assumption}{Assumption}




\newcommand{\multilineLeft}[1]{
        \begin{tabular}{@{}l@{}}
        #1
        \end{tabular}
}

\newcommand{\hyp}{\mathcal{H}}

\begin{document}

\maketitle

\begin{abstract}
    Contrastive learning is a highly successful technique for learning representations of data from labeled tuples, specifying the distance relations within the tuple.
    We study the sample complexity of contrastive learning, i.e. the minimum number of labeled tuples sufficient for getting high generalization accuracy.
    We give tight bounds on the sample complexity in a variety of settings, focusing on arbitrary distance functions, both general $\ell_p$-distances, and tree metrics. Our main result is an (almost) optimal bound on the sample complexity of learning $\ell_p$-distances for integer $p$.
    For any $p \ge 1$ we show that $\tilde \Theta(\min(nd,n^2))$ labeled tuples are necessary and sufficient for learning $d$-dimensional representations of $n$-point datasets.
    Our results hold for an arbitrary distribution of the input samples and are based on giving the corresponding bounds on the Vapnik-Chervonenkis/Natarajan dimension of the associated problems.
    We further show that the theoretical bounds on sample complexity obtained via VC/Natarajan dimension can have strong predictive power for experimental results, in contrast with the folklore belief about a substantial gap between the statistical learning theory and the practice of deep learning.
\end{abstract}

\section{Introduction}

Contrastive learning~\citep{GH10} has recently emerged as a powerful technique for learning representations, see e.g.~\citet{SE05, MCCD13, DSRB14,SKP15, WG15, WXYL18, LL18, HFLGBTB19,HFWXG20,TKI20,CKNH20,CH21,GYC21,CLL21}.
In this paper we study the generalization properties of contrastive learning, focusing on its sample complexity.
Despite recent interest in theoretical foundations of contrastive learning, most of the previous work approaches this problem from other angles, e.g. focusing on the design of specific loss functions~\citep{HWGM21}, transfer learning~\citep{SPAKK19, CRLTJ20}, multi-view redundancy~\citep{TKH21}, inductive biases~\citep{SAGMZAKK22, HM23}, the role of negative samples~\citep{AGKM22,ADK22}, mutual information~\citep{OLV18,HFLGBTB19,BHB19,TDRGL20}, etc.~\citep{WI20, TWSM21, ZSSBB21, KSGBSBL21, MMWBB21, WL21}. 


Generalization is one of the central questions in deep learning theory.
However, despite substantial efforts, a folklore belief still persists that the gap between classic PAC-learning and the practice of generalization in deep learning is very wide. Here we focus on the following high-level question: \textit{can PAC-learning bounds be non-vacuous in the context of deep learning}? 
Direct applications of PAC-learning to explain generalization in neural networks lead to vacuous bounds due to the high expressive power of modern architectures.
Hence, recent attempts towards understanding generalization in deep learning involve stability assumptions~\citep{HRS16}, PAC-Bayesian bounds~\citep{LC01,DR17,NBMS17}, capacity bounds~\citep{NLBLS19}, sharpness~\citep{KMNST17,LLA22} and kernelization~\citep{ADHLW19, WLLM19} among others (see e.g.~\citet{JNMKB20} for an overview).
In this paper, we make a step towards closing this gap by showing that the classic PAC-learning bounds have strong predictive power over experimental results.

We overcome this gap by changing the assumptions used to analyze generalization, shifting the emphasis from the inputs to the outputs. 
While it is typical in the literature to assume the presence of latent classes in the input~\citep{SPAKK19,AGKM22,ADK22}, we avoid this assumption by allowing arbitrary input distributions.
Instead, we shift the focus to understanding the sample complexity of training a deep learning pipeline whose output dimension is $d$, only assuming that by a suitable choice of architecture, one can find the best mapping of a given set of samples into $\mathbb R^d$. 
Hence, our assumptions about the architecture are very minimal, allowing us to study the sample complexity more directly.
This is in contrast with the previous work, where a typical approach in the study of generalization (see e.g.~\citet{ZBHRV17}) is to assume that $n$ input points are in $\R^d$, while here we only assume the existence of an embedding of these points into a (metric) space such as $\R^d$.
This avoids the challenges faced by~\citet{ZBHRV17} and related work, which shows that a two-layer network with $2n + d$ parameters can learn any function of the input.

\subsection{Our Results}

Consider a simple model for contrastive learning in which labeled samples $(x, y^+, z^-_1, \dots, z^-_k)$ are drawn i.i.d. from an unknown joint distribution $\cD$. \footnote{
    This is substantially more general and hence includes as special cases typical models used in the literature which make various further assumptions about the structure of $\cD$ (e.g.~\citet{SPAKK19,CRLTJ20,ADK22}).
}
Here $x$ is referred to as the \emph{anchor}, $y^+$ is a \emph{positive} example and $z^-_i$ are \emph{negative} examples, reflecting the fact that the positive example is ``more similar'' to the anchor than the negative examples. Before the tuple is labeled, it is presented as an anchor $x$ together with an unordered $(k + 1)$-tuple $(x_1, \dots, x_{k + 1})$, from which the labeling process then selects a single positive example $y^+$ and $k$ negative examples $z^-_1, \dots, z^-_k$.

The goal of contrastive learning is to learn a representation of similarity between the domain points, usually in the form of a metric space. For samples from a domain $V$, the learned representation is a distance function $\dist \colon V \times V \to \mathbb R$. 
A highly popular choice for the domain is the $\ell_p$-space under $\ell_p$-distance $\dist_p(x,y) = \|f(x)-f(y)\|_p$ for some $f\colon\ V \to \R^d$, with $p=2$ used most frequently. Our central question is: 

\begin{quote} 
\centering 
\emph{How many contrastive samples are needed for learning a good distance function?} 
\end{quote}

The number of samples is a primary factor in the computational cost of training.
While it is typical for the previous work to consider the number of samples required for generalization to be somewhat unimportant and focus on performance on the downstream tasks (in fact, contrastive learning is often referred to as an unsupervised learning method, see e.g.~\cite{SPAKK19}), here we focus specifically on the sample complexity. This is due to the fact that even if the samples themselves might be sometimes easy to obtain (e.g. when class labels are available and $(x, y^+)$ are sampled from the same class\footnote{In this case, class information is still required, making this approach somewhat supervised. Supervision can be fully avoided, e.g. by creating a positive example from the anchor point (e.g. via transformations), although this can affect the quality of the learned metric due to the more limited nature of the resulting distribution.}), the computational cost is still of major importance and scales linearly with the number of samples.

Furthermore, in some settings, sample complexity might itself correspond to the cost of labeling, making the contrastive learning process directly supervised. Consider, for example, a crowdsourcing scenario (see e.g.~\citet{ZCK15}), where a triple is shown as $(x,y,z)$ to a labeler who labels it according to whether $y$ or $z$ is more similar to $x$. In this setting, data collection is of primary importance since the sample complexity corresponds directly to the labeling cost. 

In this paper, we state our main results first for the case $k=1$ in order to simplify the presentation. Our bounds for general $k$ are obtained by straightforward modifications of the proofs for this case.

\paragraph{Theoretical Results}

We address the above central question in the framework of PAC-learning~\citep{V84} by giving bounds on the number of samples required for the prediction of subsequent samples.
Recall that we use notation $(x,y,z)$ to refer to unlabeled samples and $\LT{x}{y}{z}$ for their labeled counterparts.
In contrastive learning, a classifier is given access to $m$ labeled training samples $\{\LT {x_i}{y_i}{z_i}\}_{i = 1}^m$ drawn from $\cD$. The goal of the classifier is to correctly classify new samples from  $\cD$. I.e. for every such sample with the ground truth label hidden and thus presented as $(x,y,z)$, to correctly label it as either $\LT{x}{y}{z}$ or $\LT{x}{z}{y}$.
For the resulting classifier, we refer to the probability (over $\cD$) of incorrectly labeling a new sample as the error rate.

Let $\hyp$ be a hypothesis class (in our case, metric spaces, $\ell_p$-distances, etc.). 
For parameters $\epsilon, \delta > 0$, the goal of the training algorithm is to find with probability at least $1 - \delta$ a classifier with the error rate at most $\epsilon + \epsilon^*$, where $\epsilon^*$ is the smallest error rate achieved by any classifier from $H$ (i.e. the best error achieved by a metric embedding, $\ell_p$-embedding, etc.).
We refer to the number of samples required to achieve the required error rate as \emph{sample complexity}. See Appendix~\ref{app:examples} for a concrete example.

\begin{table}[t!]
    \centering
    \caption{Sample complexity for contrastive learning for a dataset of size $n$.
        The notation $O_{\epsilon,\delta}$ hides dependence on $\epsilon$ and $\delta$.
        Dimension of the representation is denoted as $d$.
        Let $\tilde{d} = \min(n,d)$.
    }
    \bgroup
    \def\arraystretch{1.4}%
    \begin{tabular}{|c|c|c|}
    \hline
        \textbf{Setting} & \textbf{Lower bound} & \textbf{Upper bound} \\
    \hline
        $\ell_p$ for integer $p$ & $\Omega_{\epsilon,\delta}(n\tilde{d})$, Thm.~\ref{thm:lp-lb} & 
            \multilineLeft{Even $p$: $O_{\epsilon,\delta}(n \tilde{d})$ (matching), Thm.~\ref{thm:lp-ub}
                           \\ Odd $p$: $O_{\epsilon,\delta}(n \min(d \log n, n))$, Thm.~\ref{thm:lp-ub}
                           \\ Constant $d$: $O_{\epsilon,\delta}(n)$ (matching), Thm.~\ref{thm:lp-ub}
                           }
        \\
        $(1 + \alpha)$-separate $\ell_2$ & $\Omega_{\epsilon, \delta}(\nicefrac n\alpha)$, Thm.~\ref{thm:approx-bounds} & $\tilde O_{\epsilon, \delta}(\nicefrac{n}{\alpha^2})$, Thm.~\ref{thm:approx-bounds}
    \\ \hline
        Arbitrary distance & $\Omega_{\epsilon,\delta}(n^2)$, Thm.~\ref{thm:general-bounds}  & $O_{\epsilon,\delta}(n^2)$ (matching), Thm.~\ref{thm:general-bounds}
    \\ 
        Cosine similarity& $\Omega_{\epsilon,\delta}(n \tilde{d})$ Thm~\ref{thm:cosine-bounds} & $O_{\epsilon,\delta}(n \tilde{d})$ (matching) Thm~\ref{thm:cosine-bounds}
    \\ 
        Tree metric & $\Omega_{\epsilon,\delta}(n)$, Thm.~\ref{thm:tree-bounds} & $O_{\epsilon,\delta}(n \log n)$, Thm.~\ref{thm:tree-bounds}
    \\ \hline 
        \multicolumn{3}{c}{\textbf{Generalizations, in the same settings}}
    \\ \hline
        Quadruplet learning & Same, Cor.~\ref{cor:quadruple-bounds} & Same, Cor.~\ref{cor:quadruple-bounds} 
    \\ 
        $k$ negatives & Same, Thm.~\ref{thm:k-negatives},\ref{thm:k-negatives-general} & Same, with extra $\log (k+1)$ factor, Thm.~\ref{thm:k-negatives}
    \\ \hline
    \end{tabular}
    \egroup
    \label{tab:results}
\end{table}

As is standard in PAC-learning, we state results in two settings: \emph{realizable} and \emph{agnostic}.
Here realizable refers to the case when there exists a distance function in our chosen class (e.g. $\ell_2$-distance) that perfectly fits the data, i.e. $\epsilon^*=0$, while in the agnostic setting, such a function might not exist.
Let $n = |V|$ be the number of data points in the dataset, which the triples are sampled from.
We summarize our results in Table~\ref{tab:results}.
To simplify the presentation, for the rest of the section we assume that $\epsilon, \delta$ are fixed constants (see full statements of our results for the precise bounds, which show optimal dependence on these parameters).
We first give a simple baseline, which characterizes the number of samples required for contrastive learning.

\begin{theorem}[Arbitrary distance\footnote{
        In this paper, the only requirement for arbitrary distances is symmetry, i.e. $\dist(x,y) = \dist(y,x)$
    } (informal), full statement: Theorem~\ref{thm:general-bounds}]
    \label{thm:general-bounds-informal}
    The sample complexity of contrastive learning for arbitrary distance functions is $\Theta(n^2)$, where the lower bound holds even for metric distances.
\end{theorem}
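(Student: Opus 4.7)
The plan is to match the two bounds through the VC dimension of the natural hypothesis class and then invoke the fundamental theorem of PAC learning. Identify each symmetric distance function $\dist : V\times V \to \R$ with a point in $\R^{\binom{n}{2}}$ whose coordinates are indexed by unordered pairs of $V$. A triple $(x,y,z)$ is then classified by $h_\dist(x,y,z) = 1$ iff $\dist(x,y) < \dist(x,z)$, i.e.\ by the sign of the linear functional $\dist_{\{x,z\}} - \dist_{\{x,y\}}$ evaluated at the point $\dist$. Hence every concept in the hypothesis class is a homogeneous halfspace in $\R^{\binom{n}{2}}$ restricted to the positive orthant, and the standard bound on the VC dimension of halfspaces gives $\mathrm{VC}(\hyp) \le \binom{n}{2} + 1 = O(n^2)$. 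The realizable and agnostic PAC theorems then yield sample complexity $O(n^2/\epsilon)$ and $O(n^2/\epsilon^2)$ respectively.

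For the lower bound I need $\Omega(n^2)$ triples that can be shattered by a valid metric. Partition $V = A \sqcup B$ with $|A| = |B| = \lfloor n/2 \rfloor$, and for each anchor $a \in A$ fix an arbitrary spanning tree $T_a$ on $B$. Take as the candidate shatter set
\[
\mathcal{S} = \{(a,b,b') : a \in A,\ \{b,b'\} \in E(T_a)\},
\qquad |\mathcal{S}| = |A|(|B|-1) = \Theta(n^2).
\]
Given any target labeling $\sigma : \mathcal{S} \to \{0,1\}$, the key observation is that $\sigma$ restricted to the edges of $T_a$ induces an orientation of $T_a$, and since $T_a$ is acyclic, every such orientation is a DAG and extends by topological sort to a linear order $\pi_a$ on $B$. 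I realize $\pi_a$ by setting $\dist(a,b) = M + \pi_a(b)/n$, and pad the construction with $\dist(a,a') = \dist(b,b') = 2M$ on intra-cluster pairs. All pairwise distances then lie in $[M, 2M+1]$, so for $M$ large the triangle inequality $\dist(p,r) + \dist(r,q) \ge 2M \ge \dist(p,q)$ is automatic and the distance is a genuine metric realizing $\sigma$. This shows the VC dimension of the metric hypothesis class is $\Omega(n^2)$, and the matching PAC lower bound gives the claimed $\Omega(n^2)$ sample complexity, even under the metric restriction.

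The main obstacle is choosing the $\Omega(n^2)$ shattered triples in the lower bound. For a fixed anchor $a$, the label pattern over all $\binom{|B|}{2}$ triples $(a,b,b')$ is a tournament on $B$ that must come from a total order on $B$, giving only $|B|!$ realizable patterns rather than $2^{\binom{|B|}{2}}$. Restricting each anchor's comparisons to the edges of a tree (or any forest) is exactly the combinatorial observation that makes \emph{all} $2^{|B|-1}$ orientations of those edges simultaneously acyclic and hence achievable, while still summing to $\Theta(n^2)$ triples across anchors. Once this is in place, metricity is cheap to enforce by the additive gap between inter- and intra-cluster distances.
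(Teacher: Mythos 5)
Your proposal is correct. The lower bound is essentially the paper's argument in a slightly different costume: the paper takes, for each anchor $v_i$, the comparisons between consecutive pairs $(v_j,v_{j+1})$ with $j>i$ (an orientation of a path), topologically sorts the resulting acyclic orientation, and assigns distances in $[n,2n]$ so the triangle inequality is automatic; your bipartite version with a spanning tree $T_a$ per anchor and distances in $[M,2M+1]$ is the same mechanism and yields the same $\Theta(n^2)$ count. Your upper bound, however, is genuinely different. The paper argues combinatorially: in any set of $n^2$ triples some anchor $x$ appears in $n$ of them, the $n$ comparison pairs $(y,z)$ form a graph on at most $n$ vertices with $n$ edges and hence contain a cycle, and the cyclic labeling $\dist(x,v_1)<\dots<\dist(x,v_t)<\dist(x,v_1)$ is unsatisfiable, so no such set is shattered. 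You instead linearize: a distance function is a point $\dist\in\R^{\binom n2}$, a triple is the vector $e_{\{x,z\}}-e_{\{x,y\}}$, and the label is the sign of their inner product, so the hypothesis class embeds into homogeneous halfspaces over $\R^{\binom n2}$ and inherits VC dimension $O(n^2)$. Both are valid; your route is shorter and transfers immediately to the quadruplet setting (instance vector $e_{\{z,w\}}-e_{\{x,y\}}$), whereas the paper's cycle argument is elementary, needs no external VC fact, and is the template it reuses for the class-distance and quadruplet upper bounds. One small wording caveat: what you bound is the VC dimension of the class of sign patterns realizable by points in the positive orthant acting on these difference vectors, i.e.\ the dual/homogeneous halfspace class of dimension $\binom n2$; the ``$+1$'' for affine halfspaces is unnecessary but harmless.
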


This basic guarantee is already substantially better than the overall $\Theta(n^3)$ number of different triples, but can still be a pessimistic estimate of the number of samples required. We thus leverage the fact that in practice $\ell_p$-spaces are a typical choice for a representation of the distance function between the data points. In particular, $\ell_2$-distance (and the related cosine similarity) is most frequently used as a measure of similarity between points after embedding the input data into $\mathbb R^d$. Here the dimension $d$ of the embedding space is typically a fixed large constant depending on the choice of the architecture (e.g. $d = 512, 1024, 4096$ are popular choices).

Our main results are much stronger bounds on the sample complexity of learning representations under these $\ell_p$ distances. For any $\ell_p$-distance we get the following result: 
\begin{theorem}[$\ell_p$-distance upper bound (informal), full statement: Theorem~\ref{thm:lp-ub}]
    For any constant integer $p$, the sample complexity of contrastive learning under $\ell_p$-distance is $O(\min(nd,n^2))$ if $p$ is even, and $O(\min(nd\log{n},n^2))$ if $p$ is odd. Furthermore, if $d = O(1)$, then the sample complexity is $O(n)$ for any positive integer $p$.
\end{theorem}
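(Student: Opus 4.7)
The plan is to bound the VC dimension of the hypothesis class $\mathcal{H}_{\ell_p, d} = \{h_f : f \colon V \to \mathbb{R}^d\}$ with $h_f(x, y, z) = \mathbf{1}[\|f(x) - f(y)\|_p \le \|f(x) - f(z)\|_p]$, and then convert this to sample complexity via the standard PAC-learning theorems (in both the realizable and agnostic settings). Each $h_f$ is parameterized by $k = nd$ real numbers. The $O(n^2)$ side of the $\min$ is inherited for free from the arbitrary-distance bound of Theorem~\ref{thm:general-bounds-informal} since $\ell_p$ classifiers form a subclass; the substantive work is to establish the $O(nd)$ (even $p$) or $O(nd \log n)$ (odd $p$) side, together with the sharpened $O(n)$ bound for constant $d$.

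For even $p$, the quantity $\|a\|_p^p = \sum_i a_i^p$ is itself a polynomial of degree $p$, so $g_{x,y,z}(f) := \|f(x)-f(z)\|_p^p - \|f(x)-f(y)\|_p^p$ is a polynomial of degree $p$ in the $nd$ entries of $f$, and $h_f$ is precisely its sign. Warren's sign-pattern theorem gives that $m$ such polynomials realize at most $(O(pm/(nd)))^{nd}$ distinct sign patterns on $\mathbb{R}^{nd}$; setting this bound to be at least $2^m$ yields $\mathrm{VCdim}(\mathcal{H}_{\ell_p, d}) = O(nd \log p) = O(nd)$ for constant $p$.

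For odd $p$, the absolute values inside $\|\cdot\|_p^p$ destroy polynomiality, so I would use a two-level arrangement argument. For $m$ training triples $(x_s, y_s, z_s)$, consider the arrangement in $\mathbb{R}^{nd}$ of the $2md$ linear forms $\{f(x_s)_i - f(y_s)_i,\, f(x_s)_i - f(z_s)_i\}$; by Warren it has at most $(O(m/n))^{nd}$ sign cells, and within any single cell each $|f(x_s)_i - f(y_s)_i|^p$ collapses to a polynomial of degree $p$ in the parameters. A second application of Warren inside each cell bounds the number of classifier sign-patterns there by $(O(pm/(nd)))^{nd}$. Multiplying and solving $2^m \le (O(pm^2/(n^2 d)))^{nd}$ for $m$ yields $\mathrm{VCdim} = O(nd \log n)$ for constant $p$.

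The sharpened $O(n)$ bound for constant $d$ does not drop out of either argument above (both lose a logarithmic factor), so I would invoke the Goldberg--Jerrum theorem: a class with $k$ real parameters whose membership is decidable by an algorithm using at most $t$ arithmetic operations and comparisons has VC dimension $O(kt)$. Evaluating $h_f(x, y, z)$ uses $t = O(pd) = O(1)$ operations for constant $p, d$ (compute $d$ coordinate differences, take $p$th power with one sign comparison, sum, compare), giving $\mathrm{VCdim} = O(nd \cdot pd) = O(n)$. I expect the main obstacle to be the odd-$p$ case: the two-level sign-pattern count must be carried through with enough care that the outer cell count and the inner polynomial sign count combine inside a single $\log n$ factor rather than compounding into $\log^2 n$, and the non-polynomial $|\cdot|^p$ must be handled consistently across all cells rather than producing an exponential blow-up in sign assignments.
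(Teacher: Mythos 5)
Your proposal is correct and reaches all three bounds, but two of the three cases take a genuinely different route from the paper. The even-$p$ case is essentially identical: Warren's sign-pattern bound applied to $m$ degree-$p$ polynomials in the $nd$ parameters. For odd $p$, the paper handles the absolute values by enumerating all $(n!)^d < n^{nd}$ global orderings of the $n$ points along each coordinate and applying Warren once per ordering, which is where its $\log n$ factor comes from; you instead stratify parameter space by the arrangement of only the $2md$ linear forms actually appearing in the training triples and apply Warren within each cell. Both are valid, and yours is in fact tighter -- carried through, your inequality $2^m \le (O(pm^2/(n^2d)))^{nd}$ gives $m = O(nd\log d)$, not just $O(nd\log n)$ -- though you should state explicitly the fact your multiplication step relies on, namely that the number of sign patterns the cell-specific polynomials realize \emph{on a cell} is at most the number they realize on all of $\R^{nd}$, so Warren (a statement about $\R^\ell$) applies. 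For constant $d$, the paper does \emph{not} lose a log factor either: it attaches $2^{2d}$ sign-resolved polynomials to each constraint and applies Warren once to the resulting $2^{2d}m$ polynomials, getting VC dimension $O(nd^2)$ directly; your alternative via Goldberg--Jerrum ($O(kt)$ with $k = nd$ parameters and $t = O(pd)$ operations, the comparisons for $|\cdot|$ being legal in that model) gives the same $O(npd^2) = O(n)$ and is shorter, at the cost of importing a heavier black box whose proof is itself a Warren-type argument. Your handling of the $O(n^2)$ branch of the min and the conversion from VC dimension to realizable/agnostic sample complexity matches the paper.
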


We show an $\Omega(\min(nd,n^2))$ lower bound for any $\ell_p$-distance, essentially matching our upper bounds:
\begin{theorem}[$\ell_p$-distance lower bound (informal), full statement: Theorem~\ref{thm:lp-lb}]\label{thm:l2-bounds-informal}
    For any constant integer $p$, the sample complexity of contrastive learning under $\ell_p$-distance is $\Omega(\min(nd,n^2))$.
\end{theorem}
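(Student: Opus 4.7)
The plan is to prove the lower bound in two steps: first, reduce the sample complexity lower bound to a lower bound on the VC dimension of the associated binary hypothesis class on triples, and second, construct an explicit shattered set of size $\Omega(\min(nd,n^2))$. The first step is standard: writing $\hyp_{p,d}$ for the class of triple-classifiers of the form $(x,y,z) \mapsto \mathbf{1}[\|f(x)-f(y)\|_p < \|f(x)-f(z)\|_p]$ induced by embeddings $f \colon V \to \R^d$, the classical PAC-learning lower bound yields sample complexity $\Omega(\mathrm{VC}(\hyp_{p,d})/\epsilon + \log(1/\delta)/\epsilon)$ in the realizable regime, with an extra $1/\epsilon$ factor in the agnostic regime. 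So it suffices to establish $\mathrm{VC}(\hyp_{p,d}) = \Omega(\min(nd, n^2))$.

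For the VC dimension lower bound, I fix a parameter $d' \le \min(d, \lfloor n/2 \rfloor)$ and split the $n$ domain points into $2d'$ \emph{reference} points and $n - 2d'$ \emph{anchor} points. The reference points are placed at the fixed locations $y_j = e_j$ and $z_j = -e_j$ for $j = 1, \dots, d'$ (where $e_j$ is a standard basis vector of $\R^d$), while the anchors $x_1, \dots, x_{n-2d'}$ have free coordinates, to be chosen after the desired labeling is revealed. The shattered set consists of the $(n - 2d')\, d'$ triples $\T{x_i}{y_j}{z_j}$. Because every coordinate other than the $j$-th contributes identically to $\|x_i - y_j\|_p^p$ and $\|x_i - z_j\|_p^p$, the comparison between the two distances reduces to comparing $|x_{ij}-1|^p$ and $|x_{ij}+1|^p$, which is governed entirely by the sign of $x_{ij}$. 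Since the coordinates $\{x_{ij}\}$ may be chosen independently across $i$ and $j$, every sign pattern, and hence every binary labeling of these triples, is realizable by some embedding. Taking $d' = d$ when $d \le n/4$, and $d' = \lfloor n/4 \rfloor$ otherwise, yields $\Omega(\min(nd,n^2))$ shattered triples.

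The core technical content is the coordinate-independence argument, which reduces the shattering question to a one-dimensional monotonicity fact: the map $t \mapsto (1+t)^p - (1-t)^p$ is strictly increasing on $\R$ for every real $p \ge 1$ and vanishes only at $t=0$, so it has the sign of $t$. This holds for all $p \ge 1$, not just integer $p$, so the construction actually delivers a somewhat stronger lower bound than the statement requires; it also obviates any need to invoke the separate $\Omega(n^2)$ bound of Theorem~\ref{thm:general-bounds}, since the single construction interpolates between the two regimes. The principal obstacle is not any individual calculation but rather identifying the right construction that cleanly decouples the $(n-2d') d'$ comparisons into independent one-dimensional tests; once this decoupling is in place, the remaining steps are routine.
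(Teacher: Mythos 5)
Your proposal is correct and follows essentially the same route as the paper's Theorem~\ref{thm:lowerbound}: fix reference points at standard basis vectors, let the anchors' coordinates be free, and observe that each triple's label is decided by an independent one-dimensional sign test, giving $\Omega(\min(nd,n^2))$ shattered triples. The only differences are cosmetic — you compare antipodal pairs $\pm e_j$ while the paper compares $e_1$ against $e_j$ with a fixed pivot coordinate, and you cap $d'$ at $\Theta(n)$ to cover the $d=\Omega(n)$ regime directly rather than falling back on the arbitrary-distance $\Omega(n^2)$ bound.
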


For cosine similarity, we show the same result:
\begin{theorem}[Cosine similarity (informal), full statement: Theorem~\ref{thm:cosine-bounds}]\label{thm:cos-sim-informal}
    The sample complexity of contrastive learning under cosine similarity is $\Theta(\min(nd,n^2))$.
\end{theorem}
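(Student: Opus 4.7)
The plan is to prove $\Theta(\min(nd,n^2))$ by combining the two natural upper bounds with matching lower bounds, using the unifying observation that on any sphere, cosine similarity comparisons coincide with $\ell_2$-distance comparisons: for vectors $u,v,w$ of common norm $r$, $\|u-v\|^2 < \|u-w\|^2$ iff $\cos(u,v) > \cos(u,w)$, since $\|u-v\|^2 = 2r^2(1-\cos(u,v))$.

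For the upper bound, the $O(n^2)$ part is inherited directly from Theorem~\ref{thm:general-bounds-informal}, since cosine similarity is a particular symmetric (dis)similarity function. For the $O(nd)$ part I would bound the VC dimension of the hypothesis class via the polynomial method. A hypothesis is an embedding $\map\colon V \to \R^d$ with $N=nd$ real parameters. Clearing the common positive factor $\|\map(x)\|\|\map(y)\|\|\map(z)\|$, the condition $\cos(\map(x),\map(y)) > \cos(\map(x),\map(z))$ becomes $\|\map(z)\|\InnerProd{\map(x),\map(y)} > \|\map(y)\|\InnerProd{\map(x),\map(z)}$; a case split on the signs of $\InnerProd{\map(x),\map(y)}$ and $\InnerProd{\map(x),\map(z)}$ handles the square roots, reducing the label to $O(1)$ sign evaluations of polynomials of degree at most $6$ in the $N$ coordinates (trivial in the mixed-sign cases, and of the form $\|\map(z)\|^2\InnerProd{\map(x),\map(y)}^2 - \|\map(y)\|^2\InnerProd{\map(x),\map(z)}^2$ in the same-sign cases). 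By the Warren/Milnor--Thom bound, $m$ such triples admit at most $(Cm/N)^{O(N)}$ distinct sign patterns, so the VC dimension is $O(N)=O(nd)$, and standard PAC-learning bounds yield sample complexity $O_{\epsilon,\delta}(nd)$.

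For the lower bound, the $\Omega(n^2)$ part again follows from Theorem~\ref{thm:general-bounds-informal}: when $d=\Omega(n)$, the adversarial triple structure used there can be isometrically realized by points on a sphere in $\R^d$, and by the equivalence above the same shattering transfers to cosine similarity. For the $\Omega(nd)$ part, I would reduce from the $\ell_2$ lower bound of Theorem~\ref{thm:l2-bounds-informal}: place the $n$ shattering points on a common sphere (adding at most a constant number of extra dimensions), so that the $\ell_2$ comparisons that realized the $\Omega(nd)$ shattering become cosine similarity comparisons.

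The main obstacle will be the equal-norm adaptation in the lower bound. A naive lift $v\mapsto (v,\sqrt{R^2-\|v\|^2})$ equalizes norms but distorts inner products, so the clean route is either to inspect the $\ell_2$ construction underlying Theorem~\ref{thm:l2-bounds-informal} and show it is already flexible enough to be placed on a sphere, or to re-run its shattering argument with the cosine classifier substituted for the $\ell_2$ classifier. The upper bound is comparatively routine; the only subtlety is the careful sign case analysis needed to convert the norm-bearing expression into a bounded-degree polynomial sign pattern so that the Warren-type counting applies.
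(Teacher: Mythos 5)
Your proposal rests on the same pivotal observation as the paper's proof of Theorem~\ref{thm:cosine-bounds}: restricted to a sphere, cosine-similarity comparisons coincide with $\ell_2$-distance comparisons, and the $O(n^2)$/$\Omega(n^2)$ regime is delegated to Theorem~\ref{thm:general-bounds} in both treatments. The two arguments diverge in how they cash this in. For the $O(nd)$ upper bound you keep unnormalized embeddings, clear denominators, split on the signs of the two inner products, and count sign patterns of degree-$6$ polynomials via Warren's theorem; this is sound but heavier than necessary. The paper instead uses scale-invariance to assume all images are unit vectors, at which point the cosine hypothesis class is literally a subclass of the $\ell_2$ class in $\R^d$, so the $O(nd)$ VC bound of Theorem~\ref{thm:lp-ub} is inherited with no new computation. (Your route does have the minor virtue of not needing to argue the restriction to the sphere loses nothing, but that argument is one line.) For the $\Omega(nd)$ lower bound you correctly identify both the obstacle (equalizing norms without destroying the shattering) and the right remedy (check that the $\ell_2$ construction of Theorem~\ref{thm:lowerbound} can be placed on a sphere), but you defer the verification, and that verification is exactly the substance of the paper's proof: the $d$ landmarks are the standard basis vectors, hence already unit, and each anchor vector $g(x_i)$ (first coordinate $\nicefrac 12$, remaining coordinates in $\{0,1\}$ chosen according to the desired labeling) is simply normalized to $\map(x_i)=g(x_i)/\|g(x_i)\|_2$. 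Since $\|\map(x_i)-e_1\|_2^2-\|\map(x_i)-e_j\|_2^2$ depends only on the first and $j$-th coordinates, a two-line computation shows its sign is still dictated by the $j$-th coordinate of $g(x_i)$, so the $(d-1)(n-d)$ queries remain shattered. In short: same approach, a more laborious but valid upper bound, and a lower-bound plan that is correct but stops just short of the one calculation that makes it a proof.
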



Using the techniques we develop for the above results, we also immediately get the following result:
\begin{theorem}[Tree distance (informal), full statement: Theorem~\ref{thm:tree-bounds}]\label{thm:tree-informal}
    The sample complexity of contrastive learning under tree distance is $\Omega(n)$ and $O(n \log n)$.
\end{theorem}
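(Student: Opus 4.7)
The plan is to follow the same template as the paper's other sample-complexity results and bound the complexity through the VC dimension of the hypothesis class
\[
\hyp_{\mathrm{tree}} = \{\, h_T \colon (x,y,z) \mapsto \sign(d_T(x,y) - d_T(x,z)) \; : \; T \text{ a weighted tree realizing a metric on } V\,\}.
\]
Once a bound of the form $\operatorname{VC}(\hyp_{\mathrm{tree}}) = \Theta(n)$ up to a $\log n$ factor is in hand, the textbook PAC conversion used for Theorems~\ref{thm:general-bounds} and~\ref{thm:lp-ub} (absorbing the fixed $\epsilon,\delta$ dependence) gives the claimed sample complexity.

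For the upper bound I would estimate the growth function of $\hyp_{\mathrm{tree}}$ by decomposing every hypothesis into (i) a combinatorial tree topology and (ii) a vector of nonnegative edge weights. Every tree metric on $n$ points is realized on a tree with at most $2n-2$ vertices and $2n-3$ edges (after suppressing degree-two Steiner vertices), so the number of topologies is at most $n^{O(n)}$ by a Cayley-type count on the $n$ labeled points together with the unlabeled Steiner vertices. Fix one such topology $\mathcal T$; then $d_T(a,b)$ is a linear function of the edge-weight vector $w \in \R_{\ge 0}^{2n-3}$, and each inequality $d_T(x,y) \le d_T(x,z)$ is a halfspace in that parameter space. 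By the classical hyperplane-arrangement bound (a special case of Milnor--Warren), $m$ such halfspaces realize at most $(em/(2n-3))^{O(n)}$ distinct sign patterns. Multiplying by the topology count yields a growth function at $m$ points of size at most $n^{O(n)} \cdot m^{O(n)}$; requiring this to dominate $2^m$ for shattering forces $\operatorname{VC}(\hyp_{\mathrm{tree}}) = O(n \log n)$, which translates to $O(n \log n)$ samples.

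For the lower bound I would explicitly shatter $n$ triples on a star. Let $c$ be the center and attach leaves $u, \ell_1, \ell'_1, \dots, \ell_n, \ell'_n$; fix $w(cu) = 2$ and $w(c\ell'_i) = 1$ for every $i$, and leave the weights $w_i := w(c\ell_i) \in \R_{>0}$ free. For the triple $\tau_i := (\ell'_i, \ell_i, u)$ one computes $d_T(\ell'_i, \ell_i) = 1 + w_i$ and $d_T(\ell'_i, u) = 3$, so the label of $\tau_i$ under $h_T$ equals $\sign\bigl((1+w_i) - 3\bigr) = \sign(w_i - 2)$. Choosing $w_i \in \{1, 3\}$ independently for each $i$ realizes every possible labeling of $\{\tau_1, \dots, \tau_n\}$, so these $n$ triples are shattered and $\operatorname{VC}(\hyp_{\mathrm{tree}}) \ge n$, giving the $\Omega(n)$ lower bound on samples.

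The main obstacle is the upper-bound step: one must verify that restricting the edge-weight vector to $\R_{\ge 0}^{2n-3}$ (so that the parameters yield a bona fide tree metric) does not invalidate the sign-pattern count, which is in fact unproblematic since intersecting an arrangement with a convex domain can only decrease the number of realized sign patterns. A secondary point worth flagging is the $\log n$ gap between the two bounds; closing it would require either a sharper encoding of the tree topology in the upper bound or a construction in the lower bound that extracts the $(n-2)\log n$ bits of topological freedom guaranteed by Cayley's formula, and I would leave this as an open loose end rather than attempt to match the bounds here.
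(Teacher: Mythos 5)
Your proposal follows essentially the same route as the paper: the upper bound enumerates the $n^{O(n)}$ suppressed-degree-two tree topologies via a Cayley-type count, observes that for a fixed topology each comparison is a linear inequality in the edge weights, and applies the Warren sign-pattern bound before multiplying by the topology count to force $\operatorname{VC} = O(n\log n)$, exactly as in the paper's Theorem~\ref{thm:tree-bounds}. Your explicit star construction for the $\Omega(n)$ lower bound is correct (the paper leaves this direction to the trivial disjoint-triples argument, since a path is a tree metric), and both arguments convert to sample complexity through the same VC machinery.
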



We also present multiple variations and extensions to complement the main results above. First, we show that the results above can be extended to the case where the positive example has to be selected among $k+1$ options instead of just two, as is typical in the literature. This results in a $\log k$ increase in the sample complexity (see Theorem~\ref{thm:k-negatives}).
Furthermore, our results can be extended to the case when the samples are well-separated, i.e. the distance to the negative example is at least a factor $(1 + \alpha)$ larger than the distance to the positive example.
This is a frequent case in applications when the positive example is often sampled to be much closer to the anchor, which is often referred to as learning with hard negatives.
In this case, we give an $\tilde O(n/\alpha^2)$ upper bound and an $\Omega(n/\alpha)$ lower bound (see Theorem~\ref{thm:approx-bounds}).

Finally, we show in Corollary~\ref{cor:quadruple-bounds} that all of our results can be adapted to another popular scenario in contrastive learning, where instead of sampling triples, one instead samples quadruples $\LQ{x_1}{x_2}{x_3}{x_4}$ with the semantic that the pair $(x_1^+, x_2^+)$ is more similar than the pair $(x_3^-, x_4^-)$. While the lower bounds transfer trivially, it is somewhat surprising that the upper bounds also hold for this case despite the fact that the overall number of different samples is $\Theta(n^4)$.

\paragraph{Outline of the techniques}
To describe the main idea behind our techniques, consider the case of $\ell_2$-metrics in a $d$-dimensional space.
Let $f\colon\ V \to \R^d$ be a function mapping elements to their representations, and then the query $\T xyz$ is labeled $\LT xyz$ if $\|f(x) - f(y)\|_2 < \|f(x) - f(z)\|_2$.
This is equivalent to $\sum_{i=1}^d (f_i(x) - f_i(y))^2 < \sum_{i=1}^d (f_i(x) - f_i(z))^2$, where $f_i$ denotes the $i$-th coordinate of the representation.
This condition can be written in the form $P(f_1(x), \ldots, f_d(x), f_1(y), \ldots, f_d(y), f_1(z), \ldots, f_d(z)) < 0$, where $P$ is some polynomial of degree $2$.
Hence, every input triplet corresponds to a polynomial such that the label of the triplet is decided by the sign of this polynomial.
We show that the number of possible satisfiable sign combinations of the polynomials~-- and hence the largest shattered set of triplets~-- is bounded.
\paragraph{Experimental results}
To verify that our results indeed correctly predict the sample complexity, we perform experiments on several popular image datasets: CIFAR-10/100 and MNIST/Fashion-MNIST.
We find the representations for these images using ResNet18 trained from scratch using various contrastive losses.
Our experiments show that for a fixed number of samples, the error rate is well approximated by the value predicted by our theory.
We present our findings in Appendix~\ref{sec:experiments}.

\subsection{Other Previous Work}
\label{sec:previous-work}

For a survey on metric learning we refer the reader the the monograph by~\citet{kulis_MetricLearning_2013}.
A popular approach to metric learning involves learning Mahalanobis metrics, see \citet{verma_SampleComplexity_2015} for PAC-like bounds in this setting.
\citet{wang_RobustDistance_2018} study the sample complexity of this problem in the presence of label noise.
We note that our bounds are more general since Mahalanobis distances only represent a subclass of possible metrics.

Related problems are also studied in the literature on ordinal embeddings, which requires adaptive queries in order to recover the underlying embedding. This is in contrast with our work where the samples are taken non-adaptive from an unknown distribution.
For triplet queries, \citet{arias-castro_TheoryOrdinal_2017} shows that it’s possible to recover the embedding from the triplets queries, up to a transformation, and provide convergence rates for quadruplet queries.
\citet{ghosh_LandmarkOrdinal_2019} shows that $O(d^8 n \log n)$ adaptive noisy triplet queries suffice for recovering an embedding up to a transformation with $O(1)$ additive divergence.
\citet{terada_LocalOrdinal_2014} shows that it's possible to recover the ground-truth points up to a transformation assuming that these points are sampled from a continuous distribution assuming the kNN graph is given.


\section{Preliminaries}

In order to state our main results, we first introduce formal definitions of contrastive learning in the realizable and agnostic setting and the associated notions of Vapnik-Chervonenkis and Natarajan dimension.
We start by giving the definitions for the simplest triplet case.

%
%
%
\begin{definition}[Contrastive learning, realizable case]
    \label{def:contrastive_realizable}
    Let $\hyp$ be a hypothesis class and $\dist \in \hyp$ be an unknown distance function.\footnote{
        The inputs are labeled based on $\dist$.
        With a slight abuse of notation, we refer to both the distance function and the hypothesis (i.e. a classifier) it defines as $\rho$.
    }
    Given access to samples of the form $\LT xyz$ from a distribution $\cD$, meaning $\dist(x, y) < \dist(x, z)$, the goal of \emph{contrastive learning} is to create a classifier from $\hyp$ which accurately labels subsequent unlabeled inputs.
    For an error parameter $\epsilon \in (0, \nicefrac 12)$ \footnote{
        The $\nicefrac 12$ error rate can be achieved by random guess, which requires zero samples.
        Hence, our theory makes prediction only for non-trivial values of $\epsilon$, as in~\citet{blumer1989learnability}
    }, the sample complexity of contrastive learning, denoted as $\samp{\dist}$, is the minimum number of samples required to achieve error rate $\epsilon$ with probability at least $1 - \delta$.
\end{definition}
\begin{definition}[Contrastive learning, agnostic case]
    Let $\hyp$ be a hypothesis class.
    Given access to labeled samples of the form $\LT xyz$ from a distribution $\cD$, 
    the goal of \emph{agnostic contrastive learning} is to create a classifier from $\hyp$ which accurately labels subsequent unlabeled inputs.
    For an error parameter $\epsilon > 0$, the sample complexity of contrastive learning, denoted as $\sampa{\dist}$, is the minimum number of samples required to achieve error rate $\epsilon + \epsilon^*$ with probability at least $1 - \delta$, where $\epsilon^*$ is the best error rate that can be achieved by a hypothesis $\rho \in \hyp$.
\end{definition}
The above definitions can be naturally generalized to the case when instead of triplets we get inputs of the form $(x, x_1, \dots, x_{k + 1})$ and the objective is to select $x_i$ minimizing $\dist(x, x_i)$.
\begin{definition}[Quadruplet learning]
\label{def:quad_learning}
If instead of triplets as in the definitions above, the samples are quadruples of the form $\Q xyzw$ labeled according to whether the $(x,y)$ pair is closer than the $(z,w)$ pair, we refer to this problem as \emph{quadruplet contrastive learning} and denote the corresponding sample complexities as $\samp[Q]{\dist}$ and $\sampa[Q]{\dist}$.
\end{definition}

\paragraph{VC and Natarajan dimension} The main tool used for sample bounds is the Vapnik–Chervonenkis (VC) dimension for binary classification and the Natarajan dimension for multi-label classification.

\begin{definition}[VC-dimension~\citep{vapnik2015uniform}]
    Let $X$ be the set of inputs.
    Let $H \subseteq 2^X$ be a set of subsets of $X$, called the hypothesis space.
    We say that $A \subset X$ is shattered by $H$ if for any $B \subset A$ there exists a hypothesis $h \in H$ such that $h \cap A = B$.
    Finally, the Vapnik–Chervonenkis (VC) dimension of $H$ is defined as the size of the largest shattered set.
\end{definition}

Intuitively, a set $A$ is shattered if all of $2^{|A|}$ possible labelings on $A$ are realizable by $H$.
In our case, the set of inputs $X$ is defined by all possible triplets on $V$, i.e. $X \subseteq V^3$.

The set of distance functions $\Delta$ will vary depending on the setting. Below we show results when $\Delta$ contains 1) all possible distance functions\footnote{
    We only consider functions such that all distances are different.
    This is a reasonable assumption since in practice the probability of encountering exactly the same distances is zero.
    This assumption simplifies the analysis since we only need to consider the binary classification case
}, 2) all metrics induced by the $\ell_p$-norms over $n$ points in $\R^d$, and 3) other distance functions, such as tree metrics, cosine similarity etc.

For non-binary (finite) labels, the key property we use to characterize the sample complexity of a problem is the Natarajan dimension, which generalizes the VC dimension \cite{bendavid1995characterizations}:
\begin{definition}[Natarajan dimension~\citep{N89}]\label{def:natarajan}
    Let $X$ be the set of inputs, $Y$ be the set of labels, and let $H \subseteq Y^X$ be a hypothesis class.
    Then $S \subseteq X$ is \emph{N-shattered} by $H$ if there exist $f_1,f_2\colon\ X \to Y$ such that $f_1(x) \ne f_2(x)$ for all $x \in A$ and for every $B \subseteq A$ there exists $g \in H$ such that:
    \[
        g(x)=f_1(x) \text{ for } x \in B\text{ and } g(x)=f_2(x)\text{ for } x \notin B
    \]
    Finally, the Natarajan dimension $\ndim(H)$ of $H$ is the maximum size of an $N$-shattered set.
\end{definition}

\begin{lemma}[\citep{bendavid1995characterizations}, Informal]
    \label{lem:natarajan_samples}
    If $|Y|$ is finite, then for the sample complexity $S(\epsilon, \delta)$ of the realizable case it holds that:
    \[
        S(\epsilon, \delta)
        = O\left(\frac{\ndim(H)\log{|Y|}}{\epsilon}\polylogepsdelta\right)
        \text{ and } \Omega\left(\frac{\ndim(H)}{\epsilon}\polylogepsdelta\right)
    \]
    In the agnostic case:
    \[
        S^a(\epsilon, \delta)
        = O\left(\frac{\ndim(H)\log{|Y|}}{\epsilon^2}\polylogepsdelta\right)
        \text{ and } \Omega\left(\frac{\ndim(H)}{\epsilon^2}\polylogepsdelta\right)
    \]
\end{lemma}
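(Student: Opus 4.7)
The plan is to reduce this multi-class sample complexity question to the standard PAC framework by first establishing a Sauer--Shelah-type growth bound in terms of the Natarajan dimension, and then feeding it into classical uniform-convergence machinery. Writing $H|_S = \{(h(x_1),\dots,h(x_m)) : h \in H\}$ for the projection of $H$ onto a sample $S=(x_1,\dots,x_m)$, the first step is to prove the Natarajan analog of Sauer's lemma: if $\ndim(H) \le d$, then $|H|_S| \le (m|Y|^2)^d$. The argument runs by induction on $m+d$ in the style of Sauer--Shelah, but applied to pairs of witness labels $(y_1,y_2)\in Y^2$; at each step one of the ``reductions'' of $H$ obtained by fixing a coordinate to one of the two witness labels is shown to have strictly smaller Natarajan dimension.

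With the growth bound in hand, the upper bounds follow from standard symmetrization / ghost-sample arguments. In the realizable case, a hypothesis consistent with $m$ labeled examples is $\epsilon$-bad with probability at most $2\,|H|_{S \cup S'}|\,2^{-\epsilon m/2} \le 2(2m|Y|^2)^d\, 2^{-\epsilon m/2}$; setting this $\le \delta$ and solving gives $m = O\bigl(\frac{d\log|Y|+\log(1/\delta)}{\epsilon}\bigr)$, which matches the claimed bound up to the stated polylog factors. In the agnostic case, combining the same growth bound with Hoeffding's inequality and a union bound over $|H|_{S \cup S'}|$ yields uniform convergence of empirical errors at rate $\sqrt{d\log(m|Y|)/m}$, giving the $d\log|Y|/\epsilon^2$ upper bound.

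For the lower bounds, I would take an N-shattered set $A=\{x_1,\dots,x_d\}$ with witnesses $f_1, f_2$ and consider the distribution placing mass $1-4\epsilon$ on $x_1$ and mass $\frac{4\epsilon}{d-1}$ on each other $x_i$, with the true label at $x_i$ chosen uniformly from $\{f_1(x_i), f_2(x_i)\}$. Because $A$ is N-shattered, every labeling of $A$ consistent with the witnesses is realized by some $h \in H$, so on unseen points the learner can do no better than a random guess between the two witness labels. The standard coupon-collector / information-theoretic arguments (Ehrenfeucht--Haussler--Kearns--Valiant for the realizable rate, Anthony--Bartlett for the $1/\epsilon^2$ dependence in the agnostic setting) then give the $\Omega(d/\epsilon)$ and $\Omega(d/\epsilon^2)$ lower bounds respectively, with the $\log(1/\delta)$ factor supplied by an independent single-biased-coin construction.

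The main obstacle is the Natarajan Sauer-type bound in the first step: since N-shattering involves a pair of witness labels rather than a single bit, the usual ``project out a coordinate and count both reductions'' recursion must be modified to track which pair of labels is realized. This is exactly where the $|Y|^2$ factor (and hence the $\log|Y|$ gap between the upper and lower bound) enters; closing this gap is in fact a known open problem and is not needed here. Once the growth bound is established, the remaining steps are essentially plug-and-play with the binary PAC toolkit.
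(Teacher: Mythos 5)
The paper does not prove this lemma: it is imported verbatim (and labeled ``Informal'') from the cited reference \citep{bendavid1995characterizations}, so there is no in-paper argument to compare against. Your sketch is a correct reconstruction of the standard proof of that result --- the Natarajan--Sauer growth bound $|H|_S|\le (m|Y|^2)^{\ndim(H)}$, followed by symmetrization/uniform convergence for the upper bounds and an N-shattered-set reduction to binary PAC lower bounds --- and you correctly identify the $\log|Y|$ factor as the source of the gap between the upper and lower bounds, exactly as in the cited work.
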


VC-dimension is a special case of Natarjan-dimension when $|Y| = 2$. In Appendix~\ref{app:examples} we give examples to illustrate the notion of shattering.

Before presenting our main results, we state the following simple baseline, which we prove in Appendix~\ref{sec:other_metrics}.
\begin{theorem}[Arbitrary distance]\label{thm:general-bounds}
    For an arbitrary distance function $\dist \colon V \times V \to \mathbb R$ and a dataset of size $n$, the sample complexity of contrastive learning is $\samp{\dist} = \Theta\pars{\frac{n^2}{\epsilon} \polylogepsdelta}$ in the realizable case and $\sampa{\dist} = \Theta\pars{\frac{n^2}{\epsilon^2} \polylogepsdelta}$ in the agnostic case. Furthermore, the lower bounds hold even if $\dist$ is assumed to be a metric. 
\end{theorem}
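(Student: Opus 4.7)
The plan is to apply Lemma~\ref{lem:natarajan_samples} in the binary regime (each triplet has a $\{+,-\}$ label, so VC-dimension governs the rates) and reduce both the realizable and agnostic bounds to a single combinatorial statement: the VC dimension of the hypothesis class on triplet queries is $\Theta(n^2)$. The realizable bound $\Theta(n^2/\epsilon \cdot \polylogepsdelta)$ and the agnostic bound $\Theta(n^2/\epsilon^2 \cdot \polylogepsdelta)$ then follow directly, and because the lower bound construction will be witnessed by actual metrics, the ``even for metrics'' clause comes for free.

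For the upper bound, I would observe that any symmetric distance function is parameterized by the $\binom{n}{2}$ pairwise values $d_{ij} := \dist(i,j)$, so each hypothesis corresponds to a point $d \in \R^{\binom{n}{2}}$. A triplet $(x,y,z)$ is labeled positively iff $\langle w_{xyz},\, d\rangle < 0$, where $w_{xyz} := e_{\{x,y\}} - e_{\{x,z\}}$ is a fixed vector in $\R^{\binom{n}{2}}$. Dually, shattering a set $T$ of triplets is equivalent to shattering the vectors $\{w_{xyz} : (x,y,z) \in T\}$ by central halfspaces in $\R^{\binom{n}{2}}$; since linear classifiers through the origin in $\R^D$ have VC dimension $D$, this gives $\mathrm{VC} \le \binom{n}{2} = O(n^2)$.

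For the matching lower bound, I would exhibit an explicit shattered family of $\Omega(n^2)$ triplets realized by metrics. Partition $V = A \cup B$ with $|A| = |B| = n/2$, fix $b_1 \in B$, and take $T := \{(a, b_1, b_i) : a \in A,\ 2 \le i \le n/2\}$, so that $|T| = \Theta(n^2)$. Consider metrics obtained by setting all intra-$A$ and intra-$B$ distances to $2$ and each cross-distance to $\dist(a,b) := 1 + \epsilon_{ab}$ for parameters $\epsilon_{ab} \in (0, \tfrac{1}{2})$; a direct check of the handful of triangle-inequality cases shows this is a metric for every choice of the $\epsilon_{ab}$. The label of $(a, b_1, b_i)$ is then controlled solely by the sign of $\epsilon_{ab_1} - \epsilon_{ab_i}$, and because distinct values of $a$ involve disjoint parameter sets, for any target labeling we may set $\epsilon_{ab_1} = \tfrac{1}{4}$ and each $\epsilon_{ab_i} \in \{\tfrac{1}{5}, \tfrac{1}{3}\}$ independently. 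This yields $\mathrm{VC} = \Omega(n^2)$ for the metric subclass, and plugging into Lemma~\ref{lem:natarajan_samples} completes both bounds.

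The main obstacle is the metric requirement in the lower bound: naive shattering constructions that use extreme distances can easily destroy the triangle inequality, while constructions relying on arbitrary non-metric distances would not yield the ``even for metrics'' strengthening. The remedy above --- concentrating all cross-distances in a narrow band around a common value of $1$ and keeping the intra-cluster distances comparable --- decouples the triangle-inequality constraints from the combinatorial freedom carried by the $\epsilon_{ab}$'s, which is what makes the $\Omega(n^2)$ bound go through in the metric class.
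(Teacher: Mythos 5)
Your proposal is correct, and both halves take a genuinely different route from the paper's. For the upper bound, the paper argues combinatorially: among any $n^2$ triplets, some anchor $x$ appears $n$ times; the corresponding pairs $(y_i,z_i)$ form a graph with as many edges as vertices, hence a cycle, and labeling that cycle as a strictly increasing sequence of distances returning to its start is unrealizable. Your argument instead linearizes the problem: each hypothesis is a point $d\in\R^{\binom n2}$ and each triplet a fixed vector $w_{xyz}=e_{\{x,y\}}-e_{\{x,z\}}$, so the labelings realizable by distance functions are a subset of those realizable by homogeneous linear classifiers, giving $\mathrm{VC}\le\binom n2$. (Strictly, only this containment direction is needed, not the ``equivalence'' you state, and one should note that two triplets with $w_{xyz}=w_{x'y'z'}$ can never both sit in a shattered set, so the map to vectors is injective there.) Your route is shorter, yields the sharper constant $n(n-1)/2$, and reuses a standard fact; the paper's cycle argument is more self-contained and exhibits an explicit unrealizable labeling. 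For the lower bound, the paper shatters the family $\{(v_i,v_j,v_{j+1})\}_{i<j}$ by topologically sorting an orientation of a path and setting all distances in $[n,2n]$ so that triangle inequalities are automatic; your bipartite construction with cross-distances in $(1,\tfrac32)$ and intra-distances $2$ achieves the same effect by the same ``narrow band'' trick, but decouples the anchors more cleanly since each anchor $a$ owns a disjoint set of free parameters $\epsilon_{ab}$, making the shattering immediate rather than requiring the acyclicity/topological-sort step. The only cosmetic gap is the paper's standing assumption that all distances are distinct, which your construction (like the paper's) violates on a measure-zero set and which an arbitrarily small generic perturbation repairs without changing any label.
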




\section{Contrastive Learning in \texorpdfstring{$\ell_p$}{lp}-Norm}
\label{sec:l_p_main}

In this section, we show in most cases optimal bounds for the sample complexity of the contrastive learning problem for the $\ell_p$-metric in dimension $\mathbb{R}^d$, for any constant integer $p \geq 1$. 
First, we give the following lower bound, whose proof is deferred to Appendix~\ref{app:lb}. 
Recall that $\rho_p(x,y) = \|f(x) - f(y)\|_p$ for $f \colon V \to \mathbb R^d$.
\begin{theorem}[Lower bound for \texorpdfstring{$\ell_p$}{lp}-distances]
    \label{thm:lp-lb}
    For any real constant $p \in (0, +\infty)$, a dataset $V$ of size $n$, and the $\ell_p$ distance $\dist_p\colon\ V \times V \to \R$ in a $d$-dimensional space,
    the sample complexity of contrastive learning is $\samp{\dist_p}=\Omega\pars{\frac{\min(nd,n^2)}{\epsilon} \polylogepsdelta}$ in the realizable case and $\sampa{\dist_p}=\Omega\pars{\frac{\min(nd,n^2)}{\epsilon^2} \polylogepsdelta}$ in the agnostic case.
\end{theorem}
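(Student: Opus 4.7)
The plan is to reduce to a lower bound on the VC dimension of the associated binary hypothesis class and then invoke Lemma~\ref{lem:natarajan_samples}: since the labels $\T{x}{y}{z}\mapsto \mathbf{1}[\dist_p(x,y)<\dist_p(x,z)]$ are binary, the Natarajan and VC dimensions coincide, and it suffices to exhibit a shattered set of triplets of size $\Omega(\min(nd,n^2))$ for the hypothesis class of $\ell_p$-comparison functions induced by $n$-point embeddings into $\R^d$. The claimed realizable and agnostic sample complexity bounds follow directly from the binary case of Lemma~\ref{lem:natarajan_samples}.

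For the construction, set $k=\min(d,\lfloor n/3\rfloor)$. I use $2k$ of the $n$ points as \emph{pair points} defined by $p_j=e_j$ and $q_j=-e_j\in\R^d$ for $j=1,\ldots,k$, and designate the remaining $n-2k\geq n/3$ points as \emph{anchors} $a_1,\ldots,a_{n-2k}$, whose positions will be chosen depending on the desired labeling. The set I propose to shatter is
\[
    \mathcal{T}=\set{\T{a_i}{p_j}{q_j}\ :\ 1\leq i\leq n-2k,\ 1\leq j\leq k},
\]
of size $(n-2k)\cdot k=\Theta(\min(nd,n^2))$.

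To verify shattering, for any anchor $a\in\R^d$,
\[
    \|a-p_j\|_p^p-\|a-q_j\|_p^p = |a_j-1|^p-|a_j+1|^p,
\]
because all coordinates other than the $j$-th cancel. Since the right-hand side is negative exactly when $a_j>0$, the label of $\T{a}{p_j}{q_j}$ depends only on the sign of the $j$-th coordinate of $a$. Hence any pattern in $\set{+,-}^k$ for a fixed anchor is realized by placing that anchor in the interior of the corresponding orthant of its first $k$ coordinates (with the remaining $d-k$ coordinates set arbitrarily). Since each anchor is chosen independently, every labeling of $\mathcal{T}$ is realizable, so $\mathcal{T}$ is shattered and the VC dimension is at least $(n-2k)k=\Omega(\min(nd,n^2))$.

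The obstacle one would naively anticipate is handling general $p$, since $\ell_p$-bisectors of two generic points are curved hypersurfaces for $p\neq 2$ and would seem to require a separate algebraic-geometric analysis (number of cells carved out by $k$ degree-$p$ surfaces in $\R^d$, general-position arguments, etc.). The construction above sidesteps this entirely by placing each pair symmetrically on a single coordinate axis: the comparison then decouples into the one-dimensional inequality $|a_j-1|^p<|a_j+1|^p$, which is equivalent to $a_j>0$ for \emph{every} $p>0$. So the same construction works uniformly across all $\ell_p$. Plugging the VC dimension lower bound into Lemma~\ref{lem:natarajan_samples} with $|Y|=2$ yields $\samp{\dist_p}=\Omega\pars{\frac{\min(nd,n^2)}{\epsilon}\polylogepsdelta}$ in the realizable case and $\sampa{\dist_p}=\Omega\pars{\frac{\min(nd,n^2)}{\epsilon^2}\polylogepsdelta}$ in the agnostic case.
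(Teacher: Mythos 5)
Your proof is correct and follows essentially the same strategy as the paper's: fix a small set of landmark points at (signed) standard basis vectors, let the remaining points be anchors placed independently according to the desired labeling, and observe that each comparison decouples into a single-coordinate inequality valid for every $p>0$. Your antipodal-pair variant ($e_j$ vs.\ $-e_j$, label decided by $\sign(a_j)$) is a minor and arguably cleaner reshuffling of the paper's construction ($e_1$ vs.\ $e_j$, with the anchor's $j$-th coordinate toggled between $0$ and $1$), and it has the small advantage of covering the $d\ge n$ regime directly rather than deferring it to the arbitrary-metric lower bound.
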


We then show that for integer $p$ this bound can be closely matched by the following upper bounds:

\begin{theorem}[Upper bound for $\ell_p$-distances for integer $p$]
    \label{thm:lp-ub}
    For integer $p$, a dataset $V$ of size $n$, and the $\ell_p$ distance $\dist_p\colon\ V \times V \to \R$ in a $d$-dimensional space,
    the sample complexity of contrastive learning is bounded as follows.
    \begin{compactitem}
        \item
            For even $p$: $\samp{\dist_p} = O\pars{\frac{\min(nd,n^2)}{\epsilon} \polylogepsdelta}$ in the realizable case and $\sampa{\dist_p} = O\pars{\frac{\min(nd,n^2)}{\epsilon^2} \polylogepsdelta}$ in the agnostic case.
        \item
            For odd $p$: $\samp{\dist_p} = O\pars{\frac{\min(nd\log{n},n^2)}{\epsilon} \polylogepsdelta}$ in the realizable case and $\sampa{\dist_2} = O\pars{\frac{\min(nd\log{n},n^2)}{\epsilon^2} \polylogepsdelta}$ in the agnostic case.
        \item
            For constant $d$: $\samp{\dist_p} = O\pars{\frac{n}{\epsilon} \polylogepsdelta}$ in the realizable case and $\sampa{\dist_p} = O\pars{\frac{n}{\epsilon^2} \polylogepsdelta}$ in the agnostic case.
    \end{compactitem}
\end{theorem}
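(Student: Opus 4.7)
The plan is to reduce each of the three upper bounds to a bound on the VC-dimension of the associated hypothesis class and then to apply Lemma~\ref{lem:natarajan_samples}. Since each labeled triplet carries one of two labels (depending on whether $y$ or $z$ is closer to $x$), the Natarajan and VC dimensions coincide, so it is enough to control the VC-dimension. A hypothesis is specified by a map $f\colon V\to\R^d$, i.e., by $N=nd$ real parameters, and a triplet $\T{x}{y}{z}$ is labeled by the sign of
\[
    F_{x,y,z}(f) \;=\; \sum_{i=1}^d |f_i(x)-f_i(y)|^p \;-\; \sum_{i=1}^d |f_i(x)-f_i(z)|^p.
\]
Shattering $m$ triplets requires the $m$ functions $F_{x_j,y_j,z_j}$ to realize all $2^m$ sign patterns as $f$ ranges over $\R^{nd}$, so the task reduces to upper bounding this count.

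For \emph{even} $p$, every $F_{x,y,z}$ is a polynomial of degree exactly $p$ in the $N=nd$ coordinates of $f$. I would apply Warren's theorem (equivalently the Milnor--Thom bound) to conclude that the number of sign patterns realized by $m$ such polynomials is at most $(O(mp/N))^N$. Setting this quantity $\ge 2^m$ and solving gives $m=O(N\log p)=O(nd)$ for constant $p$, i.e.\ the VC-dimension is $O(nd)$. Combining with the $O(n^2)$ baseline from Theorem~\ref{thm:general-bounds} and plugging into Lemma~\ref{lem:natarajan_samples} yields the claimed $O(\min(nd,n^2))$ sample complexity in both realizable and agnostic regimes.

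For \emph{odd} $p$, the obstacle is that $|a|^p$ is not a polynomial. I would linearize via the identity $|a|^p=\sign(a)\cdot a^p$: once the sign of each linear form $f_i(u)-f_i(v)$ is fixed for every coordinate $i$ and every pair $(u,v)$ appearing in the sample, each triplet condition becomes a polynomial inequality of degree $p$. The arrangement of the $O(dm)$ linear sign-determining polynomials and the $m$ degree-$p$ triplet polynomials, all in $N=nd$ variables, admits by Warren's theorem at most $(O(dmp/N))^N$ sign patterns. Requiring this to be $\ge 2^m$ yields $m=O(nd\log(m/n))$, and a standard self-referential resolution gives VC-dimension $O(nd\log n)$ for constant $p$. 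Combining with the $O(n^2)$ baseline again gives the claimed bound.

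Finally, for \emph{constant} $d$, the preceding analysis already gives $O(n)$ for even $p$ but loses a spurious $\log n$ for odd $p$. To remove it I would appeal to a Goldberg--Jerrum-style bound for concept classes computable by bounded-size algebraic computation trees: our hypothesis uses $N=nd=O(n)$ real parameters and, for each triplet, a constant number of arithmetic operations and sign tests (since $d$ and $p$ are constants), which yields VC-dimension $O(N)=O(n)$ uniformly in the parity of $p$. I expect the main technical subtlety to lie precisely in the odd-$p$/constant-$d$ combination, since treating $|a|^p$ as piecewise polynomial is what introduces an avoidable $\log$ in the generic Warren analysis and is what the algebraic-computation viewpoint is designed to absorb.
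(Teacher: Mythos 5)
Your proposal is correct and follows the same strategy as the paper: reduce each bullet to a VC-dimension bound (Lemma~\ref{lem:natarajan_samples}) and control the number of realizable sign patterns of the triplet functions via Warren's theorem. The even-$p$ case is identical to the paper's. For odd $p$ the paper conditions on the coordinate-wise \emph{orderings} of the $n$ points, enumerating the at most $(n!)^d<n^{nd}$ orderings and applying Warren once per ordering, whereas you condition on the signs of the $O(dm)$ linear forms $f_i(u)-f_i(v)$; this is the same idea, and your count is in fact slightly sharper ($O(nd\log d)$ versus $O(nd\log n)$) because only the pairs appearing in the sample are ordered. One caveat: as written you place ``the $m$ degree-$p$ triplet polynomials'' into a single Warren arrangement alongside the linear forms, but those $m$ functions are not globally polynomials --- which polynomial a triplet condition equals depends on the signs of the linear forms, so there is no single family of $m$ polynomials to feed to Warren. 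You need the two-stage version: bound the number of sign patterns of the $O(dm)$ linear forms by Warren, and for each such pattern bound the sign patterns of the now well-defined $m$ degree-$p$ polynomials by a second application of Warren; the product $(O(m/n))^{nd}\cdot(O(pm/(nd)))^{nd}$ then yields exactly your recurrence $m=O(nd\log(m/n))$. For constant $d$ the paper instead replaces each triplet constraint by $2^{2d}$ signed polynomials and applies Warren once, obtaining VC dimension $O(nd^2)$; your Goldberg--Jerrum route for classes decided by constant-size algebraic computations with $nd$ parameters is a black-box packaging of essentially the same argument (its proof is Warren/Milnor--Thom) and gives the same $O(nd\cdot d)=O(n)$ bound for constant $d$. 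Both routes are valid.
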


\begin{proof}
    For the first two cases, we assume $d < n$, since otherwise, the bounds follow from Theorem~\ref{thm:general-bounds}.
    Our proof uses the following result of~\citet{W86} from algebraic geometry:
    \begin{fact}[\cite{W86}]
        \label{thm:polynomials}
        Let $m \geq \ell \geq 2$ be integers, and let $P_1,\dots,P_m$ be real polynomials on $\ell$ variables, each of degree $\leq k$.
        Let
        \[
            U(P_1,\dots,P_m) = \set{\vec{x} \in \R^\ell \mid P_i(\vec{x}) \neq 0 \text{ for all } i \in [m]}
        \]
        be the set of points $x \in \mathbb{R}^\ell$ which are non-zero in all polynomials.
        Then the number of connected components in $U(P_1,\dots,P_m)$ is at most $(4ekm/\ell)^\ell$.
    \end{fact}
    
    In order to bound the sample complexity of the contrastive learning problem, by Lemma~\ref{lem:natarajan_samples} it suffices to bound its Natarajan dimension, which in the binary case is equivalent to the VC dimension.
    In particular, we show that for a dataset $V$ of size $n$, the VC-dimension of contrastive learning for $\ell_p$-distances in dimension $d$ is $O(n \min(d, n))$ for even $p \geq 2$ and  $O(nd\log n)$ for odd $p \geq 1$.
    For this it suffices to show that for every set of queries $Q = \{\T{u_i}{v_i}{w_i}\}_{i=1}^m$ of size $m=\tilde \Omega(n \min(d, n))$, there exists labeling of $Q$ that cannot be satisfied by any embedding in a $d$-dimensional $\ell_p$-space. We split the analysis into cases for even $p$ and odd $p$. 
    
    \paragraph{Upper Bound for Even $p$.}
    We associate each data point $v \in V$ with a set of $d$ variables $x_1(v),\dots,x_d(v)$, corresponding to coordinates of $v$ after embedding in the $d$-dimensional space.
    Let $\vec{x} = (x_1(v_1),\dots,x_d(v_1),\dots,x_1(v_n),\dots,x_d(v_n))$.
    For every constraint $\T uvw$, we define the polynomial $P_{u,v,w}\colon\ \R^{nd} \rightarrow \R$ as \[P_{\T uvw}(\vec{x}) = \sum_{j=1}^d (x_j(u) - x_j(v))^p - \sum_{j=1}^d  (x_j(u) - x_j(w))^p.\]
    Note that $P_{\T uvw}(\vec{x}) < 0$ if and only if the constraint $\LT{u}{v}{w}$ is satisfied, and $P_{\T uvw}(\vec{x}) > 0$ if and only if the constraint $\LT{u}{w}{v}$ is satisfied.
    Finally, we define $P_1,\dots,P_m$ for $i \in [m]$ as $P_i(\vec{x}) = P_{u_i,v_i,w_i}(\vec{x})$.
    
    For any labeling $\vec{Q}$ of the queries $Q$ and for any $i \in [m]$, we define $s_i(\vec{Q})=1$ if the $i$'th query $\T{u_i}{v_i}{w_i} \in Q$ is labeled as $\LT{u_i}{v_i}{w_i}$, and $s_i(\vec{Q})=-1$ otherwise.
    
    We make the following observations, which are proven in Section~\ref{app_missing_obs}:
    
    \begin{lemma}
        \label{obs:constraints_and_components}
        For $s = (s_1,\dots,s_m) \in \{-1,1\}^m$, define $C_s = \{\vec{x} \in \R^d \mid \sign P_i(\vec{x}) = s_i \text{ for all $i$}\}$.
        Then:
        \begin{compactenum}
            \item For distinct $s,s' \in \{-1,1\}^m$ we have $C_s \cap C_{s'} = \emptyset$.
            \item Each $C_s$ is either empty or is a union of connected components of $U(P_1,\dots,P_m)$.
            \item Let $\vec{Q}$ be a labeling of $Q$. Then $C_{s(\vec{Q})} \neq \emptyset$ if and only if there is a mapping $V \to \R^d$  satisfying all the distance constraints of $\vec{Q}$.
        \end{compactenum}
    \end{lemma}
    We now complete the proof for the case of even $p$.
    Defining $P_1, \ldots, P_m$ as above, by Theorem~\ref{thm:polynomials}, there are at most $(\nicefrac{4 e p m}{nd})^{nd}$ connected components in the set $U(P_1,\dots,P_m)$.
    Since for two labelings $\vec{Q}_1,\vec{Q}_2$ of $Q$ it holds that $s(\vec{Q}_1) \neq s(\vec{Q}_2)$, then by Lemma~\ref{obs:constraints_and_components} either the sets $C_{s(\vec{Q}_1)}$ and $C_{s(\vec{Q}_2)}$ are different connected components, or at least one of them is empty.
    Moreover, the number of possible labelings of $Q$ is $2^m$, which is greater than $(\nicefrac{4 e p m}{nd})^{nd}$ when choosing $m \ge c nd$ for a sufficiently large constant $c$.
    Therefore, for at least one labeling $\vec{Q}$ it holds that $C_{s(\vec{Q})} = \emptyset$.
    Since there is no embedding satisfying the distance constraints of $\vec{Q}$, the claim follows.

    \paragraph{Upper bound for odd $p$.}

    In the case of odd $p$ it suffices to show that for a dataset of size $n$ in dimension $d$, the VC-dimension of contrastive learning for $\ell_p$-distances is $O(n d \log n)$.
    Unlike in the even $p$ case, our distance constraints are comprised of sums of functions of the form $|x_j(u) - x_j(v)|^p$, and are thus not polynomial constraints. On the other hand, we note that if we have some fixed ordering of the points w.r.t. each coordinate, these constraints become polynomials. To address the issue, we enumerate all possible choices for the ordering of the points w.r.t. each coordinate and show that there is a labeling $\vec{Q}$ such that no embedding satisfies it, regardless of the ordering.
    
    Similarly to the case of even $p$, we associate with each $v \in V$ a set of $d$ variables $x_1(v),\dots,x_d(v)$.
    Let $\vec{x} = (x_1(v_1),\dots,x_d(v_1),\dots,x_1(v_n),\dots,x_d(v_n))$.
    For every coordinate $i \in [d]$, we fix the ordering of the points w.r.t. this coordinate, i.e. we fix a permutation $\pi^{(i)} \colon\ [n] \to [n]$ such that $x_i(v_{\pi^{(i)}(1)}) \le x_i(v_{\pi^{(i)}(2)}) \le \dots \le x_i(v_{\pi^{(i)}(n)})$, and bound the number of satisfiable labelings which respect this ordering. 
    We define $\sigma^{(i)}_{uv} = 1$ if $x_i(u) \ge x_i(v)$ and $\sigma^{(i)}_{uv} = -1$ otherwise, and refer to $\sigma$ as the \emph{order}.
    Then, for any constraint $\T uvw$, define the polynomial $P_{u,v,w}\colon\ \R^{nd} \rightarrow \R$:
    \[
        P_{\T uvw}(\vec{x}) = \sum_{j=1}^d \sigma^{(j)}_{uv} (x_j(u) - x_j(v))^p - \sum_{j=1}^d \sigma^{(j)}_{uw} (x_j(u) - x_j(w))^p.
    \]
    Note that $P_{\T uvw}(\vec{x}) < 0$ if and only if the constraint $\LT{u}{v}{w}$ is satisfied for the selected order $\pi^{(1)},\dots,\pi^{(d)}$ (and likewise for $P_{\T uvw}(\vec{x}) > 0$ and $\LT{u}{w}{v}$).
    Similarly to the case of even $p$, we define $P_1,\dots,P_m$ for $i \in [m]$ as $P_i(\vec{x}) = P_{u_i,v_i,w_i}(\vec{x})$,
    and for any labeling $\vec{Q}$ of queries $Q$, for all $i \in [m]$, we define $s_i(\vec{Q})=1$ if the $i$'th query $\T{u_i}{v_i}{w_i} \in Q$ is labeled as $\LT{u_i}{v_i}{w_i}$, and $s_i(\vec{Q})=-1$ otherwise.
    
    As in the case of even $p$, by Fact~\ref{thm:polynomials}, there are at most $(\nicefrac{4 e p m}{nd})^{nd}$ connected components in the set $U(P_1,\dots,P_m)$. Therefore, there are at most $(\nicefrac{4 e p m}{nd})^{nd}$ choices of labels which are satisfiable in a manner that respects the ordering $\pi^{(1)},\dots,\pi^{(d)}$.
    
    Taking $m = \Omega(nd \log n)$, we have that $\frac{2^{m}}{n^{nd}} > \pars{\frac{4 e p m}{nd}}^{nd}$.
    Since there are $(n!)^d < n^{nd}$ possible choices of the permutations $\pi^{(1)}, \dots, \pi^{(n)}$ and hence at most as many orders $\sigma$, there are at most $(\nicefrac{4 e p m}{nd})^{nd} \cdot n^{nd}$ possible labelings for which there exists some order such that the set of constraints is satisfiable and respects this order. Since $(\nicefrac{4 e p m}{nd})^{nd} \cdot n^{nd}$ is less than $2^m$, there exists a choice of a labeling $\vec{Q}$ for which the constraints are not satisfiable for any order, i.e. $Q$ is not shattered.

    \noindent\textbf{Upper Bound for Constant $d$.}
    We will proof the following statement: for constant odd $p \in (0, +\infty)$, a dataset $V$ of size $n$, and the $\ell_p$ distance $\dist_p\colon\ V \times V \to \R$ in a $d$-dimensional space, the VC dimension of contrastive learning is $O(nd^2)$.
    This gives optimal bound for constant $d$.
    
    Similarly to the proof above, we assume that $d^2 < n$.
    We then prove the sample complexity bound by bounding the VC dimension of the problem by $O(nd^2)$, and obtain the sample bounds using Theorem~\ref{lem:natarajan_samples}.

    Similar to previous cases, we associate with each $v \in V$ a set of $d$ variables $x_1(v),\dots,x_d(v)$.
    Let $\vec{x} = (x_1(v_1),\dots,x_d(v_1),\dots,x_1(v_n),\dots,x_d(v_n))$.

    We associate each constraint with $2^{2d}$ polynomials in the following manner: for any constraint $\T uvw$ and any choice $\tau \in \{-1,1\}^{2d}$, we define the polynomial $P_{\tau,u,v,w}\colon\ \R^\ell \rightarrow \R$ as
    \[
        P_{\tau,u,v,w}(\vec{x}) = \sum_{j=1}^d \tau(j)(x_j(u) - x_j(v))^p - \sum_{j=1}^d  \tau(d+j)(x_j(u) - x_j(w))^p
    ,\]
     where $\tau(j)$ is the value of the $j$'th coordinate of $\tau$.
    
    We note that the sign pattern of the set of polynomials $\{P_{\tau,u,v,w}(\vec{x}) \gtrless 0\}_{\tau \in \{0,1\}^{2d}}$ determines the sign of $\sum_{j=1}^d |x_j(u) - x_j(v)|^p - \sum_{j=1}^d  |x_j(u) - x_j(w)|^p.$
    Therefore, for different choices of signs for the constraints, the solutions satisfying them must be contained in different connected components of $U(P_1,\dots,P_{2^{2d}\cdot m})$.
    
    By Fact~\ref{thm:polynomials}, there are at most $(\nicefrac{4 e p 2^{2d}m}{nd})^{nd}$ connected components in the set $U(P_1,\dots,P_{2^{2d} \cdot m})$. Therefore, taking $m \geq cnd^2$ for sufficiently large $c$ (depending on $p$), we get that $2^m > (\nicefrac{4 e p 2^{2d}m}{nd})^{nd}$, meaning there is a choice of signs for which there is no solution, i.e. these samples cannot be shattered.     



\end{proof}

\section{Extensions}

\paragraph{Reducing dependence on $n$}
While our results show that the dependence on $n$ is necessary, in practice it can be avoided by making additional assumptions. An extremely popular assumption is that the existence of $k \ll n$ latent classes in the data (see e.g.~\citet{SPAKK19}). In this case, one can consider using an unsupervised 
clustering algorithm (e.g. a pretrained neural network) to partition the points into $k$ clusters and then apply our results to classes instead of individual points, effectively replacing $n$ with $k$ in all our bounds.

\noindent\textbf{Extension to $k$ negative samples: }In this extension, we consider a setting in which each sample is a tuple $(x,x_1,\dots,x_{k+1}) \in V^{k+2}$, and a labeling $(x, x_i^+, x_1^-, \ldots, x_{i-1}^-, x_{i+1}^-, \ldots, x_{k+1}^-)$ is a choice of an example $x_i$ which minimizes $\min_{j \in [k+1]}\dist(x,x_j)$, i.e. $\dist(x,x_i) < \dist(x,x_j)$ for all $j \neq i$. As an immediate corollary to our results, we obtain a bound on the sample complexity for the $k$ negative setting (in fact, our results for all distance functions considered in the paper extend to this setting, see proof in Appendix~\ref{app:k-negatives}):

\begin{theorem}\label{thm:k-negatives}
For a constant $p$, a dataset $V$ of size $n$, and the $\ell_p$ distance $\dist_p\colon\ V \times V \to \R$ in a $d$-dimensional space, the sample complexity of contrastive learning $\ell_p$-distances with $k$ negatives is bounded as follows.
    \begin{compactitem}
        \item
            For even $p$ in the realizable case:
            $\samp[k]{\dist_p} = O\pars{\frac{\min(nd,n^2)\log(k + 1)}{\epsilon}\polylogepsdelta}$
            and
            $\samp[k]{\dist_p} = \Omega\pars{\frac{\min(nd,n^2)}{\epsilon}\polylogepsdelta}$.
        \item For even $p$ in the agnostic case:
            $\sampa[k]{\dist_p} = O\pars{\frac{\min(nd,n^2)\log{(k+1)}}{\epsilon^2} \polylogepsdelta}$
            and 
            $\sampa[k]{\dist_p} = \Omega\pars{\frac{\min(nd,n^2)}{\epsilon^2} \polylogepsdelta}$.
        \item
            For odd $p$ in the realizable case:
            $\samp[k]{\dist_p} = O \pars{\frac{\min(nd\log{n},n^2) \log(k + 1)}{\epsilon}\polylogepsdelta}$
            and
            $\samp[k]{\dist_p} = \Omega \pars{\frac{\min(nd,n^2)}{\epsilon}\polylogepsdelta}$.
        \item
            For odd $p$ in the agnostic case:
            $\sampa[k]{\dist_p} = O \pars{\frac{\min(nd\log{n},n^2) \log(k + 1)}{\epsilon^2}\polylogepsdelta}$
            and
            $\sampa[k]{\dist_p} = \Omega \pars{\frac{\min(nd,n^2)}{\epsilon^2}\polylogepsdelta}$.
    \end{compactitem}
\end{theorem}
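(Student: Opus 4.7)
The plan is to reduce both directions of Theorem~\ref{thm:k-negatives} to the $k=1$ bounds already established in Theorems~\ref{thm:lp-ub} and~\ref{thm:lp-lb}. The upper bound will come from showing that the Natarajan dimension of the $k$-negative hypothesis class $\hyp_k$ is dominated by the VC dimension of the triplet class $\hyp_1$, with the extra $\log(k+1)$ factor supplied by the $\log|Y|$ term of Lemma~\ref{lem:natarajan_samples} (since the label space here has size $k+1$). The lower bound will come from a padding reduction that embeds the hard triplet distribution of Theorem~\ref{thm:lp-lb} into a distribution over $k$-tuples.

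For the upper bound, the core step is to prove $\ndim(\hyp_k) \le \mathrm{VCdim}(\hyp_1)$. Suppose $A$ is $N$-shattered by $\hyp_k$ with witness functions $f_1, f_2$ satisfying $f_1(a) \ne f_2(a)$ for each $a \in A$. For each $a = (x, x_1, \ldots, x_{k+1}) \in A$, I would associate the triplet $t(a) = \T{x}{x_{f_1(a)}}{x_{f_2(a)}}$. For any $B \subseteq A$, the distance function witnessing the $N$-shattering of $B$ makes $x_{f_1(a)}$ the nearest candidate to $x$ for $a \in B$ (so in particular strictly closer than $x_{f_2(a)}$), and makes $x_{f_2(a)}$ nearest for $a \notin B$. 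This is exactly a VC-shattering of $\{t(a)\}_{a \in A}$ by $\hyp_1$, so $|A| \le \mathrm{VCdim}(\hyp_1)$. Plugging the VC dimension bounds from the proof of Theorem~\ref{thm:lp-ub}, namely $O(\min(nd, n^2))$ for even $p$ and $O(\min(nd\log n, n^2))$ for odd $p$, into Lemma~\ref{lem:natarajan_samples} then yields all four stated upper bounds (even/odd $p$, realizable/agnostic), with the $\log(k+1)$ factor appearing uniformly.

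For the lower bound, I would start from the hard triplet distribution of Theorem~\ref{thm:lp-lb} supported on $V_0 \subseteq V$ with $|V_0| = n-1$, and lift it to a distribution over $k$-tuples by padding each triplet $\T{x}{y}{z}$ with $k-1$ copies of a single distinguished dummy point $v^\star \in V \setminus V_0$. In the realizable case I would extend the witness $\ell_p$-embedding by placing $v^\star$ far enough from the image of $V_0$ (e.g.\ along an extra coordinate when $d \ge 2$, or simply far along the line when $d = 1$) so that $v^\star$ is never the nearest neighbor of the anchor. The $k$-negative label is then fully determined by the underlying triplet comparison, so any algorithm for the $k$-negative problem simultaneously solves the embedded triplet problem with the same sample complexity, inheriting the $\Omega(\min(nd, n^2))$ bound. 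The agnostic lower bound follows from the identical lifting applied to the adversarial triplet distribution used in the agnostic case of Theorem~\ref{thm:lp-lb}.

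The main obstacle I anticipate is the Natarajan-to-VC reduction: one must argue that if the map $a \mapsto t(a)$ were non-injective on a shattered set $A$, the shattering condition would immediately force contradictory labels on the colliding triplet (consider $B = \{a\}$ versus $B = \{a'\}$ when $t(a) = t(a')$), so injectivity is automatic and the triplet VC dimension genuinely upper-bounds $\ndim(\hyp_k)$. Beyond this point, the padding construction is routine, but a quick sanity check is needed to confirm that the dummy point remains strictly farthest in the $\ell_p$ norm across all regimes of $d$.
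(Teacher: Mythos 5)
Your proposal matches the paper's argument (Theorem~\ref{thm:k-negatives-general}): the upper bound via the observation that an $N$-shattered set of $k$-negative tuples induces, through the two witness labels, a VC-shattered set of triplets of the same size, with the $\log(k+1)$ factor coming from $\log|Y|$ in Lemma~\ref{lem:natarajan_samples}; and the lower bound via padding the hard triplet instance with far-away dummy points placed so they are never the nearest candidate. The only cosmetic difference is that you pad with $k-1$ copies of a single dummy point while the paper uses $k-1$ distinct outliers (and you add the injectivity check for $a\mapsto t(a)$, which the paper leaves implicit); both variants are fine.
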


\noindent\textbf{Well-separated samples:}
In this extension, we focus on the most popular $\ell_2$-distance and consider an ``approximate'' setting, where the positive and negative samples are guaranteed to be well-separated, i.e. their distance from the anchor differs by at least a multiplicative factor. This scenario is highly motivated in practice since the positive example is typically sampled to be much closer to the anchor than the negative example. We show that in this setting the sample complexity can be improved to be almost independent of $d$.\footnote{Disregarding $\polylog$ factors in $d$.} 

For a parameter $0 < \alpha < 1$, we assume that our sample distribution $\cD$ has the following property: If $\LT xyz \in \mathrm{supp}(\cD)$, then $\dist(x,z) > (1+\alpha)\dist(x,y)$, or $\dist(x,y) > (1+\alpha)\dist(x,z)$.
We call a distribution with this property \emph{well-separated}.
We show that for the $\ell_2$-distance, the sample complexity of the problem in this setting
is between $\widetilde{\Omega}_{\epsilon,\delta}(n/\alpha)$ and $\widetilde{O}_{\epsilon,\delta}(n/\alpha^2)$. This result is proven in Appendix~\ref{sec:app_well_separated}.


\begin{theorem}\label{thm:approx-bounds}
The sample complexity of $(1 + \alpha)$-separate contrastive learning for $\ell_2$ distance is
        $\samptilde{\dist_2} = \widetilde{O}_{\epsilon,\delta}\pars{\frac{n}{\alpha^2}}
        \text{ and }
       \samptilde{\dist_2} = \widetilde{\Omega}_{\epsilon,\delta}\pars{\frac{n}{\alpha}}$.
       \end{theorem}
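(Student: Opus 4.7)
\textbf{Plan for Theorem~\ref{thm:approx-bounds}.} I would handle the two bounds separately.

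\emph{Upper bound $\widetilde O(n/\alpha^2)$.} My plan is to reduce the well-separated problem to ordinary $\ell_2$ contrastive learning in a low target dimension via the Johnson--Lindenstrauss lemma, and then invoke Theorem~\ref{thm:lp-ub}. Fix $d' = \Theta(\log n / \alpha^2)$. For any candidate embedding $f : V \to \mathbb{R}^d$, there exists a linear map $\pi : \mathbb{R}^d \to \mathbb{R}^{d'}$ that preserves every pairwise distance among $\{f(v) : v \in V\}$ up to a factor of $1 \pm \alpha/4$. For any triplet $\LT xyz$ in $\mathrm{supp}(\cD)$, well-separation guarantees $\dist(x,z) > (1+\alpha)\dist(x,y)$; because $(1-\alpha/4)(1+\alpha) > (1+\alpha/4)$ for $\alpha \in (0,1)$, no label correctly realized by $f$ can flip under $\pi \circ f$. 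Consequently, the optimum error over $\ell_2$-embeddings of $V$ into $\mathbb{R}^{d'}$ is no larger than that over embeddings into $\mathbb{R}^d$. Running ERM over the reduced class and invoking Theorem~\ref{thm:lp-ub} with $p=2$ and dimension $d'$ gives sample complexity $O(nd')$, which is $\widetilde O(n/\alpha^2)$ in both the realizable and agnostic cases.

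\emph{Lower bound $\widetilde\Omega(n/\alpha)$.} My plan is to instantiate the $\Omega(nd)$ lower bound of Theorem~\ref{thm:lp-lb} at dimension $d_0 = \lfloor 1/\alpha \rfloor$, producing a shattered set of size $\Omega(n d_0) = \Omega(n/\alpha)$, and ensure that the witness triplets satisfy the $(1+\alpha)$-separation. Concretely, I would take the configuration of anchors and candidates underlying the proof of Theorem~\ref{thm:lp-lb} in $\mathbb{R}^{d_0}$ and rescale/perturb it so that for every triplet the two candidate distances from the corresponding anchor differ by a factor of at least $1+\alpha$, while still permitting all $2^{\Omega(n d_0)}$ labelings to be realized by some $\ell_2$-embedding. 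Since every labeled triplet in the supporting distribution is then $(1+\alpha)$-separated, Lemma~\ref{lem:natarajan_samples} converts the VC-dimension bound into the stated sample complexity lower bound.

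\emph{Main obstacle.} The delicate step is the lower bound: ensuring that a rescaling of the hard instance from Theorem~\ref{thm:lp-lb} simultaneously achieves the $(1+\alpha)$-gap on every triplet in the shattered set while still allowing arbitrary sign patterns to be realized. The upper bound, by contrast, is a clean reduction: the only quantitative ingredient is that the JL distortion $1\pm\alpha/4$ is strictly smaller than the $(1+\alpha)$-separation gap, after which Theorem~\ref{thm:lp-ub} delivers the bound.
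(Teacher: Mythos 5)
Your plan coincides with the paper's proof on both sides. The upper bound is exactly the paper's argument: a Johnson--Lindenstrauss reduction to dimension $d_1=\Theta(\log n/\alpha^2)$, using that the JL distortion (the paper takes $\beta=\alpha/7$, you take $\alpha/4$) is strictly dominated by the $(1+\alpha)$ separation so that every realizable labeling remains realizable (with separation $1+\alpha/2$) after projection, followed by the $O(nd_1)$ bound of Theorem~\ref{thm:lp-ub}. The lower bound is also the paper's route --- instantiate the shattered set of Theorem~\ref{thm:lowerbound} at dimension $d_0=\Theta(1/\alpha)$ --- but the step you flag as the main obstacle requires no rescaling or perturbation at all: in that construction every coordinate of every embedded point lies in $[0,1]$, and for each query $\T{x_i}{v_1}{v_j}$ the two squared distances $\|\map(x_i)-\map(v_1)\|_2^2$ and $\|\map(x_i)-\map(v_j)\|_2^2$ differ by exactly $1$ while the smaller one is at most $d_0$. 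Hence the larger distance exceeds the smaller by a factor of at least $\sqrt{1+1/d_0}\ge 1+c/d_0\ge 1+\alpha$ for a suitable constant $c$ and $d_0=\lceil c/\alpha\rceil$, so every labeling of the $\Omega(nd_0)=\Omega(n/\alpha)$ queries is realized by a $(1+\alpha)$-separated instance as is, and Lemma~\ref{lem:natarajan_samples} finishes the argument.
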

\noindent\textbf{Contrastive Learning for Other Distance Functions:}
In Appendix~\ref{sec:other_metrics}, we show near-tight sample complexity bounds for contrastive learning in other distance functions, such as tree metrics, cosine similarity, class distances, and prove tight general bounds for arbitrary metrics. We also extend our results to quadruplet contrastive learning.

\section{Conclusion}
In this paper we have given a theoretical analysis of contrastive learning, focusing on the generalization in the PAC-learning setting. Our results show (in most cases) tight dependence of the sample complexity on the dimension of the underlying representation.
It remains open whether our results can be made optimal for $\ell_p$-distances for odd $p$.
It is also open how the results can be improved via a suitable choice of batched adaptively chosen comparisons (see e.g.~\citet{ZCK15}).
Note that in the non-batched adaptive query setting optimal bounds on the query complexity are known to be $\Theta(n \log n)$~\citep{KLW96,EK18}. On the experimental side, while we show asymptotic convergence between our theory and practice, it might be worth looking into further refinements of our approach which can better capture the fact that the constant ratio between the observed and predicted values tends to be small.

\bibliographystyle{iclr2024_conference}
\bibliography{main}

\newpage
\appendix

\section{Example} \label{app:examples}

\paragraph{Example and notation} We are given 4 data points: images of a cat, a rat, a plane, and a train, which we denote as $V = \{c,r,p,t\}$.
We consider the realizable case, i.e. there exists a ground truth metric $\dist$ on these points (e.g. general or $\ell_p$) with error rate $\epsilon^* = 0$.
For this example, we will assume that we embed these points into $\mathbb R$, i.e. $d = 1$.
Hence the hypothesis class $\hyp$ is just a class of all Euclidean metrics on $\R$.


For these data points, there are $12$ possible input samples, corresponding to 4 choices of an anchor and $3$ choices of the two other query points.
For this example, let $\cD$ be the distribution of samples $\LT xyz$ such that each input $\T xyz$ is sampled uniformly, and the label is decided according to which of $y$ and $z$ is closer to $x$ according to $\dist$.
For example, if the input is $(c,r,p)$, we expect the ground truth $\dist$ to be that the cat is closer to the rat rather than to the plane, which we write as $(c, r^+, p^-)$.
Hence, this version of the problem is just a binary classification problem.


\paragraph{Error rate}
If, for example, the ground-truth embedding $\dist$ was $r \to 0, c \to 1, p \to 3, t \to 10$, then, assuming the distribution on the inputs is uniform, we achieve the error rate $\epsilon = \frac 16$, since only 2 of 12 queries - $(p, c^+, r^-)$ and $(p, c^+, r^-)$ - are not satisfied.

\paragraph{Sampling mechanism and learning scheme}
Since we have a binary classification problem, without a training set, we can only predict randomly, which gives the error rate $\epsilon = \frac 12$.
Now, suppose that we have a single training example $(c, r^+, p^-)$ sampled from $\cD$.
The algorithm that achieves the best worst-case guarantee is an empirical risk minimizer (ERM), which seeks to find the embedding satisfying the maximum number of constraints.
In this case, the constraint is satisfiable e.g. by embedding $c \to 0, r \to 1, p \to 3, t \to 10$, since $|c - r| = 1 < |c - p| = 3$.

\paragraph{Example of Shattering}
As an example, the set $S = \{\T crp,\T cpt, \T crt \}$ is not shattered since for the labeling $\{\LT crp,\LT cpt, \LT ctr \}$ there exists no embedding such that $\dist(c,r) < \dist(c,p),\dist(c,p) < \dist(c,t)$, and $\dist(c,r) > \dist(c,t)$.
On the other hand, the set $S' = \{\T cpt,\T rpt\}$ is shattered: a labeling of $S'$ determines for each of $c,r$ which point is closer between $p$ and $t$.
For any labeling of $S'$, we can embed $p,t$ arbitrarily and embed each of $c,r$ closer to either $p$ or $t$ according to the labeling, thus satisfying the constraints imposed by the labeling.


\section{Missing Proofs from Section \ref{sec:l_p_main}}
\label{app_missing_obs}
\begin{proof}[Proof of Lemma~\ref{obs:constraints_and_components}]
    $ $
    \begin{enumerate}
        \item
        If $\vec{x} \in C_s$ and $\vec{y} \in C_{s'}$ for and $s$ and $s'$ which differ in index $j$, we know that $\sign P_j(\vec{x}) \ne \sign P_j(\vec{y})$, and hence $\vec{x} \ne \vec{y}$.
        Hence, $C_{s}$ and $C_{s'}$ are disjoint.

        \item
        First note that all $C_s$ are open since polynomials are continuous functions, and preimages of open sets $(-\infty, 0)$ and $(0, +\infty)$ are open.
        Moreover, $\cup_s C_s = U(P_1, \ldots, P_m)$ and all $C_s$ are disjoint, and hence $\{C_s\}_s$ is a partition of  $U(P_1, \ldots, P_m)$.
        By definition of a connected set, it’s impossible to partition a connected component of $U(P_1, \ldots, P_m)$ into multiple open sets, and hence each $C_s$ must be a union of some connected components of $U(P_1, \ldots, P_m)$.

        \item
        Follows from our choice of polynomials: $P_{(u,v,w)}(\vec{x}) > 0$ if $\vec{x}(u)$ is closer to $\vec{x}(v)$ than to $\vec{x}(w)$, and $P_{(u,v,w)}(\vec{x}) < 0$ otherwise.
        \qedhere
    \end{enumerate}
\end{proof}

\subsection{Missing Lower Bound of Section~\ref{sec:l_p_main}}\label{app:lb}

Recall that our goal is to learn a distance function $\dist:\ V \times V \to \R$ which can be produced by some allocation of points in the $d$-dimensional Euclidean space.
In other words, the set of labeled queries $\{(x_i, y_i, z_i)\}_i$ is realizable if there exists a mapping $\map\colon\ V \to \R^d$ such that $\|\map(x_i) - \map(y_i)\|_2 < \|\map(x_i) - \map(z_i)\|_2$.

\begin{theorem}\label{thm:lowerbound}
    For a dataset of size $n$, the VC-dimension of contrastive learning for $\ell_p$-distances for any $p \in (0, +\infty)$ in dimension $d$ is $\Omega(\min(n^2, nd))$
\end{theorem}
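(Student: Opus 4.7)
The plan is to exhibit an explicit shattered set of triples of size $\Omega(\min(nd,n^2))$ by an anchor/sign construction. Set $d' = \min(d, \lfloor n/4 \rfloor)$ and partition $V$ as $V = A \cup B \cup W$, with $A = \{a_1,\dots,a_{d'}\}$, $B = \{b_1,\dots,b_{d'}\}$, and $W = \{v_1,\dots,v_{n-2d'}\}$. I fix the images of the anchors independently of the target labeling by setting $\map(a_j)=e_j$ and $\map(b_j)=-e_j$ in the first $d'$ coordinates of $\R^d$ (padding with zeros in the remaining coordinates). The candidate shattered set is
\[
T = \set{\T{v_i}{a_j}{b_j} : 1 \le i \le n-2d',\ 1 \le j \le d'},
\]
of cardinality $(n-2d')\,d'$.

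To show that $T$ is shattered, I would take any target labeling, encoded as a sign pattern $\sigma \in \{-1,+1\}^{(n-2d') \times d'}$, and define $\map(v_i) = \sum_{j=1}^{d'} \sigma_{i,j}\,e_j$. The central one-line calculation is that all coordinates other than the $j$-th contribute identically to the two $\ell_p$ distances, so
\[
\|\map(v_i)-\map(a_j)\|_p^p - \|\map(v_i)-\map(b_j)\|_p^p = |\sigma_{i,j}-1|^p - |\sigma_{i,j}+1|^p,
\]
which equals $-2^p$ when $\sigma_{i,j}=+1$ and $+2^p$ when $\sigma_{i,j}=-1$. Hence the triple $\T{v_i}{a_j}{b_j}$ is labeled $\LT{v_i}{a_j}{b_j}$ iff $\sigma_{i,j}=+1$, so any of the $2^{(n-2d')d'}$ labelings of $T$ is realized by some embedding. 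This holds for every real $p>0$, since the cancellation uses only the coordinate-wise definition of the $\ell_p$ norm.

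A short case analysis on $d'$ then yields the claimed bound: if $d \le n/4$ take $d' = d$, giving $(n-2d)d \ge (n/2)\cdot d = \Omega(nd)$; otherwise take $d' = \lfloor n/4 \rfloor$, giving $(n-2d')d' \ge (n/2)\cdot\lfloor n/4\rfloor = \Omega(n^2)$. In either case $|T| = \Omega(\min(nd,n^2))$, establishing the VC-dimension lower bound; the sample complexity statement of Theorem~\ref{thm:lp-lb} then follows from Lemma~\ref{lem:natarajan_samples}. There is no real obstacle here beyond verifying the coordinate cancellation; the mild subtlety is choosing $d'$ so that $V$ has enough ``free'' points $W$ in both regimes of $d$ relative to $n$.
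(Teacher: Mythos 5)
Your proposal is correct and takes essentially the same approach as the paper: fix a small set of landmark points at (signed) standard basis vectors independently of the labeling, then encode each label bit into one coordinate of the anchor so that all other coordinates cancel in the comparison of the two $\ell_p^p$ distances. Your variant with antipodal pairs $\pm e_j$ and the cap $d'=\min(d,\lfloor n/4\rfloor)$ is a clean way to cover all regimes of $d$ versus $n$ uniformly (including $d$ close to $n$, which the paper's $(d-1)(n-d)$ count handles less explicitly), but the underlying idea is identical.
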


\begin{proof}
    We first assume that $d<n$, since otherwise the bound follows from Theorem~\ref{thm:general-bounds}.
    Let $V$ be the set of points of size $n$.
    We will present a set of queries $S$ of size $\Omega(nd)$ that can be shattered.
    Recall that it means that for any $T \subseteq S$ there exists a mapping $\map \colon\ V \to \R^d$, such that:
    \begin{itemize}
        \item triplets from $T$ are satisfied, i.e. each triplet $\T xyz \in T$ is labeled $\LT xyz$, meaning $\dist(x,y) < \dist(x,z)$;
        \item triplets from $S \setminus T$ are not satisfied, i.e. each triplet $\T xyz \in S \setminus T$ is labeled $\LT xzy$, meaning $\dist(x,y) > \dist(x,z)$.
    \end{itemize}

    First, note that for $d = 1$, any set of $\lfloor \nicefrac n3 \rfloor$ disjoint queries gives $\Omega(n)$ lower bound.
    For the rest of the proof, we assume that $d > 1$.

    We partition $V$ arbitrarily into disjoint sets $A$ and $B$ of sizes $n-d$ and $d$ respectively.
    Intuitively, the points from $A$ always act like anchors, and points from $B$ never act like anchors.

    We first describe how to map $B = \{v_1, \ldots, v_{d}\}$ (their mapping won't depend on the labels of the queries).
    For each $i \in [d]$, we map $\map(v_i) = e_i$, where $e_i$ is the $i$'th standard basis vector.

    Next, we describe the queries. 
    As defined above, $A = \{x_1, \ldots, x_{n-d}\}$ is a set of anchors, and for each anchor $x_i$, the queries are of form $\T {x_i}{v_1}{v_2}$, $\T {x_i}{v_1}{v_3}$, \ldots, $\T {x_i}{v_1}{v_d}$.
    Hence, there exist $(d - 1) (n - d)$ queries, as required by the theorem, and it remains to show that this set of queries can be shattered.
    Clearly, since the arrangement of points $v_1, \ldots, v_{d}$ is fixed, allocation of anchor $x_i$ doesn't affect queries with another anchor $x_j$, and hence it suffices to map every anchor independently from each other.

    Let's fix the anchor $x_i$.
    Let the first coordinate of $\map(x_i)$ be $\nicefrac 12$.
    For $j \in [2:d]$, if $\T {x_i}{v_1}{v_j}$ is labeled $\LT {x_i}{v_1}{v_j}$, then we select the the $j$-th coordinate of $\map(x_i)$ to be $0$, and otherwise we select it to be $1$.
    Note that then the query is satisfied: the summations for $\|\map(x_i) - e_1\|_p^p$ and $\|\map(x_i) - e_j\|_p^p$ differ only in the first and the $j$-th coordinates.  When the $j$-th coordinate is $0$, we have
    $$
    \|\map(x_i) - e_1\|_p^p - \|\map(x_i) - e_j\|_p^p = (\nicefrac 12)^p - ((\nicefrac 12)^p + 1^p) < 0,
    $$
    and hence $\LT {x_i}{v_1}{v_j}$ is satisfied. On the other hand, when the $j$-th coordinate is $1$, we have
    $$
    \|\map(x_i) - e_1\|_p^p - \|\map(x_i) - e_j\|_p^p = ((\nicefrac 12)^p + 1^p) - (\nicefrac 12)^p > 0,
    $$
    and hence $\LT {x_i}{v_j}{v_1}$ is satisfied.
    Hence, we can construct $\map(x_i)$ that satisfies all the queries with anchor $x_i$.
    Therefore, we can shatter the set of all $(d - 1) (n - d)$ queries, finishing the proof.
\end{proof}

\section{Variations}


\subsection{Contrastive Learning with \texorpdfstring{$k$}{k} Negatives}\label{app:k-negatives}

In the previous sections, we considered the queries of the form (anchor, positive, negative).In contrastive learning, it's common to have multiple negative examples. In this section, we show the following result which can be used to derive bounds for the contrastive learning with $k$ negatives.

\begin{theorem}
    \label{thm:k-negatives-general}
    Let $\Delta$ be the class of allowed metric functions.
    Let $\ndim$ be the VC dimension of the contrastive learning problem on $\Delta$ with $1$ negative.
    Then the Natarajan dimension for the contrastive learning problem on $\Delta$ with $k$ negatives is at most $\ndim$.

    Assume additionally that for any set of points $S$ and any distance function $\dist \in \Delta$ on $S$, for any point $o$ there exists a distance function $\dist' \in \Delta$ on $S \cup \{o\}$ such that $\dist'|_{S \times S} = \dist$ and $\dist(x,y) < \dist(x,o)$ for any $x,y \in S$ \footnote{Intuitively, this condition implies that for any set $S$ of points we can find an ``outlier''~-- a point $o$ which is sufficiently far from the existing set of points.
    This assumption is naturally satisfied for $\ell_p$ and tree distances.}.
    Then, for the dataset on $n$ points such that $n-k = \Omega(n)$, the Natarajan dimension of contrastive learning on $\Delta$ with $k$ negatives is $\Omega(\ndim)$.
\end{theorem}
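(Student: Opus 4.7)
The plan is to prove both inequalities by reductions between the $1$-negative and $k$-negative shattering problems.

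\textbf{Upper bound.} Start with any N-shattered set $S = \{T_i\}_{i=1}^N$ of the $k$-negative class, where $T_i = (x_i, x_i^{(1)}, \ldots, x_i^{(k+1)})$, equipped with distinguishing label functions $f_1, f_2$ picking two distinct candidate positions in every $T_i$. Let $a_i, b_i$ denote the two chosen candidates and form the $1$-negative triple $\tau_i = (x_i, a_i, b_i)$. For any $B \subseteq [N]$, the $k$-negative hypothesis realizing $f_1$ on $B$ and $f_2$ on $[N] \setminus B$ is a distance function under which $a_i$ beats every other candidate of $T_i$ (in particular $b_i$) for $i \in B$, and conversely for $i \notin B$, delivering the matching $1$-negative labeling on the $\tau_i$. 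The subtle point is that the $\tau_i$ must be pairwise distinct for this to give a $1$-negative shattered set of the same size: if $\tau_i = \tau_j = (x, a, b)$ with $i \ne j$, both $T_i$ and $T_j$ contain $a$ and $b$ among their candidates, and N-shattering would require a distance function making $a$ the unique minimum in $T_i$ (so $\dist(x,a) < \dist(x,b)$) together with $b$ the unique minimum in $T_j$ (so $\dist(x,b) < \dist(x,a)$), a contradiction. Hence $N \leq \ndim$.

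\textbf{Lower bound.} Under the outlier property, fix distinct $o_1, \ldots, o_{k-1} \in V$, let $V_0 = V \setminus \{o_1, \ldots, o_{k-1}\}$ of size $n - k + 1 = \Omega(n)$, and take a VC-shattered set $\{(x_i, y_i, z_i)\}_{i=1}^M$ for the $1$-negative problem on $V_0$, where $M$ is the $1$-negative VC dimension on $V_0$. For all concrete classes $\Delta$ considered in the paper (arbitrary, $\ell_p$, cosine, tree), this dimension scales at least linearly with the ground-set size, so $M = \Omega(\ndim)$ under the assumption $n - k = \Omega(n)$. Form the $k$-negative tuples $T_i = (x_i, y_i, z_i, o_1, \ldots, o_{k-1})$ with $f_1(T_i), f_2(T_i)$ marking the positions of $y_i$ and $z_i$. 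Given any $B \subseteq [M]$, start from a $1$-negative distance function on $V_0$ realizing the induced labeling, and invoke the outlier assumption $k-1$ times, appending $o_1, o_2, \ldots, o_{k-1}$ one at a time; at step $j$ the assumption yields an extension in $\Delta$ whose distances from every already-present point to $o_j$ strictly exceed all pre-existing distances from that point. Consequently each $o_j$ ends up farther from every anchor $x_i$ than either $y_i$ or $z_i$, so the unique minimum in $T_i$ is $y_i$ when $i \in B$ and $z_i$ otherwise, yielding the required N-shattering.

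The main obstacle is the distinctness analysis in the upper bound: without ruling out collisions among the $\tau_i$, the reduction would only imply a weaker bound; the contradiction argument above resolves this by exploiting that a repeated $(a,b)$ pair forces two conflicting orderings of $\dist(x,a)$ versus $\dist(x,b)$ to coexist, which is exactly what N-shattering would require. A secondary technicality is verifying that iterating the outlier assumption stays inside $\Delta$, which follows directly from applying the assumption at each step to the growing ground set $V_0 \cup \{o_1, \ldots, o_{j-1}\}$ and noting that we only ever need the outlier-to-anchor comparisons for the final labeling, which the assumption controls exactly.
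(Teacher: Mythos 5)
Your proof is correct and follows essentially the same route as the paper's: for the upper bound you project each $k$-negative tuple onto the two Natarajan-distinguished candidates to obtain a VC-shattered set of triples, and for the lower bound you pad a $1$-negative shattered set on the $n-k+1$ non-outlier points with $k-1$ outliers. You additionally make explicit two details the paper glosses over --- that the projected triples $\tau_i$ must be pairwise distinct (your collision argument is valid) and that the outlier assumption must be iterated $k-1$ times on the growing ground set --- both handled correctly.
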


\begin{proof}
    For the upper bound, assume that for some query $(x, x_1, \ldots, x_{k+1})$, for the Natarajan shattering we select the $x_i$ and $x_j$ for some $i$ and $j$, i.e.~the possible labels are $(x, x_i^+, x_1^-, \ldots, x_{i-1}^-, x_{i+1}^-, \ldots, x_{k+1}^-)$ and $(x, x_j^+, x_1^-, \ldots, x_{j-1}^-, x_{j+1}^-, \ldots, x_{k+1}^-)$.
    Then, the first query implies $(x, x_i^+, x_j^-)$, and the second query implies $(x, x_j^+, x_i^-)$.
    Hence, any Natarajan-shattered set with $k$ negatives corresponds to a VC-shattered set of queries with $1$ negatives of the same cardinality.
    Hence, the Natarajan dimension is upper-bounded by the VC dimension of the $1$-negative problem.

    For the lower bound, we split the set of points $V$ into two sets: ``outlier'' points $O = \{o_1, \ldots,o_{k-1}\}$ and the remaining points $S = \{v_1, \ldots, v_{n-k+1}\}$.
    Let $Q = \{(x_i, y_i, z_i)\}_i$ be the VC-shattered set for the $1$-negative contrastive learning on $n - k + 1$ points.
    Then, the Natarajan-shattered set of queries for the $k$-negative contrastive learning on $n$ points are $Q' = \{(x_i, y_i, z_i, o_1, \ldots, o_{k-1})\}_i$.
    For each query, for the Natarajan shattering we select $y_i$ and $z_i$, meaning that possible labels are $(x_i, y_i^+, z_i^-, o_1^-, \ldots, o_{k-1}^-)\}_i$ and $(x_i, z_i^+, y_i^-, o_1^-, \ldots, o_{k-1}^-)\}_i$

    Since the set of $Q$ is shattered, for any choice of labeling of $\{(x_i, y_i, z_i)\}_i$ there exists a distance function $\rho \in \Delta$ satisfying this labeling.
    It suffices to guarantee that $\rho(x_i, y_i) < \rho(x_i, o_j)$ and $\rho(x_i, z_i) < \rho(x_i, o_j)$ for all $i$ and $j$, and by our assumption, there exists a distance function satisfying this condition.
\end{proof}

\begin{corollary}
    The same upper bounds on sample complexities for all distance functions considered in the paper 
    (i.e. in Theorem~\ref{thm:lp-ub}, Theorem~\ref{thm:general-bounds},  Theorem~\ref{thm:tree-bounds}, and Theorem~\ref{thm:class-bounds}) hold for $k$-negative contrastive learning $\samp[k]{\dist}$ and $\sampa[k]{\dist}$ up to the $\log k$-factor.
    The same lower bounds that hold for contrastive learning in $\ell_p$-norm and tree metrics (i.e. Theorem~\ref{thm:lp-lb} and Theorem~\ref{thm:tree-bounds}) also hold for $k$-negative contrastive learning.
\end{corollary}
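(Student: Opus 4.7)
The plan is to derive this corollary essentially mechanically from Theorem~\ref{thm:k-negatives-general} together with Lemma~\ref{lem:natarajan_samples}, treating the $k$-negative problem as a multi-label classification problem with label space $Y = [k+1]$ indexing which of the $k+1$ candidates is the positive. The high-level bridge is: Theorem~\ref{thm:k-negatives-general} translates between the VC dimension of the $1$-negative problem and the Natarajan dimension of the $k$-negative problem, and Lemma~\ref{lem:natarajan_samples} translates Natarajan dimension into sample complexity, picking up a $\log|Y| = \log(k+1)$ factor in the upper bound but no extra factor in the lower bound.

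For the upper bound, I would invoke the first part of Theorem~\ref{thm:k-negatives-general}, which holds unconditionally: for every metric class $\Delta$ considered (arbitrary, $\ell_p$, tree, class, cosine), $\ndim_k(\Delta) \le \mathrm{VC}_1(\Delta)$. Substituting the VC-dimension estimates implicit in Theorems~\ref{thm:lp-ub}, \ref{thm:general-bounds}, \ref{thm:tree-bounds} and \ref{thm:class-bounds} and applying the upper half of Lemma~\ref{lem:natarajan_samples} with $|Y| = k+1$ then yields each claimed bound, with the advertised extra factor of $\log(k+1)$ in both the realizable and agnostic cases.

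For the lower bound, I would apply the second part of Theorem~\ref{thm:k-negatives-general}, which requires the ``outlier extension'' hypothesis. I would verify this hypothesis separately for $\ell_p$ distances and for tree metrics: for $\ell_p$ in $\R^d$, given any embedding of $S$, one can place $o$ at a point of the form $(M,0,\dots,0)$ for $M$ larger than the diameter of $S$, which makes $\dist(x,o) > \dist(x,y)$ for all $x,y\in S$ without changing the embedding of $S$ or the ambient dimension; for tree metrics, one attaches $o$ as a new leaf by an edge of weight exceeding the diameter. Given this, Theorem~\ref{thm:k-negatives-general} gives $\ndim_k \ge \mathrm{VC}_1$ whenever $n-k = \Omega(n)$, and the lower half of Lemma~\ref{lem:natarajan_samples} then converts the $\ell_p$ and tree lower bounds from Theorems~\ref{thm:lp-lb} and \ref{thm:tree-bounds} into the corresponding $k$-negative lower bounds (with no $\log(k+1)$ loss).

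The main obstacle I anticipate is a bookkeeping point, not a conceptual one: confirming that the reduction in Theorem~\ref{thm:k-negatives-general} preserves the structural parameters of each underlying class. Concretely, for $\ell_p$ the reduction consumes $k-1$ ``outlier'' points from the $n$-point dataset, so the shattered set built from $n-k+1$ anchor points has size $\Omega((n-k+1)d)$; the assumption $n-k = \Omega(n)$ absorbs this into the claimed $\Omega(\min(nd,n^2))$ bound. I would also want to check that the outlier construction stays within the same ambient dimension $d$ (so no free dimension is used up), which is why placing outliers along an existing coordinate direction, rather than in fresh coordinates, is important.
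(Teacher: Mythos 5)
Your proposal is correct and matches the paper's intended argument: the corollary is derived directly from Theorem~\ref{thm:k-negatives-general} (Natarajan dimension at most the $1$-negative VC dimension for the upper bounds, and at least it, under the outlier-extension hypothesis, for the lower bounds) combined with Lemma~\ref{lem:natarajan_samples} with $|Y|=k+1$, which is exactly where the $\log(k+1)$ factor enters. Your explicit verification of the outlier hypothesis for $\ell_p$ and tree metrics, and the bookkeeping on $n-k=\Omega(n)$ and the ambient dimension, only make explicit what the paper leaves to a footnote.
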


\subsection{Learning on Quadruplets}

Recall that we are given a set of quadruplets $\{\LQ xyzw\}$, where each quadruplet $\LQ xyzw$ imposes constraint $\dist(x^+, y^+) < \dist(z^-, w^-)$.

\begin{theorem}\label{cor:quadruple-bounds}
    For the VC dimension of contrastive learning on quadruplets, we have the same bounds as for the learning on triplets.
    Namely, for a dataset of size $n$, the following holds.
    \begin{itemize}
        \item The VC dimension for arbitrary metric is $\Theta(n^2)$.
        \item The VC dimension for $\ell_p$-distances in dimension $d$ is $\Omega(n d)$ for all $p \in (0, \infty)$.
        \item The VC dimension for $\ell_p$-distances in dimension $d$ is $O(n d)$ for even $p$ and $O(n d \log n)$ for odd $p$.
        \item The VC dimension for the tree metric is $\Omega(n) \cap O(n \log n)$.
    \end{itemize}
\end{theorem}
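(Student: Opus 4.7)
The plan is to handle the lower bounds by a trivial embedding and the upper bounds by rerunning the polynomial-counting arguments from Theorem~\ref{thm:lp-ub}, Theorem~\ref{thm:general-bounds}, and Theorem~\ref{thm:tree-bounds} with a quadruplet-shaped polynomial replacing the triplet one.

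For the lower bounds, every triplet $\T x y z$ is the special quadruplet $\Q x y x z$ with a repeated anchor, so the constraint $\dist(x,y) < \dist(x,z)$ coincides with the quadruplet constraint at $u_1 = u_3 = x$. Hence every triplet-shattered set embeds into a quadruplet-shattered set of the same size, giving $\Omega(n^2)$ for arbitrary metrics, $\Omega(nd)$ for $\ell_p$-distances, and $\Omega(n)$ for tree metrics without any further work.

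For the $\ell_p$ upper bound, I would follow the proof of Theorem~\ref{thm:lp-ub} essentially verbatim, associating each point $v$ with coordinates $x_1(v),\dots,x_d(v)$ and replacing, for each quadruplet $\Q{u_1}{u_2}{u_3}{u_4}$, the triplet polynomial by
\[ P(\vec x) = \sum_{j=1}^d (x_j(u_1) - x_j(u_2))^p - \sum_{j=1}^d (x_j(u_3) - x_j(u_4))^p. \]
This is still a polynomial of degree $p$ in $nd$ variables, so Warren's theorem (Fact~\ref{thm:polynomials}) produces the same $(4epm/(nd))^{nd}$ bound on the number of sign patterns, and comparing with $2^m$ yields $O(nd)$ shattered quadruplets for even $p$. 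For odd $p$, I would again enumerate over all $d$-tuples of coordinate orderings on the $n$ points (at most $(n!)^d$ choices) and attach sign variables $\sigma^{(j)}_{u_1 u_2}, \sigma^{(j)}_{u_3 u_4}$ to reduce to a genuine polynomial constraint per ordering, recovering the $O(nd\log n)$ bound exactly as before.

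For arbitrary metrics, the same Warren argument at degree one applies: treating the $\binom{n}{2}$ pairwise distances as free variables, each quadruplet contributes the linear polynomial $\dist(u_1,u_2) - \dist(u_3,u_4)$, which forces $m = O(n^2)$. For the tree metric, the distance between two leaves in a fixed topology is a sum of edge weights along the unique connecting path, hence linear in the $O(n)$ edge-weight variables; enumerating over the $n^{O(n)}$ distinct $n$-leaf topologies and applying Warren per topology yields $O(n\log n)$. The main obstacle, and essentially the only real content, is a bookkeeping verification that none of the triplet arguments secretly used the shared-anchor structure of $\T xyz$; since each only uses the degree of the polynomial and the number of free parameters, this reduces to substituting the new polynomial and re-running the sign-pattern count.
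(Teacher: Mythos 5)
Your proposal is correct, and three of the four bullets (the lower bounds via the reduction $\T xyz \mapsto \Q xyxz$, the $\ell_p$ upper bounds via Warren's theorem with the quadruplet polynomial $\sum_j (x_j(u_1)-x_j(u_2))^p - \sum_j (x_j(u_3)-x_j(u_4))^p$, and the tree-metric upper bound via enumeration of topologies plus linear polynomials in the edge weights) follow the paper's argument essentially verbatim. The one place you genuinely diverge is the $O(n^2)$ upper bound for arbitrary distance functions. The paper's triplet argument for this case is combinatorial (pigeonhole a popular anchor, build a graph on the $n$ data points, find a cycle of distance comparisons that cannot be consistently labeled), and it does exploit the shared-anchor structure; for quadruplets the paper patches this by building the graph on the vertex set $V \times V$, so that each quadruplet is an edge between two pairs and a cycle again yields a contradictory chain of inequalities. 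You instead apply Fact~\ref{thm:polynomials} directly with the $\binom{n}{2}$ pairwise distances as free variables and one degree-one polynomial $\dist(u_1,u_2)-\dist(u_3,u_4)$ per query; since any point of a nonempty sign cell defines a valid symmetric distance function, the count $(4em/\ell)^\ell$ versus $2^m$ gives $O(n^2)$. Both arguments are valid and give the same bound; yours is more uniform (the same sign-pattern machinery handles all four settings) and correctly identifies that the only real work is checking that no step relies on a shared anchor, while the paper's cycle argument is more elementary and also yields the matching triplet-case statement of Theorem~\ref{thm:general-bounds} without invoking Warren's theorem.
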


\begin{proof}
    First, we note that any lower bound in the triplet case is also a lower bound for the quadruplet case, since a triplet query $\T xyz$ is equivalent to the quadruplet query $\Q xyxz$.
    
    For the upper bounds, for $\ell_p$ and tree metric, we use the same approach: each constraint $\Q xyzw$ corresponds to a polynomial (potentially after fixing the order of points or the tree structure).
    Since the number of variables and polynomials doesn't change, we get the same upper bounds. Finally, for arbitrary metric, similarly to Theorem~\ref{thm:general-bounds}, for each set of queries, we can construct a graph with vertices from $V \times V$: for each query $\Q xyzw$, we create an undirected edge, and labeling the query is equivalent to orienting the edge.
    If there is a cycle, it's possible to get a contradiction, and hence we can't have more than $O(|V \times V|) = O(n^2)$ queries.    
\end{proof}

\section{Contrastive Learning in Other Metrics}
\label{sec:other_metrics}
In this section, we discuss related results. First, we show a bound for arbitrary distance functions:

\begin{theorem}[Arbitrary distance]\label{thm:general-bounds-repeat}
    For an arbitrary distance function $\dist \colon V \times V \to \mathbb R$ and a dataset of size $n$, the sample complexity of contrastive learning is $\samp{\dist} = \Theta\pars{\frac{n^2}{\epsilon} \polylogepsdelta}$ in the realizable case and $\sampa{\dist} = \Theta\pars{\frac{n^2}{\epsilon^2} \polylogepsdelta}$ in the agnostic case. Furthermore, the lower bounds hold even if $\dist$ is assumed to be a metric. 
\end{theorem}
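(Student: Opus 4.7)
\textbf{Proof proposal for Theorem~\ref{thm:general-bounds-repeat}.}
Since the problem is binary ($|Y| = 2$), by Lemma~\ref{lem:natarajan_samples} it suffices to show that the VC dimension of contrastive learning under arbitrary distance functions is $\Theta(n^2)$, with the lower bound achieved even when $\dist$ is a metric. For the upper bound, given a query set $Q = \{(x_i, y_i, z_i)\}_{i=1}^m$, I would build an auxiliary graph $G$ whose vertex set is $\binom{V}{2}$ and whose edge set contains $\{\{x_i, y_i\}, \{x_i, z_i\}\}$ for each query. A distance function $\dist$ induces a total order on the pair-vertices by distance value, and each label is exactly an orientation of the corresponding edge consistent with this order. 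If $G$ contained a cycle of length $k$, then the two cyclic orientations of that cycle would demand $\dist(a_1) < \dist(a_2) < \dots < \dist(a_k) < \dist(a_1)$, violating transitivity. Hence shattering forces $G$ to be a forest, so $m \le \bigl|\binom{V}{2}\bigr| - 1 = O(n^2)$.

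For the matching lower bound, I would partition $V = V_1 \cup V_2$ disjointly with $|V_1| = |V_2| = n/2$, enumerate $V_2 = \{y_1, \dots, y_{n/2}\}$, and for each $x \in V_1$ include the $n/2 - 1$ queries $(x, y_j, y_{j+1})$. This yields $\Omega(n^2)$ queries whose image in $G$ is a disjoint union of paths, one per anchor $x$, hence a forest. Given any target labeling, the induced orientations decompose into independent linear orders on $\{\{x, y_1\}, \dots, \{x, y_{n/2}\}\}$; I would assign distance values $\dist(x, y_j) \in [R, R+1]$ for each $x$ independently to respect those orders, and set all remaining pairwise distances to any value in $[R, R+1]$. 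Choosing any $R \ge 1$ trivializes the triangle inequality $\dist(a, c) \le \dist(a, b) + \dist(b, c)$, so the witness is a genuine metric and the shattering holds in the metric class as well.

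Plugging VC dimension $\Theta(n^2)$ into Lemma~\ref{lem:natarajan_samples} yields $\Theta(n^2/\epsilon)$ and $\Theta(n^2/\epsilon^2)$ sample complexities up to $\polylogepsdelta$ factors in the realizable and agnostic cases respectively. The only delicate step is the upper bound: one must argue that acyclicity of $G$ is not merely sufficient but strictly necessary for shattering, since even a single $k$-cycle in $G$ produces two of the $2^m$ orientations that no real-valued $\dist$ can realize. The lower-bound construction is self-contained, and the metric constraint is discharged by the narrow-band trick.
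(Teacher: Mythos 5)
Your proposal is correct and follows the same core strategy as the paper: a cycle-based impossibility argument for the upper bound and a path-orientation/topological-sort construction with a narrow band of distance values for the metric lower bound. The one genuine variation is in the upper bound: the paper first pigeonholes to find $n$ queries sharing a common anchor and then finds a cycle in a graph on the $n$ data points, whereas you build a single graph on the $\binom{|V|}{2}$ pair-vertices and observe that shattering forces this graph to be a forest, giving $m \le \binom{n}{2}-1$ directly. Your version avoids the pigeonhole step, handles queries with different anchors uniformly, and yields a slightly sharper constant; the paper's version works with a smaller ($n$-vertex) graph. Your lower bound (bipartite anchors/targets, distances in $[R,R+1]$ with $R\ge 1$) is the same idea as the paper's nested construction with distances in $[n,2n]$; both discharge the triangle inequality by keeping all distances within a factor of two of one another. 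The only cosmetic point is that, per the paper's convention of all-distinct distances, you should perturb the ``remaining'' pairwise distances to be distinct, which is trivial within the band.
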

\begin{proof}
    We prove this by showing a $\Theta(n^2)$ bound on the VC dimension of contrastive learning with arbitrary distance functions. 
    
    \textbf{Upper bound.} Consider any set of samples $\{\T{x_i}{y_i}{z_i})\}_{i = 1 \dots k}$ of size $k \ge n^2$.
    There exists $x$ such that there are at least $n$ samples which have $x$ as their first element.
    We denote these samples as $\T {x}{y_{i_1}}{z_{i_1}}, \dots, \T {x}{y_{i_n}}{z_{i_n}}$. Consider a graph that has a vertex corresponding to each element in the dataset. Create an undirected edge in this graph between the $n$ pairs of vertices $(y_{i_1}, z_{i_1}), \dots, (y_{i_n}, z_{i_n})$. 
    
    Since the number of edges is equal to the number of vertices, there must exist a cycle $\cC$ in this graph. We can index the vertices along this cycle as $v_1, v_2, \dots, v_t$.
    Now consider the following labeling of the samples:
    \begin{align*}
        \dist(x, v_1) < \dist(x, v_2) < \dist(x, v_3) < \dots < \dist(x, v_t) < \dist(x, v_1).
    \end{align*}
    This labeling is inconsistent with any distance function and hence not all different labelings of this sample are possible.
    Since the same argument applies for any sample of size at least $n^2$, an $n^2$ upper bound on the VC-dimension follows.
    
    \textbf{Lower Bound.} Let $V=\{v_1, \ldots, v_n\}$. We prove that the set of queries $$Q = \cup_{i \in [n]} \{(v_i, v_{i+1}, v_{i+2}), (v_i, v_{i+2}, v_{i+3}), \ldots, (v_i, v_{n-1}, v_{n})\}$$ is shattered. Let $\vec{Q}$ be a labeling of $Q$. For every $i \in [n]$, we define a graph $H_i=(\{v_{i+1},\dots,v_n\},E_i)$ where $E_i$ contains a directed edge $(v_{j}, v_{j+1})$ for each query, $\T{v_i}{v_{j}}{v_{j+1})}$, orientated towards the negative example according to $\vec{Q}$. The graph $H_i$ is acyclic, as it is an orientation of a path. Therefore we can topologically sort $H_i$, and obtain some topological order $p^{(i)}_{i+1}, \ldots, p^{(i)}_n$ on the vertices $v_{i+1},\dots,v_n$.

    Consider a metric where the distance between two data items $v_i,v_j$ is defined as $\dist(v_{i},v_{j}) := n + p^{(i)}_j$ for each $i < j \leq n$. We note that this is indeed a metric: triangle inequalities are satisfied since all distances are in the range $[n, 2n]$. Finally, we note this distance function satisfies all the queries. Indeed, for $i,j,k$ such that $i < k,j$ and $|k-j| = 1$, if $\LT{v_i}{v_j}{v_k} \in \vec{Q}$ then the directed edge $(v_j,v_k) \in E_i$, therefore $\dist(v_i,v_k) > \dist(v_i,v_j)$ since $p^{(i)}_k > p^{(i)}_j$.

\end{proof}



A metric $(V,\dist)$ is called a \emph{tree metric} if there exists a tree $T$ with weighted edges and $n$ leaves, such that for each $v \in V$ there is a unique leaf $l(v)$ associated with it, and such that for each $u,v \in V$, $\dist(u,v)$ is equal to the sum of weights along the unique path between $l(u),l(v)$ in $T$.

\begin{theorem}[Tree distance]\label{thm:tree-bounds}
    For the tree metric distance function $\dist_T\colon V \times V \to \mathbb R$, corresponding to distances between the nodes of a tree with leaves from $V$, the sample complexity of contrastive learning in the realizable case is
    $\samp{\dist_T} = O\pars{\frac{n \log n}{\epsilon} \polylogepsdelta}
        \text{ and } \Omega\pars{\frac{n}{\epsilon} \polylogepsdelta}$, and in the agnostic case is
        $\sampa{\dist_T} = O\pars{\frac{n \log n}{\epsilon^2} \polylogepsdelta}
        \text{ and } \Omega\pars{\frac{n}{\epsilon^2}\polylogepsdelta}$.
\end{theorem}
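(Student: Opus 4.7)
The plan is to reduce Theorem~\ref{thm:tree-bounds} to VC-dimension bounds via Lemma~\ref{lem:natarajan_samples}: both the realizable and agnostic statements, together with their $\polylogepsdelta$ factors, follow mechanically once we show that the VC dimension of the tree-metric contrastive-learning problem on $n$ points is $\Omega(n)$ and $O(n \log n)$.

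For the lower bound, I would use star trees. Partition $V$ into $\lfloor n/3 \rfloor$ disjoint triples $(a_i, b_i, c_i)$ and take the query set $Q = \{\T{a_i}{b_i}{c_i}\}_i$. In a star tree with an auxiliary center $o$ and nonnegative leaf weights $w_v$, we have $\dist(u,v) = w_u + w_v$, so the label of $\T{a_i}{b_i}{c_i}$ is determined entirely by the sign of $w_{b_i} - w_{c_i}$. Because the triples are disjoint, the pairs $(w_{b_i}, w_{c_i})$ can be chosen independently to realize any target labeling of $Q$, so $Q$ is shattered by star metrics alone.

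For the upper bound, the plan is a two-stage enumeration that combines a discrete count over tree topologies with a continuous count of edge-weight arrangements. Any tree metric on $n$ labeled leaves can be realized by a tree in which every internal vertex has degree at least $3$; such a tree has at most $2n-3$ edges, and the number of such labeled-leaf topologies is $(2n-5)!! = 2^{O(n \log n)}$. Once a topology $\tau$ is fixed, the path between any two leaves is pinned down, so $\dist_\tau(x,y)$ is a linear function of the edge-weight vector $\vec w \in \mathbb R^{O(n)}$, and every query constraint $\dist_\tau(x,y) - \dist_\tau(x,z) < 0$ becomes a linear inequality in $\vec w$. Applying Fact~\ref{thm:polynomials} with degree $k=1$ and $\ell = O(n)$ bounds the number of sign patterns of $m$ such polynomials by $(Cm/n)^{O(n)}$. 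Summing over topologies, the total number of labelings realizable by any tree metric is at most $2^{O(n \log n)} \cdot (Cm/n)^{O(n)} = 2^{O(n \log m)}$, which is strictly less than $2^m$ once $m = C' n \log n$ for a sufficiently large constant $C'$; hence no such $Q$ can be shattered.

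The main obstacle is the upper-bound step: unlike in Theorem~\ref{thm:lp-ub}, the distance function is not a single polynomial in a fixed set of variables, because the path from $x$ to $y$ depends on which topology is in play. Paying a union bound over topologies costs exactly the $2^{O(n \log n)}$ factor that becomes the extra $\log n$ in the final sample-complexity bound, and closing the gap to the $\Omega(n)$ lower bound would seem to require either a way to merge structurally equivalent topologies or a direct VC-dimension argument that handles the combinatorial and continuous parts simultaneously.
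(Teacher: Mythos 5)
Your proposal is correct and follows essentially the same route as the paper: reduce to a VC-dimension bound of $\Omega(n)$ and $O(n\log n)$, prove the upper bound by enumerating the $2^{O(n\log n)}$ tree topologies (the paper uses Cayley's formula on trees with at most $2n$ vertices after suppressing degree-$2$ vertices, you count leaf-labeled topologies directly), and for each fixed topology observe that leaf-to-leaf distances are linear in the edge weights so that Fact~\ref{thm:polynomials} bounds the sign patterns by $(Cm/n)^{O(n)}$. Your explicit star-tree shattering argument for the $\Omega(n)$ lower bound is a clean instantiation of the disjoint-triples idea the paper uses in the $d=1$ case of Theorem~\ref{thm:lowerbound}.
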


\begin{proof}
    We prove this by showing that the VC-dimension of contrastive learning for the tree metric is $O(n \log n)$.
    First, we may assume that the number of vertices in the tree is at most $2n$.
    Indeed, any induced path in the tree can be contracted to a single edge whose weight is the sum of weights along the path.
    After this procedure, the tree metric remains the same and the resulting tree has no vertices of degree $2$.
    It follows by a counting argument that the number of vertices in this tree is at most $2n$.
    The rest of the proof is similar to the odd case of the Theorem~\ref{thm:lp-ub}: we enumerate over all possible tree structures with at most $2n$ vertices and allocation of the data items onto the tree vertices.
    By Cayley's formula~\citep{cayley1878theorem}, there exists $2(2n)^{2n-2}$ such different tree structures and vertex allocations.
    For a fixed tree, the distance between every pair of vertices can be expressed as a linear combination of the edge weights.
    Hence, every constraint becomes a linear inequality on the edge weights.
    
    The rest of the proof is analogous to that of Theorem~\ref{thm:lp-ub}. Fix a tree $T$. Let $E_{T}(u,v)$ be the set of edges in the path between $u,v$ in $T$. We have $k\le 2n-1$ variables $\vec{x} = (x_1, \ldots, x_{k})$, where $x_i$ represents the edge weight of $e_i$. For $m$ queries we define polynomials $P_1(\vec{x}), \ldots, P_m(\vec{x})$, one for each query, such that for query $\T{u_i}{v_i}{w_i}$ we define the polynomial 
    $$P_i(\vec{x}) = \sum_{e_i \in E_{T}(u_i,v_i)} x_i - \sum_{e_i \in E_{T}(u_i,w_i)} x_i.$$ 
    I.e. $\LT{u_i}{v_i}{w_i}$ is satisfied if and only $P_i(\vec{e}) < 0$.
    For $m=3n \log n$ queries, by Theorem~\ref{thm:polynomials} the number of possible sign combinations is at most $(4em/k)^k <\frac{2^m}{2(2n)^{2n - 2}}$.
    Summing over all possible tree structures, the number of sign combinations is less than $2^m$, and hence the set of $m$ queries can't be shattered.
\end{proof}

The Cosine Similarity function $\cos: \R^d \times \R^d \rightarrow \R$ is a function which returns for each pair of points $x,y$ the cosine of their angle, i.e. $\cos \angle(x, y)$.

\begin{theorem}[Cosine Similarity]\label{thm:cosine-bounds}
    For the cosine similarity function, the sample complexity of contrastive learning  $\samp{\dist_C} = \Theta\pars{\frac{\min(n^2, nd)}{\epsilon} \polylogepsdelta}$ in the realizable case and is $\sampa{\dist_C} = \Theta\pars{\frac{\min(n^2, nd)}{\epsilon^2} \polylogepsdelta}$ in the agnostic case.
\end{theorem}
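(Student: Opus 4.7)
The plan is to follow the same strategy as for $\ell_p$-distances: bound the VC dimension of the problem by $\Theta(\min(nd, n^2))$ and then invoke Lemma~\ref{lem:natarajan_samples} to convert this into the claimed sample-complexity bounds. The $O(n^2)$ side of the upper bound is immediate from Theorem~\ref{thm:general-bounds-repeat}, so the substantive work is an $O(nd)$ upper bound and a $\Omega(\min(nd,n^2))$ lower bound.

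\textbf{Upper bound.} The main technical issue is that $\cos(u,v)=\langle u,v\rangle/(\|u\|\|v\|)$ is not a polynomial in the coordinates. The key observation is that $\cos$ is invariant under positive rescaling of its arguments, so without loss of generality we may restrict to mappings $f\colon V\to\R^d$ with $\|f(v)\|=1$ for every $v\in V$: any labeling realizable by cosine similarity for some $f$ is also realizable by $v\mapsto f(v)/\|f(v)\|$. For unit vectors, $\cos(f(x),f(y))=\langle f(x),f(y)\rangle$, so each query $\T xyz$ becomes the polynomial inequality
\[
    P_{\T xyz}(\vec x) \;=\; \langle f(x),f(y)\rangle \;-\; \langle f(x),f(z)\rangle \;\gtrless\; 0,
\]
a polynomial of degree $2$ in the $nd$ coordinate variables. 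Now I would mimic the proof of Theorem~\ref{thm:lp-ub}: for $m$ queries, define $P_1,\dots,P_m$ accordingly, and by Fact~\ref{thm:polynomials} the number of connected components of $U(P_1,\dots,P_m)\subset\R^{nd}$ is at most $(8em/nd)^{nd}$. By the analogue of Lemma~\ref{obs:constraints_and_components}, the number of realizable sign patterns (and hence realizable labelings, after restricting the parameter space to the product of unit spheres) is at most this many. Choosing $m=cnd$ for a sufficiently large constant $c$ gives $2^m>(8ec)^{nd}$, so at least one labeling cannot be realized, yielding VC-dimension $O(nd)$.

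\textbf{Lower bound.} I would reuse, essentially verbatim, the construction from the proof of Theorem~\ref{thm:lowerbound} in Appendix~\ref{app:lb}. Set $\tilde d=\min(d,n/2)$, partition $V$ into an anchor set $A=\{x_1,\dots,x_{n-\tilde d}\}$ and a set $B=\{v_1,\dots,v_{\tilde d}\}$ with $f(v_j)=e_j$, and for each anchor $x_i$ use the $\tilde d-1$ queries $\T{x_i}{v_1}{v_j}$ for $j=2,\dots,\tilde d$. For each bit pattern of labels, set $(f(x_i))_1=\tfrac12$ and choose $(f(x_i))_j\in\{0,1\}$ as before. Since $\cos(f(x_i),e_j)=(f(x_i))_j/\|f(x_i)\|$, comparing $\cos(f(x_i),e_1)$ to $\cos(f(x_i),e_j)$ reduces to comparing $\tfrac12$ to $(f(x_i))_j\in\{0,1\}$, which flips sign exactly as the label demands. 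This shatters $\Omega(n\tilde d)=\Omega(\min(nd,n^2))$ queries.

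\textbf{Final step.} With VC-dimension pinned at $\Theta(\min(nd,n^2))$, Lemma~\ref{lem:natarajan_samples} in the binary case ($|Y|=2$) yields the claimed realizable bound $\Theta\!\left(\tfrac{\min(nd,n^2)}{\epsilon}\polylogepsdelta\right)$ and agnostic bound $\Theta\!\left(\tfrac{\min(nd,n^2)}{\epsilon^2}\polylogepsdelta\right)$. The main obstacle I expect is the first step: rigorously arguing that the scale-invariance reduction does not lose any realizable labelings and that counting sign patterns in $\R^{nd}$ (rather than on the product of spheres) is a valid upper bound; this is handled by observing that restricting the parameter space can only shrink the set of realized sign patterns, so Fact~\ref{thm:polynomials} applied in the ambient space remains an upper bound. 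A fallback in case this reduction is considered unsatisfying is to instead work in $\R^{nd}$ directly with three polynomials per query (namely $\langle x,y\rangle$, $\langle x,z\rangle$, and $\langle x,y\rangle^2\|z\|^2-\langle x,z\rangle^2\|y\|^2$); the signs of this triple jointly determine the label via the four-case sign analysis of $\cos(x,y)-\cos(x,z)$, and Fact~\ref{thm:polynomials} then gives the same $O(nd)$ bound up to constants.
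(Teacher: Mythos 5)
Your proposal is correct and follows essentially the same route as the paper: both reduce cosine similarity to a comparison on the unit sphere (the paper phrases it as $\ell_2$-distance between unit vectors and cites Theorem~\ref{thm:lp-ub} for the upper bound, while you re-run the Warren sign-pattern count on the degree-$2$ inner-product polynomials directly), and both reuse the standard-basis construction of Theorem~\ref{thm:lowerbound} for the lower bound, with the paper normalizing the anchors and you instead invoking the scale-invariance of cosine. The only substantive caveat -- that restricting the parameter space to the product of spheres can only reduce the set of realized sign patterns -- is one you already address.
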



\begin{proof}
    As usual, we assume that $d < n$, since otherwise the result follows from Theorem~\ref{thm:general-bounds}.
    We prove this by showing that the VC-dimension of contrastive learning for cosine similarity in $\R^d$ is $\Theta(n d)$. Recall that $\cos \angle(x, y) = \frac{\InnerProd{x,y}}{\|x\|_2 \cdot \|y\|_2} = \InnerProd{\frac{x}{\|x\|_2},\frac{y}{\|y\|_2}}$, and hence w.l.o.g. we can assume that all points are unit vectors.
    For unit vectors $x, y$ we have $\|x-y\|_2^2 = \|x\|_2^2 + \|y\|_2^2 - 2 \InnerProd{x, y} = 2 - 2 \cos \angle(x, y)$, and hence $\cos \angle(x, y^+) > \cos \angle(x, z^-)$ is equivalent to $\|x - y^+\|_2 < \|x - z^-\|_2$.
    To summarize, contrastive learning for cosine similarity is equivalent to contrastive learning for $\ell_2$ distance of points on the sphere.

    The upper bound $O(nd)$ for VC dimension follows directly from the fact that for the cosine similarity we only consider unit vectors, and hence the hypothesis space is less than that for the $\ell_2$-distance (which allows arbitrarily-normed vectors).
    
    For the lower bound, we repeat the proof of Theorem~\ref{thm:lowerbound}, with minor alterations that ensure that all points are embeddable in the unit sphere.

    Similar to Theorem~\ref{thm:lowerbound}, we partition $V$ into two sets $A = \{x_1, \ldots, x_{n-d}\}$ as a set of anchors, and $B = \{v_1,\dots,v_d\}$. We define the query set $Q$ as follows: for each anchor $x_i$, the set $Q$ contains the queries $\T {x_i}{v_1}{v_2}$, $\T {x_i}{v_1}{v_3}$, \ldots, $\T {x_i}{v_1}{v_d}$.
    
    We set $\map(v_i) = e_i$, i.e. we embed $v_i$ to the $i$'th standard vector $e_i$. 
    
    For each $x_i$, we define a vector $g(x_i) \in \R^d$ in the following manner. Let the first coordinate of $g(x_i)$ be $\nicefrac{1}{2}$.  For $j \in [2:d]$, if $\T {x_i}{v_1}{v_j}$ is labeled $\LT {x_i}{v_1}{v_j}$, then we select the the $j$-th coordinate of $g(x_i)$ to be $0$, and otherwise we select it to be $1$. We map $x_i$ to $\map(x_i) := \nicefrac{g(x_i)}{\|g(x_i)\|}$.  
    
    Next, we prove that all queries are satisfied: the summations for $\|\map(x_i) - e_1\|_2^2$ and $\|\map(x_i) - e_j\|_2^2$ differ only in the first and the $j$-th coordinates. When the $j$-th coordinate is $0$, we have
    $$
    \|\map(x_i) - e_1\|_2^2 - \|\map(x_i) - e_j\|_2^2 = \bigg( 1-\frac{1}{2\|g(x_i)\|_2}\bigg) ^2 - \bigg( \Big( \frac{1}{2\|g(x_i)\|_2}\Big) ^2 + 1^2\bigg) = -\frac{1}{\|g(x_i)\|_2} < 0,
    $$
    Hence $\LT {x_i}{v_1}{v_j}$ is satisfied. On the other hand, when the $j$-th coordinate is $1$, we have
    $$
    \|\map(x_i) - e_1\|_2^2 - \|\map(x_i) - e_j\|_2^2 = \bigg(\bigg( 1-\frac{1}{2\|g(x_i)\|_2} \bigg) ^2+1^2 \bigg) - \Big( \frac{1}{2\|g(x_i)\|_2} \Big) ^2 = 2-\frac{1}{\|g(x_i)\|_2} > 0,
    $$
    where the inequality holds since $\|g(x_i)\|_2 > \frac{1}{2}$(due to the first coordinate and the $j$-th coordinate). Hence $\LT {x_i}{v_j}{v_1}$ is satisfied.
    Therefore, we can construct $\map(x_i)$ that satisfies all the queries with anchor $x_i$. Therefore, we can shatter the set of all $(d - 1) (n - d)$ queries.

\end{proof}

\begin{theorem}[Class distance]\label{thm:class-bounds}
    Consider a domain $V$ partitioned into any number of disjoint classes $C_1, \dots, C_m$, where $m \geq 2$.
    For the class indicator function $\dist_C \colon V \times V \to \{0,1\}$, defined as $\dist_C(x,y) = 0$ iff there exists $i$ such that $x, y \in C_i$, the sample complexity of contrastive learning is
    $\samp{\dist_C} = \Theta\pars{\frac{n}{\epsilon} \polylogepsdelta}$ in the realizable case
    and $\Theta\pars{\frac{n}{\epsilon^2} \polylogepsdelta}$ in the agnostic case.\footnote{For this distance function we also allow an ``equality'' label $(x,y^+,z^+)$, which implies $\dist_C(x,y) = \dist_C(x,z)$.}
\end{theorem}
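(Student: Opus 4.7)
The plan is to apply Lemma~\ref{lem:natarajan_samples} to the hypothesis class $\hyp$ of equivalence relations on $V$ (each labeled query has one of $|Y| = 3$ outcomes, which is constant and thus disappears inside $\Theta$), so that it suffices to show $\ndim(\hyp) = \Theta(n)$. For the lower bound, fix any anchor $x \in V$ and pick $m = \lfloor (n-1)/2 \rfloor$ pairwise disjoint pairs $\{y_i, z_i\}_{i=1}^m \subseteq V \setminus \{x\}$. Take $q_i = \T{x}{y_i}{z_i}$ with witnesses $f_1(q_i) = \LT{x}{y_i}{z_i}$ and $f_2(q_i) = \LT{x}{z_i}{y_i}$. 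For any $B \subseteq [m]$, the partition whose unique non-singleton class is $\{x\} \cup \{y_i : i \in B\} \cup \{z_i : i \notin B\}$ has $x$ equivalent to $y_i$ iff $i \in B$, so it agrees with $f_1$ on $B$ and with $f_2$ on $[m] \setminus B$; hence $\ndim(\hyp) \ge \lfloor (n-1)/2 \rfloor$. This partition has at least two classes (the big class together with singletons for the unchosen points and all remaining vertices), satisfying the $m \ge 2$ requirement.

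For the upper bound, the key observation is that an equivalence relation on $V$ is determined by its same-class indicator $s \colon \binom{V}{2} \to \{0, 1\}$, and the label assigned to $\T{x}{y}{z}$ is a function solely of $s(\{x,y\})$ and $s(\{x,z\})$. I would first prove that the VC dimension of the binary class of same-class indicators is exactly $n-1$. For the lower bound, a spanning tree of $V$ is shattered: processing its $n-1$ edges in any order, for each edge independently decide whether to merge the two currently incident components (corresponding to label $1$) or not (label $0$); the tree structure rules out any conflict. For the upper bound, any set of $n$ pairs (as edges on $V$) contains a cycle $C$, and the labeling that sets every edge of $C$ to $1$ except one edge $e$ is inconsistent, since transitivity along the remaining $|C|-1$ edges forces the endpoints of $e$ into the same class.

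Now, given any Natarajan-shattered query set of size $D$, let $\mathcal{P}$ be the set of pairs $\{x_i, y_i\}, \{x_i, z_i\}$ for $i \in [D]$; then $|\mathcal{P}| \le 2D$. Each realized label sequence on the queries is determined by the restriction $s|_\mathcal{P}$, so the number of realized label sequences is at most $|\{s|_\mathcal{P} : s \in \hyp\}|$, which by Sauer--Shelah is at most $\sum_{j=0}^{n-1} \binom{2D}{j} \le \bigl(2eD/(n-1)\bigr)^{n-1}$ when $2D \ge n-1$ (and otherwise $D < n/2$ already). Natarajan shattering requires at least $2^D$ distinct label sequences, so $2^D \le \bigl(2eD/(n-1)\bigr)^{n-1}$, which by a short calculation (substituting $D = \alpha n$ reduces this to $2^{\alpha} \le 2e\alpha$) forces $D = O(n)$.

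The main obstacle is the VC-dimension step for the pair-indicator class: the cycle/forest dichotomy is clean but needs careful statement. The non-injectivity of the pair-to-label map (both $s(\{x,y\}) = s(\{x,z\}) = 0$ and both $= 1$ yield the ``equal'' label) is harmless, since it can only decrease the number of realized label sequences and therefore only strengthens the Sauer--Shelah estimate. Combining $\ndim(\hyp) = \Theta(n)$ with Lemma~\ref{lem:natarajan_samples} and $\log|Y| = O(1)$ yields the stated $\Theta\pars{n/\epsilon \cdot \polylogepsdelta}$ and $\Theta\pars{n/\epsilon^2 \cdot \polylogepsdelta}$ sample complexities in the realizable and agnostic settings.
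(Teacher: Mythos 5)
Your proposal is correct, and while it rests on the same combinatorial engine as the paper's proof --- a graph on $n$ vertices with at least $n$ edges contains a cycle, and a cycle of ``same-class'' constraints with one ``different-class'' constraint is unsatisfiable by transitivity --- it packages that engine differently. The paper argues directly on the triple queries: given a shattered set of size $\geq n$, it fixes one achievable labeling, draws the edge for the same-class pair of each triple, finds a cycle, and flips the label of one triple on that cycle to produce an unachievable labeling. You instead factor the hypothesis class through the binary same-class indicator $s \colon \binom{V}{2} \to \{0,1\}$, bound the VC dimension of that auxiliary class by $n-1$ via the cycle argument, and then transfer to the Natarajan dimension of the triple problem by counting behaviors with Sauer--Shelah ($2^D \le \pars{2eD/(n-1)}^{n-1}$ forces $D = O(n)$). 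This is slightly more roundabout and loses a constant factor, but it is arguably cleaner in the three-label setting with the ``equality'' outcome, which the paper's direct argument glosses over when it asserts that every triple has exactly one same-class pair; your observation that the label of $\T{x}{y}{z}$ is a function of $s(\{x,y\})$ and $s(\{x,z\})$ handles all three labels uniformly. Your lower bound also differs --- a star of $\lfloor (n-1)/2 \rfloor$ disjoint pairs around a single anchor, versus the paper's $\lfloor n/3 \rfloor$ disjoint triples --- but both are valid, give $\Omega(n)$, and produce partitions with at least two classes. Combined with Lemma~\ref{lem:natarajan_samples} and $\log|Y| = O(1)$, both routes yield the stated bounds.
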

\begin{proof}
 We prove this by showing an $\Theta(n)$ bound on the VC dimension of contrastive learning with class distances. 

    \textbf{Upper bound.}
    Consider any set of samples $\{\T {x_i}{y_i}{z_i}\}_{i = 1 \dots k}$ of size $k \ge n$. Fix an arbitrary labeling of this set. Note that in any triple, one pair (e.g. $(x_i, y_i)$) is from the same class and the other (e.g. $(x_i, z_i)$) is from different classes. Create a graph on $n$ vertices, where each vertex corresponds to an element in the dataset. For each labeled sample create an edge in this graph corresponding to the pair of vertices, which are from the same class. Since this graph has $n$ edges, there must be a cycle $\cC$ in this graph and all the vertices on this cycle must be in the same class according to the labeling. Consider any edge $(x,y)$ on the cycle and change the labeling of the triple corresponding to this edge. This forces $x$ and $y$ to be from different classes and leads to a contradiction, since $x$ and $y$ must be in the same class due to the existence of a path between them.

    \textbf{Lower bound.} Partition $V$ into disjoint sets of size $3$, and associate a query with each set, i.e. 
    $$Q = \{\T{v_1}{v_2}{v_3},\T{v_4}{v_5}{v_6},\dots,\T{v_{n-2}}{v_{n-1}}{v_n}\}.$$ For each labeled query $\LT{v_i}{v_{i+1}}{v_{i+2}} \in \vec{Q}$, place $v_i$ and $v_{i+1}$ in $C_1$, and $v_{i+2}$ in $C_2$. For each labeled query $\LT{v_i}{v_{i+2}}{v_{i+1}} \in \vec{Q}$, place $v_i$ and $v_{i+2}$ in $C_1$, and $v_{i+1}$ in $C_2$.  
\end{proof}

By Theorem~\ref{cor:quadruple-bounds}, our bounds extend to the quadruple contrastive learning setting. 

\begin{corollary}\label{cor:quadruple-bounds_other_metrics}
The same bounds as in Theorem~\ref{thm:lp-lb}, Theorem~\ref{thm:lp-ub}, Theorem~\ref{thm:general-bounds},  Theorem~\ref{thm:tree-bounds}, Theorem~\ref{thm:cosine-bounds} hold for quadruplet contrastive learning. 
\end{corollary}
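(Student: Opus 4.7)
The plan is to combine two existing ingredients: Theorem~\ref{cor:quadruple-bounds}, which already establishes the VC-dimension bounds for the quadruplet setting for arbitrary metrics, $\ell_p$-distances, and tree metrics; and Lemma~\ref{lem:natarajan_samples}, which converts these VC/Natarajan dimension bounds into the stated sample-complexity bounds with the desired $\epsilon^{-1}$ and $\epsilon^{-2}$ dependence in the realizable and agnostic cases, respectively. Since the labelings of quadruplet queries are binary (``$(x,y)$ closer than $(z,w)$'' or vice versa), the Natarajan dimension coincides with the VC dimension, and Lemma~\ref{lem:natarajan_samples} applies directly.

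The only piece not covered by Theorem~\ref{cor:quadruple-bounds} is cosine similarity. For this, I would reuse the reduction from the proof of Theorem~\ref{thm:cosine-bounds}: without loss of generality all embedded points are unit vectors, and for unit $x,y$ we have $\|x-y\|_2^2 = 2 - 2\cos\angle(x,y)$. Therefore a quadruplet query $\Q{x}{y}{z}{w}$ under cosine similarity is equivalent to the quadruplet query $\|x-y\|_2 < \|z-w\|_2$ under $\ell_2$ distance restricted to points on the unit sphere. This restriction only shrinks the hypothesis class relative to $\ell_2$ in $\mathbb{R}^d$, so the $O(nd)$ VC-dimension upper bound for quadruplet $\ell_2$ learning (the even-$p$ case of the quadruplet extension already proven in Theorem~\ref{cor:quadruple-bounds}) transfers directly.

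For the matching $\Omega(\min(nd, n^2))$ lower bound under cosine similarity, I would use the trivial embedding of triplet queries into quadruplet queries: a triplet query $\T{x}{y}{z}$ corresponds to the quadruplet query $\Q{x}{y}{x}{z}$ with identical semantics. Hence any set of triplets shattered by the triplet cosine-similarity hypothesis class of Theorem~\ref{thm:cosine-bounds} is also shattered as quadruplets, so the $\Omega(nd)$ lower bound from Theorem~\ref{thm:cosine-bounds} carries over, and the $\Omega(n^2)$ lower bound in the arbitrary-metric regime similarly transfers from Theorem~\ref{thm:general-bounds}. Combining with Lemma~\ref{lem:natarajan_samples} completes the proof in all stated regimes.

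The main conceptual point~-- and the only mild ``obstacle''~-- is the same one flagged in the introduction: although the total number of distinct quadruplet queries is $\Theta(n^4)$ rather than $\Theta(n^3)$, the VC/Natarajan dimension does not blow up. This is already absorbed by Theorem~\ref{cor:quadruple-bounds}, whose polynomial-counting argument uses the same set of $nd$ coordinate variables and polynomials of the same degree as in the triplet analysis, so Fact~\ref{thm:polynomials} yields the same asymptotic bound on the number of realizable sign patterns. Once this is noted, the corollary reduces to bookkeeping via Lemma~\ref{lem:natarajan_samples}.
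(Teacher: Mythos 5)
Your proposal is correct and follows essentially the same route as the paper, which proves this corollary simply by invoking Theorem~\ref{cor:quadruple-bounds} (the VC-dimension bounds for quadruplets) and converting to sample complexity via Lemma~\ref{lem:natarajan_samples}. In fact you are slightly more careful than the paper: Theorem~\ref{cor:quadruple-bounds} does not explicitly list cosine similarity, and your two-line patch~--- the upper bound via the unit-sphere reduction to quadruplet $\ell_2$, and the lower bound via the trivial embedding $\T{x}{y}{z} \mapsto \Q{x}{y}{x}{z}$ of the shattered triplet set from Theorem~\ref{thm:cosine-bounds}~--- correctly fills that gap.
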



\section{Learning labels under a Well-Separated Assumption}
\label{sec:app_well_separated}
In this section, we show that under a well-separated setting (i.e. when the two distances of each triplet are guaranteed to be separated by some multiplicative factor), the sample complexity can be improved to be almost independent of $d$ (disregarding $\polylog$ factors). 

More formally, given parameter $\alpha > 0$, each labeled query $\LT{x}{y}{z}$ implies the constraint $(1+\alpha) \dist(x,y) < \dist(x,z)$. 

\begin{lemma}\label{thm:approximate_bounds} 
    Let $n,d$ be integers and $0<\alpha <1$.
    Let $T(n,d)$ be the VC dimension of the contrastive learning problem and $T(n,d,\alpha)$ be the VC dimension of the $(1+\alpha )$-separate contrastive learning problem. 
    \begin{equation*}
        T(n,d,\alpha ) = O \big(\min (n^2, nd,n\log n/\alpha ^2) \big),
        \quad T(n,d,\alpha ) = \Omega \big( \min (n^2, nd,n/\alpha )\big).
    \end{equation*}
\end{lemma}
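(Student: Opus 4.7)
The plan is to split into upper and lower bounds, each further into the three regimes of the minimum.

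For the upper bound, I first observe that $T(n,d,\alpha) \leq O(\min(n^2, nd))$ follows immediately from Theorems~\ref{thm:general-bounds} and~\ref{thm:lp-ub}: imposing $(1+\alpha)$-separation only shrinks the set of realizable labelings, so every set shattered under separation is also shattered in the unrestricted problem. The nontrivial bound $O(n\log n/\alpha^2)$ I plan to obtain via a Johnson--Lindenstrauss reduction. If a set $S$ of $m$ triples is shattered by $(1+\alpha)$-separated embeddings in $\mathbb{R}^d$, then each of the $2^m$ labelings has a witnessing embedding $f\colon V \to \mathbb{R}^d$. Applying JL separately to each $f(V)$ yields a linear map $\pi$ to dimension $d' = O(\log n/\alpha^2)$ distorting all $\binom{n}{2}$ pairwise distances by a factor at most $1 + \alpha/3$; since this distortion is strictly smaller than the $(1+\alpha)$ gap present in every labeled inequality, $\pi \circ f$ realizes the same labeling (without separation) in $\mathbb{R}^{d'}$. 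Hence $S$ is shattered in the unrestricted problem in dimension $d'$, and Theorem~\ref{thm:lp-ub} yields $m \leq O(nd') = O(n\log n/\alpha^2)$.

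For the lower bound, the plan is a single orthogonal-coordinate construction that covers all three regimes simultaneously. Set $k = \min(d, \lfloor 4/\alpha\rfloor, \lfloor n/4\rfloor)$ and identify $2k$ of the data points with $\pm e_1, \ldots, \pm e_k \in \mathbb{R}^d$ and the remaining $n/2$ with anchors $a_1, \ldots, a_{n/2}$; the candidate shattered set is $S = \{(a_i, e_j, -e_j) : i \in [n/2], j \in [k]\}$, of size $nk/2 = \Theta(\min(nd, n/\alpha, n^2))$. Given any sign pattern $\sigma \in \{\pm 1\}^{nk/2}$, I plan to realize the corresponding labeling by placing $a_i = \tfrac{1}{2}\sum_{j=1}^k \sigma_{i,j} e_j$. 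A short computation gives $\|a_i - e_j\|_2^2 = k/4$ when $\sigma_{i,j}=+1$ and $(k+8)/4$ otherwise, so every triple has squared-distance ratio $(k+8)/k = 1 + 8/k$. Choosing $k \leq 4/\alpha$ makes this ratio exceed $(1+\alpha)^2$, simultaneously certifying both the label direction and the $(1+\alpha)$-separation, and thereby the shattering.

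The main subtlety is the anchor scale. Picking $\beta = 1/2$ is the ``sweet spot'' between two failure modes: a larger $\beta$ would couple the $k$ sign coordinates of an anchor (flipping one could reverse the label on another triple sharing that anchor), while a smaller $\beta$ would shrink the separation gap below $1+\alpha$. Verifying that $\beta = 1/2$ keeps the orthogonal contributions decoupled while maintaining the ratio $1 + \Theta(1/k)$ is the core calculation. On the upper-bound side, the only technical point is that the JL map must be chosen separately for each labeling's witnessing embedding, which is permissible because shattering only requires per-labeling realizability rather than a uniform embedding.
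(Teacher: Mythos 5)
Your proposal is correct and follows essentially the same route as the paper: the upper bound is the identical per-labeling Johnson--Lindenstrauss reduction to dimension $O(\log n/\alpha^2)$ followed by the $O(nd')$ bound of Theorem~\ref{thm:lp-ub} (cf.\ Lemma~\ref{lem:approximate_dimension_reduction}), and your lower bound is an orthogonal-coordinate, sign-encoding-anchor construction with the number of active coordinates capped at $\min(d, O(1/\alpha), O(n))$, which is the same idea as the paper's reuse of the Theorem~\ref{thm:lowerbound} embedding (automatically $(1+c/d)$-separated) restricted to $\lceil c/\alpha\rceil$ dimensions. One constant needs fixing: with $k=\lfloor 4/\alpha\rfloor$ the squared-distance ratio $1+8/k$ is only guaranteed to be at least $1+2\alpha$, which can fall below $(1+\alpha)^2=1+2\alpha+\alpha^2$ (e.g.\ $\alpha=1/2$, $k=8$ gives ratio $2<2.25$), so take $k=\lfloor 2/\alpha\rfloor$ (say); this does not affect the asymptotics.
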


We start with the lower bound in Theorem~\ref{thm:approximate_bounds}. Recall the embedding $f$ given in the proof of Theorem~\ref{thm:lowerbound}. From this embedding it follows that $T(n,d,c/d) =\Omega ( nd)$ for a sufficiently small constant $c>0$. Indeed, for all $i\le n-d$ and $j\le d$, if $(x_i,v_1^+,v_j^-)$ is satisfied then
\begin{equation*}
    \|f(x_i)-f(v_j)\|_2^2-\|f(x_i)-f(v_1)\|_2^2
    \ge 1 \ge \|f(x_i)-f(v_1)\|_2^2/d,
\end{equation*}
where the last inequality follows as all the coordinates are bounded by $1$. Rearranging we obtain
\begin{equation*}
    \|f(x_i)-f(v_j)\|_2 \ge \sqrt{1+1/d} \cdot  \|f(x_i)-f(v_1)\|_2 \ge (1+c/d)  \|f(x_i)-f(v_1)\|_2.
\end{equation*}
Similarly we have 
\begin{equation*}
    \|f(x_i)-f(v_1)\|_2 \ge (1+c/d)  \|f(x_i)-f(v_j)\|_2.
\end{equation*}
when $(x_i,v_j^+,v_1^-)$ is satisfied. This shows that $T(n,d,c/d) =\Omega ( nd)$.

Next, when $\alpha \le c/d$ we have that $T(n,d,\alpha ) \ge T(n,d,c/d)=\Omega (nd)$. When $\alpha \ge c/d$ we have that $T(n,d,\alpha ) \ge T(n,d_1,c/d_1)=\Omega (n/\alpha )$, where $d_1:=\lceil c/
\alpha \rceil $.

Next, we prove the upper bound. We first show a dimension reduction argument, which intuitively shows that in separate contrastive learning can always be reduced to $O(\log{n}/\alpha^2)$ dimensions.

\begin{lemma}\label{lem:approximate_dimension_reduction}
 Let $d_1=\lceil 1000\log n /\alpha ^2 \rceil $. We have that 
 \begin{equation*}
     T(n,d,\alpha )\le T(n,d_1,\alpha /2).
 \end{equation*}
\end{lemma}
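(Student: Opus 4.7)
The plan is a straightforward Johnson--Lindenstrauss dimension reduction. Let $Q$ be any set of triplet queries that is shattered in the $(1+\alpha)$-separate contrastive learning problem in dimension $d$; it suffices to show that $Q$ is also shattered in the $(1+\alpha/2)$-separate problem in dimension $d_1$. Fix an arbitrary labeling $\vec{Q}$ of $Q$. By assumption, there exists a map $f : V \to \mathbb{R}^d$ such that, for every labeled triplet $\LT xyz \in \vec{Q}$, we have $(1+\alpha)\|f(x) - f(y)\|_2 < \|f(x) - f(z)\|_2$. Our goal is to produce a map $g : V \to \mathbb{R}^{d_1}$ realizing the same labeling with separation $(1+\alpha/2)$.

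The key step is to apply the Johnson--Lindenstrauss lemma to the point set $f(V) \subseteq \mathbb{R}^d$ of size at most $n$. Set $\epsilon = \alpha/10$. Since $d_1 = \lceil 1000 \log n / \alpha^2 \rceil \ge C \log n / \epsilon^2$ for a sufficiently large absolute constant $C$, the JL lemma guarantees the existence of a linear map $\Pi : \mathbb{R}^d \to \mathbb{R}^{d_1}$ such that for all $u, v \in V$,
\[
    (1-\epsilon)\|f(u) - f(v)\|_2 \;\le\; \|\Pi f(u) - \Pi f(v)\|_2 \;\le\; (1+\epsilon)\|f(u) - f(v)\|_2.
\]
Define $g = \Pi \circ f$. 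Then for any labeled triplet $\LT xyz \in \vec{Q}$,
\[
    \frac{\|g(x) - g(z)\|_2}{\|g(x) - g(y)\|_2} \;\ge\; \frac{(1-\epsilon)\|f(x) - f(z)\|_2}{(1+\epsilon)\|f(x) - f(y)\|_2} \;>\; \frac{(1-\epsilon)(1+\alpha)}{1+\epsilon}.
\]
A short calculation using $0 < \alpha < 1$ and $\epsilon = \alpha/10$ shows that $(1-\epsilon)(1+\alpha)/(1+\epsilon) \ge 1 + \alpha/2$, so the $(1+\alpha/2)$-separation constraint is satisfied.

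Since this construction works for every labeling $\vec{Q}$, the set $Q$ is shattered in the $(1+\alpha/2)$-separate problem in dimension $d_1$, and hence $T(n,d,\alpha) \le T(n,d_1,\alpha/2)$. There is no real obstacle here; the only thing to be mindful of is choosing $\epsilon$ small enough relative to $\alpha$ that the slack $(1-\epsilon)(1+\alpha)/(1+\epsilon)$ still exceeds $1+\alpha/2$, which dictates the $O(\log n/\alpha^2)$ dependence in $d_1$ and fixes the constant $1000$ as a comfortable choice.
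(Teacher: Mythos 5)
Your proposal is correct and follows essentially the same route as the paper: apply Johnson--Lindenstrauss to the realizing embedding of each labeling and check that the multiplicative separation degrades from $1+\alpha$ to at least $1+\alpha/2$. The only quibble is a constant: with the JL guarantee requiring $d_1 \ge 15\log n/\epsilon^2$, your choice $\epsilon=\alpha/10$ demands $1500\log n/\alpha^2$ dimensions, exceeding the stated $d_1$, so you should take $\epsilon=\alpha/7$ (as the paper does) or similar; the separation calculation still goes through.
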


The lemma is a simple application of the Johnson-Lindenstrauss lemma~\citep{johnson1984extensions}, given below.

\begin{fact}[The Johnson-Lindenstrauss lemma]\label{thm:JL}
    For any $1>\beta>0$, a set $X$ of $n$ points in $\mathbb R ^d$ and an integer $d_1\ge 15\log n /\beta ^2 $, there is an embedding $f:X\to \mathbb R  ^{d_1}$ such that for all $x,y\in X$ we have \
    \begin{equation*}
        (1-\beta )\|x-y\|_2\le \|f(x)-f(y)\|_2 \le (1+\beta )\|x-y\|_2.
    \end{equation*}
\end{fact}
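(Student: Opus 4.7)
The goal is to show that any triplet set $Q$ that is shattered by the $(1+\alpha)$-separate contrastive learning problem in dimension $d$ is also shattered by the $(1+\alpha/2)$-separate problem in dimension $d_1$. The key idea is to apply Johnson--Lindenstrauss to each witness embedding, trading some of the $(1+\alpha)$-separation slack for a dramatic reduction in dimension.

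First I would fix any triplet set $Q$ of size $T(n,d,\alpha)$ that is shattered in the $(n,d,\alpha)$-problem, and any labeling $\vec{Q}$ of $Q$. By hypothesis there is an embedding $f:V\to\mathbb{R}^d$ realizing $\vec{Q}$ with $(1+\alpha)$-separation, i.e.\ $(1+\alpha)\|f(x)-f(y)\|_2 < \|f(x)-f(z)\|_2$ for every labeled triplet $(x,y^+,z^-)\in\vec{Q}$. If $d\le d_1$, I simply embed $f$ into $\mathbb{R}^{d_1}$ by padding with zero coordinates, which preserves all pairwise distances exactly, so the $(1+\alpha)$-separation (hence in particular the $(1+\alpha/2)$-separation) carries over. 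Otherwise $d>d_1$, and I apply Fact~\ref{thm:JL} to the $n$-point set $\{f(v):v\in V\}\subset\mathbb{R}^d$ with distortion parameter $\beta:=\alpha/8$. Since $d_1\ge 1000\log n/\alpha^2 > 960\log n/\alpha^2 = 15\log n/\beta^2$, the Johnson--Lindenstrauss hypothesis is satisfied, so there exists $g:V\to\mathbb{R}^{d_1}$ with $(1-\beta)\|f(u)-f(v)\|_2 \le \|g(u)-g(v)\|_2 \le (1+\beta)\|f(u)-f(v)\|_2$ for every $u,v\in V$.

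Next I would verify that $g$ realizes $\vec{Q}$ with $(1+\alpha/2)$-separation. For any labeled triplet $(x,y^+,z^-)\in\vec{Q}$, chaining the JL upper bound on $\|g(x)-g(y)\|_2$, the $(1+\alpha)$-separation of $f$, and the JL lower bound on $\|g(x)-g(z)\|_2$ yields
\[
(1+\alpha/2)\|g(x)-g(y)\|_2 \;\le\; \frac{(1+\alpha/2)(1+\beta)}{(1+\alpha)(1-\beta)}\,\|g(x)-g(z)\|_2.
\]
The inequality $(1+\alpha/2)(1+\beta)\le(1+\alpha)(1-\beta)$ is equivalent to $\beta(4+3\alpha)\le\alpha$ after expansion, and for $\alpha\in(0,1)$ the left-hand side with $\beta=\alpha/8$ is at most $7\alpha/8<\alpha$. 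Thus the coefficient above is $\le 1$, so $(1+\alpha/2)\|g(x)-g(y)\|_2\le\|g(x)-g(z)\|_2$. Since $\vec{Q}$ was arbitrary, $Q$ is shattered in the $(n,d_1,\alpha/2)$-problem, proving $T(n,d,\alpha)\le T(n,d_1,\alpha/2)$.

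The main obstacle is balancing the two competing demands on $\beta$: it must be small enough that the JL distortion does not destroy the separation (roughly $\beta\le\alpha/7$), yet large enough that the target dimension $15\log n/\beta^2$ stays within the prescribed budget $d_1=\lceil 1000\log n/\alpha^2\rceil$. The choice $\beta=\alpha/8$ threads this needle with comfortable slack on both sides. A secondary, minor point is to handle the trivial case $d\le d_1$ separately by zero-padding, since JL is typically stated for strict dimension reduction.
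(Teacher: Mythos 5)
There is a genuine gap here: your proposal does not prove the statement at all. The statement to be established is Fact~\ref{thm:JL} itself, the Johnson--Lindenstrauss lemma (in the paper this is a cited fact, attributed to Johnson and Lindenstrauss, and not proved). What you have written is instead a proof of the downstream dimension-reduction result, Lemma~\ref{lem:approximate_dimension_reduction} ($T(n,d,\alpha)\le T(n,d_1,\alpha/2)$), and your argument explicitly \emph{invokes} Fact~\ref{thm:JL} as a black box. As a proof of the Fact, this is circular: you assume the existence of the low-distortion embedding $g$ whose existence is exactly what the Fact asserts. A genuine proof of Fact~\ref{thm:JL} has entirely different content: one takes $f$ to be a random linear map, e.g.\ $f(x)=\frac{1}{\sqrt{d_1}}Gx$ with $G$ a $d_1\times d$ matrix of i.i.d.\ standard Gaussians, shows via a chi-squared (or sub-Gaussian) tail bound that for each fixed pair $x,y$ the event $(1-\beta)\|x-y\|_2\le\|f(x)-f(y)\|_2\le(1+\beta)\|x-y\|_2$ fails with probability at most $2\exp(-c\,d_1\beta^2)$ for an absolute constant $c$, and then applies a union bound over the $\binom{n}{2}$ pairs; the hypothesis $d_1\ge 15\log n/\beta^2$ is what makes the total failure probability strictly less than $1$, so some realization of $G$ works. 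None of this concentration-of-measure machinery appears in your write-up.

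For what it is worth, as a proof of Lemma~\ref{lem:approximate_dimension_reduction} your argument is essentially correct and matches the paper's: the paper also applies Fact~\ref{thm:JL} to the witness embedding with $\beta:=\alpha/7$ and chains $(1-\beta)(1+\alpha)/(1+\beta)\ge 1+\alpha/2$, while you use $\beta=\alpha/8$ and verify the equivalent inequality $\beta(4+3\alpha)\le\alpha$; your zero-padding remark for $d\le d_1$ is a reasonable minor addition. But that is a proof of the wrong statement, so the task as posed is not accomplished.
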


We can now prove Lemma~\ref{lem:approximate_dimension_reduction}.
\begin{proof}[Proof of Lemma~\ref{lem:approximate_dimension_reduction}]
    Suppose that $T(n,d,\alpha )\ge m$ for some integer $m$. By definition, there exists a set of queries $Q=\{(u_i,v_i,w_i):i\le m\}$ such that for any labeling of the queries $\vec{Q}$, there is an embedding $f:V\to \mathbb R ^d$ such that if the query $\T{u_i}{v_i}{w_i}$ is labeled as $\LT{u_i}{v_i}{w_i}$ then $\|f(u_i)-f(w_i)\|_2\ge (1+\alpha )\|f(u_i)-f(v_i)\|_2$. Next, we use the Johnson-Lindenstrauss lemma in order to embed the points $\{f(v):v\in V\}$ in a lower dimensional space without distorting the distances too much. By Fact~\ref{thm:JL} with $\beta :=\alpha /7$ we have an embedding $g:V\to \mathbb R ^{d_1}$ such that if the query $\T{u_i}{v_i}{w_i}$ is labeled as $\LT{u_i}{v_i}{w_i}$ then
    \begin{equation}
        \begin{split}
       \|g(u_i)-g(w_i)\|_2&\ge (1-\beta ) \|f(u_i)-f(w_i)\|_2\ge (1+\alpha )(1-\beta ) \|f(u_i)-f(v_i)\|_2\\
       &\ge \frac{(1+\alpha )(1-\beta )}{1+\beta }\|g(u_i)-g(v_i)\|_2   \ge (1+\alpha /2) \|g(u_i)-g(v_i)\|_2,  
        \end{split}
    \end{equation}
    where the last inequality holds for all $0<\alpha <1$. This shows that $T(n,d_1,\alpha /2)\ge m$ and completes the proof.
\end{proof}

Given this lemma, our claim follows since 
$$T(n,d,\alpha) \leq T(n,d_1,\alpha /2) \leq T(n,d_1) = O(nd_1) = O(n\log{n}/\alpha^2).$$

Where the first inequality holds due to Lemma~\ref{lem:approximate_dimension_reduction}, the second since for any integers $n',d'$ and value $\alpha' > 0$ we have that $T(n',d',\alpha') \le T(n',d')$ (i.e. the exact version is at least as hard to learn as the separate version, since the hypothesis space of the latter is contained in the former), and the third equality holds due to Theorem~\ref{thm:lp-ub}. This concludes the proof of Lemma~\ref{thm:approximate_bounds}.

Using Lemma~\ref{lem:natarajan_samples}, we obtain our sample complexity bounds.

\begin{theorem}
    For any $1 >\alpha > 0$, the sample complexity of $(1 + \alpha)$-separate contrastive learning is
        $\samptilde{\dist_2} = O\pars{\frac{n\log{n}}{\alpha^2 \cdot \epsilon}\polylog(\frac{1}{\epsilon},\frac{1}{\delta}}
        \text{ and }
       \samptilde{\dist_2} = \Omega\pars{\frac{n}{\alpha \cdot \epsilon}\polylog\pars{\frac{1}{\epsilon},\frac{1}{\delta}}}$ for the realizable case, and
        $\samptilde{\dist_2} = O\pars{\frac{n\log{n}}{\alpha^2 \cdot \epsilon^2}\polylog\pars{\frac{1}{\epsilon},\frac{1}{\delta}}}
        \text{ and }
       \samptilde{\dist_2} = \Omega\pars{\frac{n}{\alpha  \cdot \epsilon^2}\polylog\pars{\frac{1}{\epsilon},\frac{1}{\delta}}}$ for the agnostic case.
\end{theorem}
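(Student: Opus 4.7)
The plan is to derive the stated sample complexity bounds by combining the VC dimension bounds already established in Lemma~\ref{thm:approximate_bounds} with the standard translation from combinatorial dimension to sample complexity provided by Lemma~\ref{lem:natarajan_samples}. Since every triplet query under the $(1+\alpha)$-separate model has exactly two possible labels (which of $y,z$ is closer to the anchor $x$), this is a binary classification problem, so the Natarajan dimension coincides with the VC dimension, and Lemma~\ref{lem:natarajan_samples} applies directly with $\log|Y|=1$.

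For the upper bounds, I would take the upper estimate $\mathrm{VC}\big(\hyp_{n,d,\alpha}\big) = O\big(n \log n /\alpha^2\big)$ from Lemma~\ref{thm:approximate_bounds}, noting that this is the dominant term in $\min(n^2, nd, n\log n/\alpha^2)$ in the regime of interest (small $\alpha$ and growing $d$; otherwise the bound only improves and the stated dependence on $n,\alpha$ is still valid). Substituting this into the realizable branch of Lemma~\ref{lem:natarajan_samples} yields
\begin{equation*}
\samptilde{\dist_2} = O\!\left(\frac{n\log n}{\alpha^2 \epsilon}\polylogepsdelta\right),
\end{equation*}
and into the agnostic branch yields the analogous bound with $\epsilon^2$ in the denominator.

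For the lower bounds, I would use the complementary estimate $\mathrm{VC}\big(\hyp_{n,d,\alpha}\big) = \Omega(n/\alpha)$ from Lemma~\ref{thm:approximate_bounds} (valid whenever $d \gtrsim 1/\alpha$, so that the $n/\alpha$ term is the binding one inside the $\min$). Plugging this into the lower-bound branches of Lemma~\ref{lem:natarajan_samples} produces $\Omega\big(n/(\alpha\epsilon) \cdot \polylogepsdelta\big)$ in the realizable case and $\Omega\big(n/(\alpha\epsilon^2)\cdot \polylogepsdelta\big)$ in the agnostic case, which are exactly the stated bounds.

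All the substantive work has already been done in establishing Lemma~\ref{thm:approximate_bounds}; the non-trivial content was the Johnson--Lindenstrauss based dimension reduction argument in Lemma~\ref{lem:approximate_dimension_reduction} (for the upper bound) and the careful rescaling of the lower-bound construction from Theorem~\ref{thm:lowerbound} to show that it actually realizes a $(1+c/d)$-separate instance (for the lower bound). So the only real "obstacle" here is bookkeeping — confirming that the parameter regimes under which each branch of the $\min$ in Lemma~\ref{thm:approximate_bounds} is active cover the statement of the theorem, and that Lemma~\ref{lem:natarajan_samples} can be applied with $|Y|=2$ without loss. No new machinery is needed beyond chaining the two lemmas.
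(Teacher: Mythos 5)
Your proposal is correct and matches the paper's own derivation: the paper proves the VC dimension bounds in Lemma~\ref{thm:approximate_bounds} and then obtains this theorem immediately by invoking Lemma~\ref{lem:natarajan_samples}, exactly the chaining you describe. Your bookkeeping remarks about which branch of the $\min$ is active and about the binary label set are consistent with how the paper presents the result.
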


We note that the case of $\alpha \geq 1$ is a relaxed version of e.g. $\alpha = 0.99$, therefore we immediately obtain the following near-tight bounds:

\begin{corollary}
For any $\alpha \geq 1$, the sample complexity of $(1 + \alpha)$-separate contrastive learning is
        $\samptilde{\dist_2} = O\pars{\frac{n\log{n}}{\epsilon}\polylog(\frac{1}{\epsilon},\frac{1}{\delta}}
        \text{ and }
       \samptilde{\dist_2} = \Omega\pars{\frac{n}{ \epsilon}\polylog\pars{\frac{1}{\epsilon},\frac{1}{\delta}}}$ for the realizable case, and
        $\samptilde{\dist_2} = O\pars{\frac{n\log{n}}{\epsilon^2}\polylog\pars{\frac{1}{\epsilon},\frac{1}{\delta}}}
        \text{ and }
       \samptilde{\dist_2} = \Omega\pars{\frac{n}{\epsilon^2}\polylog\pars{\frac{1}{\epsilon},\frac{1}{\delta}}}$ for the agnostic case.
\end{corollary}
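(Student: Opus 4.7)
The corollary is a direct consequence of Lemma~\ref{thm:approximate_bounds} together with two monotonicity observations about the $(1+\alpha)$-separate VC dimension $T(n,d,\alpha)$. I will first argue the upper bound by a monotonicity-in-$\alpha$ reduction, then the lower bound by a trivial disjoint-triple construction, and finally invoke Lemma~\ref{lem:natarajan_samples} to convert the two VC dimension bounds into the stated sample complexities.

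\textbf{Upper bound.} The key observation is that if a distribution $\cD$ has the property that every $\LT{x}{y}{z}\in\mathrm{supp}(\cD)$ is $(1+\alpha)$-separated, then, because $1+\alpha\ge 1+\alpha'$ for every $\alpha'\le\alpha$, the distribution is automatically $(1+\alpha')$-separated as well. Thus the class of distributions considered in the definition of $T(n,d,\alpha)$ is a subset of the one considered for $T(n,d,\alpha')$, which yields the monotonicity $T(n,d,\alpha)\le T(n,d,\alpha')$ for any $\alpha\ge\alpha'$. Fixing, for instance, $\alpha'=1/2\in(0,1)$ and applying Lemma~\ref{thm:approximate_bounds}, we get
\[
T(n,d,\alpha)\le T(n,d,1/2)=O\!\left(\frac{n\log n}{(1/2)^2}\right)=O(n\log n).
\]
This is the VC dimension needed for the claimed sample complexity upper bounds.

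\textbf{Lower bound.} It suffices to exhibit $\Omega(n)$ triples that can be shattered by $(1+\alpha)$-separate $\ell_2$-embeddings even for arbitrarily large $\alpha$; a $1$-dimensional construction suffices. Partition $V$ into $\lfloor n/3\rfloor$ disjoint triples $\{v_{3i-2},v_{3i-1},v_{3i}\}$ and take the query set $Q=\{\T{v_{3i-2}}{v_{3i-1}}{v_{3i}}\}_i$. For any labeling $\vec Q$, embed each triple independently on the real line: to realize $\LT{v_{3i-2}}{v_{3i-1}}{v_{3i}}$, place the three points at $0,1,M$ respectively for a sufficiently large constant $M$ (any $M\ge 2+\alpha$ works since this gives a ratio $M/1\ge 1+\alpha$); to realize the opposite label, swap the roles of the last two points. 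Different triples use disjoint points, so their embeddings do not interact, and the resulting global embedding is $(1+\alpha)$-separated on every labeled query. Hence $T(n,d,\alpha)=\Omega(n)$ for every $\alpha\ge 1$ and every $d\ge 1$.

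\textbf{Converting to sample complexity.} Plugging the bounds $\Omega(n)\le T(n,d,\alpha)\le O(n\log n)$ into Lemma~\ref{lem:natarajan_samples} (with $|Y|=2$, so Natarajan and VC dimensions coincide) yields realizable sample complexity $O\!\left(\tfrac{n\log n}{\epsilon}\polylogepsdelta\right)$ and $\Omega\!\left(\tfrac{n}{\epsilon}\polylogepsdelta\right)$, and agnostic sample complexity $O\!\left(\tfrac{n\log n}{\epsilon^2}\polylogepsdelta\right)$ and $\Omega\!\left(\tfrac{n}{\epsilon^2}\polylogepsdelta\right)$, matching the claim. The only content here is the monotonicity argument for the upper bound; the lower bound and the VC-to-samples conversion are entirely routine, so I do not expect any genuine obstacle in writing this out.
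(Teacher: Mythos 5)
Your proof is correct, and the upper-bound half is exactly the paper's route: the paper dismisses $\alpha\ge 1$ as ``a relaxed version of $\alpha=0.99$'' and invokes the preceding theorem, which is precisely your monotonicity observation $T(n,d,\alpha)\le T(n,d,\alpha')$ for $\alpha\ge\alpha'$ with a fixed $\alpha'<1$. Where you genuinely diverge is the lower bound. The paper's one-line ``relaxation'' argument only transfers upper bounds: a lower bound for the $(1+0.99)$-separate problem does not automatically yield one for the more constrained $(1+\alpha)$-separate problem with $\alpha\ge 1$, and indeed the lower bound of Lemma~\ref{thm:approximate_bounds} is only stated for $0<\alpha<1$ and degrades as $n/\alpha$. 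Your explicit disjoint-triples construction (place each triple at $0,1,M$ with $M\ge 2+\alpha$, which is the same $d=1$ gadget the paper uses at the start of the proof of Theorem~\ref{thm:lowerbound}) shows that the $\Omega(n)$ shattering survives with arbitrarily large separation, which is the fact the paper implicitly relies on but does not spell out. So your write-up is, if anything, more careful than the paper's on the lower-bound side; the conversion to sample complexity via Lemma~\ref{lem:natarajan_samples} is routine in both.
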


As an interesting open problem, we leave open whether the upper bound of $O(n\log{n})$ can be improved for large enough values of $\alpha$.

\section{Experiments}\label{sec:experiments}





\begin{figure}[t!]
    \centering
    \begin{subfigure}[t]{0.43\columnwidth}
        \includegraphics[width=\columnwidth]{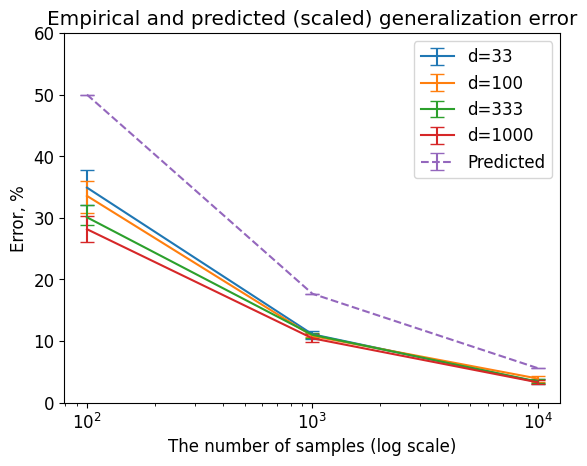}
        \caption{Fashion-MNIST dataset}
    \end{subfigure}
    \begin{subfigure}[t]{0.43\columnwidth}
        \includegraphics[width=\columnwidth]{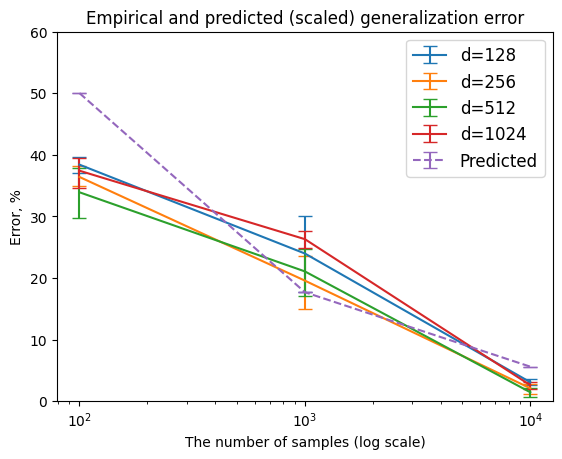}
        \caption{CIFAR-10 dataset}
    \end{subfigure}
    \caption{Training with triplet loss. For various embedding dimensions $d$, we show the empirical and the scaled predicted generalization errors.
    The scaling factor is chosen as $\sqrt{1/320}$~\citep{mohri_FoundationsMachine_2018}.
    In this scenario, we consider the well-separated case, and hence the predicted error is the same for all $d$.
    Data points are averaged over 10 runs, error bars show $10\%$ and $90\%$ quantiles.}
    \label{fig:ratio}
\end{figure}

In this section, we empirically verify our theoretical findings on real-world image datasets.
We compute image embeddings using the ResNet-18 network, with the last layer being replaced with a linear layer with the output dimension matching that of the target embedding dimension.
The neural network is trained from scratch for $100$ epochs using a set of $m \in \{10^2, 10^3, 10^4\}$ training samples, and is evaluated on a different test set of $10^4$ triplets from the same distribution.
We show that our theory provides a good estimation of the gap between the training and test error.

We consider the standard experimental setup used in the literature where the positive example is sampled from the same class as the anchor, and negative examples are sampled from a different class, see e.g.~\citet{SPAKK19,ADK22}.
Note that it means that the ground-truth distances are well-separated, and hence we are in the setting of Theorem~\ref{thm:approx-bounds} (see Section~\ref{sec:experiments} for experiments in the non-well-separated case).
Since the neural network doesn't perfectly fit the data, the setting is agnostic, and hence for $n$ data points and error parameter $\epsilon$, our theory predicts that $m \approx c \frac{n}{\epsilon^2}$ labeled samples are required, where we estimate $c = 320$ based on~\citet[Page 48]{mohri_FoundationsMachine_2018}.
Hence, given $m$ samples, our theory predicts that $\epsilon \approx \sqrt{\frac{n}{320 m}}$.
We compare this value to the empirical generalization error $\tilde \epsilon$, defined as the difference between training and test errors.

We consider two settings: when each sample has a single negative example and when it has multiple negative examples.

\noindent\textbf{Single negative example:} When every sample has only one negative, we train the model from scratch on CIFAR-10~\citep{Krizhevsky09learningmultiple} and Fashion-MNIST~\citep{xiao2017fashion} datasets using the marginal triplet loss~\citep{schroff2015facenet}
\[
    L_{MT} \LT xyz = \max(0, \|x - y^+\|^2 - \|x - z^-\|^2 + 1),
\]
where $x$ and $y^+$ are points from the same class and $z^-$ is a point from a different class.

We present our results in Figure~\ref{fig:ratio}.
For a small number of samples, there is a discrepancy between the predicted and the empirical generalization errors, which is due to the fact that the random guess trivially achieves $50\%$ accuracy, while our results hold only for the non-trivial error rates (see Definition~\ref{def:contrastive_realizable}).
First, the results for different embedding dimensions match closely, which shows that the error doesn't depend on the dimension, as predicted by our theory.
Note that the scaled predicted and the empirical results are within a constant factor, which is hard to attribute to coincidence given the choice of the scaling factor.
Hence, these experiments support the theory by showing that it provides a good predictor of practical generalization error.


\begin{figure}[t!]
    \centering
    \begin{subfigure}[t]{0.43\columnwidth}
        \centering
        \includegraphics[width=\columnwidth]{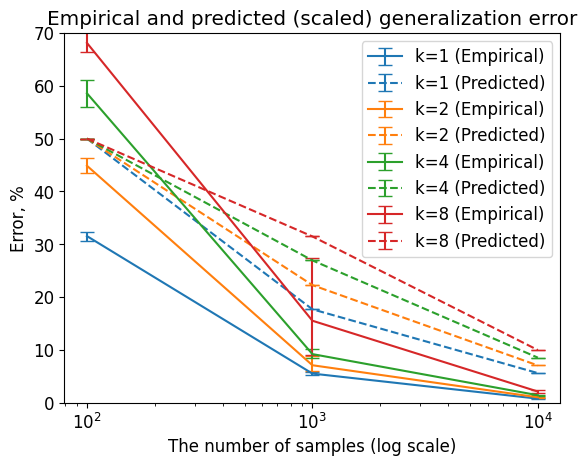}
        \caption{MNIST dataset}
    \end{subfigure}
    \begin{subfigure}[t]{0.43\columnwidth}
        \centering
        \includegraphics[width=\columnwidth]{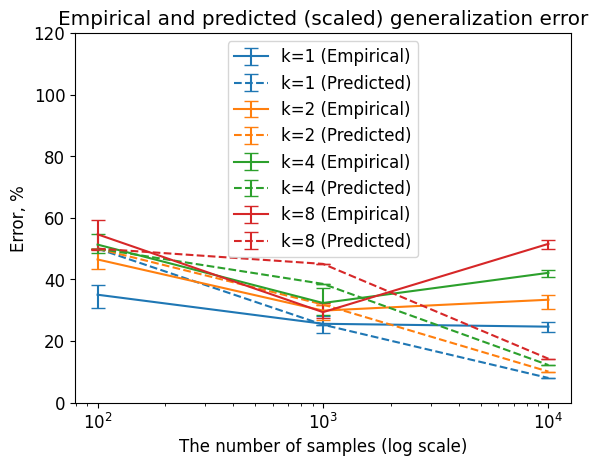}
        \caption{CIFAR-100 dataset}
    \end{subfigure}
    \caption{
        Training with the contrastive loss with $k \in \{1, 2, 4, 8\}$ negatives.
        We show the empirical and the scaled predicted generalization errors.
        The data points correspond to the average over 10 runs, and the error bars show $10\%$ and $90\%$ quantiles
    }
    \label{fig:k-negatives}
\end{figure}

\noindent\textbf{Multiple negative examples:} For the case of multiple negative samples, we train the model using the contrastive loss~\citep{logeswaran2018efficient}
\[
    L_C(x, y^+, z^-_1, \ldots, z^-_k) = -\log \frac{\exp (x^T y^+)}{\exp (x^T y^+) + \sum_{i=1}^k \exp (x^T z^-_i)}.
\]
We train the model from scratch on the MNIST~\citep{yann1998mnist} and CIFAR-100~\citep{Krizhevsky09learningmultiple} datasets, and report $\nicefrac{\tilde \epsilon}{\epsilon}$ for different numbers of negatives $k \in [1, 2, 4, 8]$.
Recall that for $k$ negatives we have $\epsilon \in \Omega(\frac{n}{\epsilon^2}) \cap O(\frac{n}{\epsilon^2} \log (k + 1))$, and for our experiments we use the upper bound. The results are shown in Figure~\ref{fig:k-negatives}.
As before, the values are within a constant factor for all values of $k$ and numbers of samples $m$.
Figure~\ref{fig:k-negatives}(b) shows that the $\log (k + 1)$ factor in the sample complexity is observable in practice (we use it when computing the predicted error and the curves for different $k$ match closely after this normalization). The lack of convergence in Figure~\ref{fig:k-negatives}(b) is most likely attributable to the fact that the sample size is too small. Similarly to other plots, we expect convergence for larger sample sizes, which were computationally prohibitive to include in this version.

\paragraph{Non-well-separated case}
Finally, we conduct experiments for the case when the positive example is not necessarily sampled from the same class as the anchor.
While the experiments above correspond to one of the most conventional contrastive learning approaches, the experiments in this section more closely match our theoretical settings.
In particular, we don't use any data augmentation for the training, and we use the same set of points (but not the same set of triplets) for both training and testing.
Since the VC-dimension depends on the number of points (Theorems~\ref{thm:lp-lb} and~\ref{thm:lp-ub}), to verify those results, we subsample $n \in \{10^2, 10^3, 10^4\}$ points.

\begin{figure}[t!]
    \centering
    \begin{subfigure}[t]{0.48\columnwidth}
        \includegraphics[width=\columnwidth]{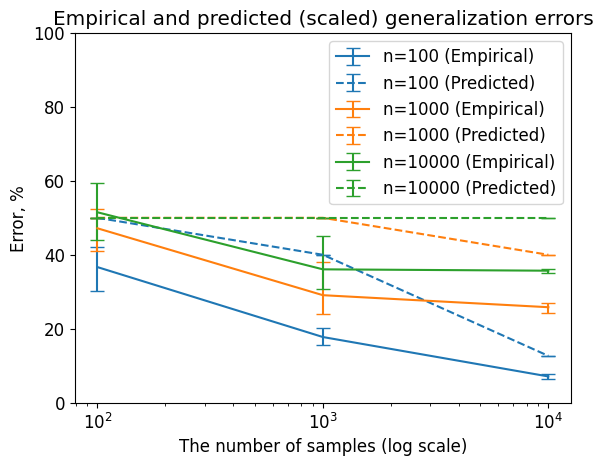}
        \caption{ImageNet dataset}
    \end{subfigure}
    \begin{subfigure}[t]{0.48\columnwidth}
        \includegraphics[width=\columnwidth]{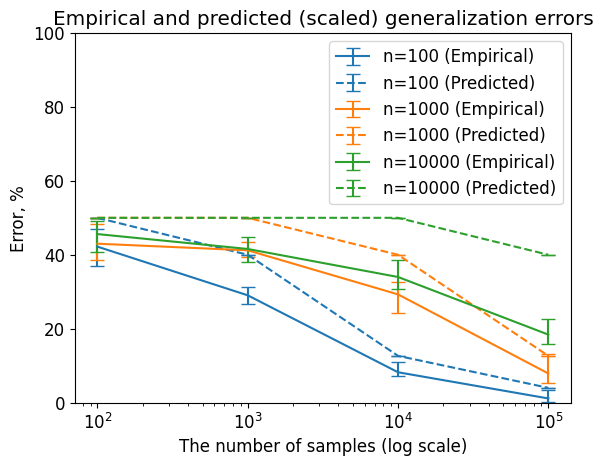}
        \caption{CIFAR-10 dataset}
    \end{subfigure}
    \caption{Training with the contrastive loss with one negative with embedding dimension $512$.
        For each $n \in \{10^2, 10^3, 10^4\}$, we subsample $n$ points from a dataset for $m \in \{10^2, 10^3, 10^4, 10^5\}$ input triplets.
        For various $n$, we show the empirical and the scaled predicted generalization errors.
        Note that the theoretical predicted error is bounded by $50\%$, which is a achieved by a random guess.
        Data points are averaged over 10 runs, error bars show $10\%$ and $90\%$ quantiles.}
    \label{fig:ratio_n}
\end{figure}

In contrast with the previous experiments, where the positive example is sampled from the same class as the anchor, in this section, all elements of a triplet are sampled uniformly from the chosen $n$ points.
We determine negative and positive examples based on the distance between ground-truth embeddings computed using a pretrained ResNet-18 network.

We compare the empirical generalization error $\tilde \epsilon$ to the theoretically predicted error rate $\epsilon$ (compared to the best classifier).
We perform experiments on the training set of CIFAR-10 and the validation set of ImageNet by training ResNet-18 from scratch on $m \in \{2, 10, 10^2, 10^3, 10^4, 10^5\}$ randomly sampled triplets, and evaluating the model on the $10^4$ triplets sampled from the same distribution\footnote{
    We express our thanks to the FFCV library~\citep{leclerc2022ffcv} which allowed us to significantly speed up the execution
}.
In Figure~\ref{fig:ratio_n}, the predicted and the empirical errors show the same tendencies and the same relative behavior between different choices of $n$.
As before, the empirical and the scaled predicted errors are within a factor of $2$ of each other.



\end{document}